\documentclass[a4paper,11pt,fullpage]{article}

\usepackage{natbib}

\usepackage[toc,page,header]{appendix}
\usepackage{minitoc}

\usepackage{enumitem}

\usepackage{format}

\usepackage[utf8]{inputenc} 
\usepackage[T1]{fontenc}    
\usepackage{url}            
\usepackage{booktabs}       
\usepackage{amsfonts}       
\usepackage{nicefrac}       
\usepackage{microtype}      
\usepackage{xcolor}         
\usepackage{enumitem}       

\usepackage{amsmath}
\usepackage{amsthm}
\usepackage{amsfonts}
\usepackage{amssymb}
\usepackage{amsbsy}
\usepackage{mathtools}
\usepackage{graphicx}
\usepackage{color}
\usepackage{mdframed}
\usepackage{url}

\usepackage{algorithm,algorithmic}

\makeatletter
\def\blfootnote{\gdef\@thefnmark{}\@footnotetext}
\makeatother

\usepackage[toc,page,header]{appendix}

\usepackage{enumitem}

\usepackage{tikz}
\usetikzlibrary{arrows.meta,positioning,calc}

\usepackage{algorithm,algorithmic}
\usepackage{amsmath,amsfonts,amssymb}
\usepackage{mathtools}
\usepackage{amsthm} 
\usepackage{latexsym}
\usepackage{relsize}
\usepackage{enumitem}

\usepackage{wrapfig}
\usepackage{float}

\usepackage[capitalise]{cleveref}
\usepackage{xcolor}
\usepackage{dsfont}

\usepackage{tcolorbox}

\def\X{{\mathcal X}}

\def\Y{{\mathcal Y}}

\newcommand{\A}{\mathcal{A}}

\newcommand{\K}{\ensuremath{\mathcal K}}

\newcommand{\ignore}[1]{}

\theoremstyle{plain}
\newtheorem{theorem}{Theorem}
\newtheorem{lemma}{Lemma}
\newtheorem{corollary}[theorem]{Corollary}
\newtheorem{proposition}{Proposition}

\newtheorem{assumption}{Assumption}

\newtheorem*{theorem*}{Theorem}
\newtheorem*{lemma*}{Lemma}
\newtheorem*{corollary*}{Corollary}
\newtheorem*{proposition*}{Proposition}
\newtheorem*{claim*}{Claim}
\newtheorem*{fact*}{Fact}
\newtheorem*{observation*}{Observation}
\newtheorem*{assumption*}{Assumption}

\theoremstyle{definition}
\newtheorem{remark}{Remark}
\newtheorem{example}{Example}
\newtheorem*{definition*}{Definition}
\newtheorem*{remark*}{Remark}
\newtheorem*{example*}{Example}

 \theoremstyle{plain}
\newtheorem*{theoremaux}{\theoremauxref}
\gdef\theoremauxref{1}

\DeclareMathAlphabet{\mathbfsf}{\encodingdefault}{\sfdefault}{bx}{n}

\DeclareMathOperator*{\argmin}{arg\,min}

\newcommand{\reals}{\mathbb{R}}

\newcommand{\eqdef}{:=}

\renewcommand{\leq}{~\le~}
\renewcommand{\geq}{~\ge~}

\let\oldtfrac\tfrac
\renewcommand{\tfrac}[2]{\smash{\oldtfrac{#1}{#2}}}

\let\nablaold\nabla
\renewcommand{\nabla}{\nablaold\mkern-2.5mu}

\newcommand{\declarecolor}[2]{\definecolor{#1}{RGB}{#2}\expandafter\newcommand\csname #1\endcsname[1]{\textcolor{#1}{##1}}}
\declarecolor{White}{255, 255, 255}
\declarecolor{Black}{0, 0, 0}
\declarecolor{Maroon}{128, 0, 0}
\declarecolor{Coral}{255, 127, 80}
\declarecolor{Red}{182, 21, 21}
\declarecolor{LimeGreen}{75, 205, 75}
\declarecolor{DarkGreen}{0, 80, 0}
\declarecolor{Purple}{146, 42, 158}
\declarecolor{Navy}{0, 0, 128}
\declarecolor{LightBlue}{84, 101, 202}
\definecolor{mydarkblue}{rgb}{0,0.08,0.45}
\usepackage{tcolorbox}

\usepackage{xcolor}	

\usepackage{makecell}

\definecolor{CardinalRed}{HTML}{C41E3A}
\definecolor{Dartmouth}{HTML}{00693E}
\definecolor{SapphireBlue}{HTML}{0F52BA}

\colorlet{MyRed}{CardinalRed}
\colorlet{MyGreen}{LimeGreen}

\colorlet{MyLightRed}{MyRed!25}
\colorlet{MyLightGreen}{MyGreen!25}

\colorlet{AlertColor}{MyRed}	
\colorlet{BadColor}{MyRed}	
\colorlet{FocusColor}{MyRed}	
\colorlet{GoodColor}{MyGreen}	
\colorlet{MacroColor}{MyRed}	%
\colorlet{MyGreen}{LimeGreen}

\title{
\Large{Online Learning on Hidden-Convex Losses via Algorithmic Equivalence: Optimal Regret, Geometric Barrier, and Bandit Feedback}
}

\author{
  \begin{tabular}{c}
    Anas Barakat$^{1}$
  \end{tabular}
  \hfil
  \hspace*{1em}
  \begin{tabular}{c}
    Andreas Kontogiannis$^{2,5}$
  \end{tabular}
  \hfil
  \hspace*{1em}
  \begin{tabular}{c}
    Vasilis Pollatos$^{3,5}$
  \end{tabular}
  \vspace{0.2em} \\ 
  \begin{tabular}{c}
    Ioannis Panageas$^{4}$
  \end{tabular}
  \hspace*{1em}
  \begin{tabular}{c}
    Antonios Varvitsiotis$^{1,5,6}$
  \end{tabular}
}

\begin{document}
\maketitle

\blfootnote{
    Affiliations:\;\;    
    $^{1}$ Singapore University of Technology and Design \;\;
    $^{2}$ National Technical University of Athens\;\;
    $^{3}$ National and Kapodistrian University of Athens \;\;
    $^{4}$ University of California, Irvine\;\;
    $^{5}$ Archimedes, Athena Research Center, Greece\;\;
    $^{6}$ National University of Singapore, Centre for Quantum Technologies\;\;
  \vspace*{0.1em}
}
\blfootnote{
Contact: \qquad
\texttt{barakat9anas@gmail.com,
    andr.kontog@gmail.com, 
    vaspoll97@gmail.com, \\
    \hspace*{8em}
    ipanagea@ics.uci.edu,antonios@sutd.edu.sg. 
}}

\begin{abstract}
We study adversarial online learning with hidden-convex losses, i.e., nonconvex losses that become convex after a nonlinear reparameterization. 
\citet{ghai-lu-hazan22} proved that, under geometric and smoothness assumptions, online gradient descent (OGD) on such nonconvex losses approximately simulates online mirror descent (OMD) on the underlying convex losses with a suitable regularizer, yielding $\mathcal{O}(T^{2/3})$ regret. 
They left open whether the optimal $\Theta(\sqrt{T})$ regret from online convex optimization can be recovered in this hidden-convex setting. We answer this question affirmatively. More specifically, via a sharper discrete-time algorithmic equivalence argument, we prove that OGD achieves $\mathcal{O}(\sqrt{T})$ regret under the same assumptions, matching the optimal worst-case rate for adversarial online convex optimization. 
We also address another open question of \citep{ghai-lu-hazan22} by clarifying the geometry required for this algorithmic equivalence. 
We replace the diagonal-Jacobian sufficient condition with a necessary-and-sufficient Hessian compatibility condition, thereby expanding the class of admissible reparameterizations. 
We complement our tight regret bound with a lower bound showing that the Hessian compatibility assumption is essential for OGD; when it fails, we construct a smooth reparameterization and an adversarial sequence of hidden-convex losses for which OGD suffers $\Omega(T)$ regret. Finally, we extend our analysis to one-point bandit feedback and prove a $\mathcal{O}(T^{3/4})$ expected regret bound for bandit OGD with spherical smoothing, matching its classical rate on convex losses.
\end{abstract}

\section{Introduction}

Online convex optimization (OCO) provides a theoretical framework for adversarial sequential decision-making: for convex Lipschitz losses, simple first-order methods such as online gradient descent (OGD) achieve the optimal $\Theta(\sqrt{T})$ regret rate \citep{zinkevich03,hazan16oco,shalev-shwartz12oco-ftml}.
Beyond convexity, the situation is substantially more delicate. 
For arbitrary nonconvex losses, sublinear regret against the best fixed decision in hindsight is generally unattainable by simple first-order methods without additional structure or oracle access \citep{krichene-et-al15hedge-continuum,agarwal-gonen-hazan19,suggala-netrapalli20,heliou-et-al20}. For instance, it is known that no deterministic algorithm can achieve sublinear regret in online nonconvex learning in general \citep[Proposition~3]{suggala-netrapalli20}. 
This motivates the search for structured classes of nonconvex online problems that retain enough geometry to permit convex-like guarantees with simple first-order algorithms.

A prominent form of benign nonconvexity is \textit{hidden convexity} \citep{bental-teboulle96hidden-cvx,li-et-al05hidden-cvx-min,fatkhullin-he-hu25stoch-hidden-cv,levin-kileel-boumal25,gorissen-et-al26hidden-cvx}. 
This structural property appears in several modern optimization problems, including neural-network training~\citep{wang-et-al22iclr,patelsolving,ergen-pilanci25,zeger-pilanci26}, reinforcement learning~\citep{hazan-et-al19,zhang-et-al20variational,zahavy-et-al21,barakat-et-al23rlgu,barakat-et-al25rlgu,dorazio2025solving}, and nonconvex games~\citep{vlatakis2019poincare,mladenovic-et-al22hidden,sakos2023exploiting,kalogiannis2024learning,gemp-et-al25cmg,kalogiannis-et-al25zs-cmg,barakat-et-al26gumg}. In these settings, the objective is nonconvex in the algorithm’s native parameters but convex after an appropriate nonlinear change of variables. 
This structure has been exploited in deterministic and stochastic optimization to obtain global convergence guarantees for nonconvex problems \citep{chen-et-al25,fatkhullin-he-hu25stoch-hidden-cv,fatkhullin-et-al25constrained-hidden-cv,bhaskara-et-al25}. 
In the online setting, however, the learner faces a distinct challenge: the loss functions which are revealed sequentially over time may even be chosen adversarially and the goal becomes to achieve low regret against the best fixed comparator in hindsight rather than convergence to the minimizer of a fixed function.\\

\citet{ghai-lu-hazan22} initiated the study of such a structured nonconvex online learning setting through algorithmic equivalence. They showed that, under suitable geometric and smoothness assumptions on the reparameterization, OGD applied to the nonconvex losses approximately simulates OMD applied to the corresponding convex losses in the reparameterized space. 
This connection led to a $\mathcal{O}(T^{2/3})$ regret bound. Their result left open whether the slower-than-convex rate is inherent to the nonconvex online learning setting. Since the identity reparameterization recovers ordinary OCO, the best possible regret one could hope for in this class with exact gradient feedback is $\Theta(\sqrt{T})$. 
This raises the central question: 
\begin{quote}
\center
\textit{
When does OGD recover the optimal regret under exact gradient feedback for nonconvex hidden-convex losses, and does the same algorithmic equivalence principle extend to the bandit feedback setting?
}
\end{quote}
\medskip 
In this paper, we answer the exact gradient feedback question affirmatively and show that the framework extends to one-point bandit feedback. 
Specifically, we prove a $\mathcal{O}(\sqrt{T})$ regret for OGD, discuss the required compatibility conditions needed for the algorithmic equivalence, show a linear-regret lower bound without it, and sublinear expected regret guarantees in the bandit setting. We elaborate on our contributions in more details in the next section. 

\subsection{Main contributions}

We study online nonconvex learning for losses that are convex under a nonlinear reparameterization which is \textit{unknown} to the learner.  
Our results address several open questions formulated by  \citet{ghai-lu-hazan22}, refining and extending their algorithmic equivalence analysis between OGD and OMD. Our main contributions are as follows:

\begin{itemize}[leftmargin=*]
    \item \textbf{Optimal regret under exact gradient feedback.}
    In the exact gradient setting, we prove that OGD achieves $\mathcal{O}(\sqrt{T})$ regret for hidden-convex losses (Theorem~\ref{thm:main}) under the geometric compatibility and smoothness assumptions considered in \citet{ghai-lu-hazan22}. 
    This improves their $\mathcal{O}(T^{2/3})$ bound and matches the optimal regret for adversarial online \textit{convex} optimization. 
    Since classical OCO is recovered by the identity reparameterization, the rate is optimal for this problem class in general. Our analysis is based on a sharper algorithmic equivalence between OGD and online mirror descent (OMD), improving the perturbation analysis underlying the previous $\mathcal{O}(T^{2/3})$ guarantee. Specifically, our proof shows that OGD on the nonconvex losses tracks OMD in the convex parameterization with sufficiently small discretization error to preserve the classical $\sqrt{T}$ regret. 

    \item \textbf{Geometry of the reparameterization.} 
    Prior work \citep{ghai-lu-hazan22} imposed a diagonal-Jacobian condition to guarantee the Hessian compatibility needed for the OGD–OMD equivalence. We relax this structural assumption and give necessary and sufficient conditions for compatibility (Proposition~\ref{prop:matrix-field-eq-Hessian}), thereby enlarging the class of reparameterizations covered by the analysis.

    \item \textbf{Geometric barrier for OGD.}
    We show that the Hessian compatibility condition is not merely a proof artifact. 
    We prove that there exists a smooth reparameterization violating compatibility and an adversarial sequence of hidden-convex losses for which OGD incurs $\Omega(T)$ regret (Theorem~\ref{thm:linear-reg-lower-bound}).

    \item \textbf{Bandit feedback.}
    We extend our analysis to the one-point bandit setting, where the learner only observes a single loss value per round. 
    Against oblivious adversaries, we prove an expected regret bound of 
    $\mathcal{O}(T^{3/4})$ under standard additional Lipschitzness and boundedness assumptions (Theorem~\ref{thm:bandit}). 
    Notably, our result matches the best known regret rate of OGD for convex losses in the bandit setting \citep{flaxman-kalai-mcmahan05}.
\end{itemize}
Together, our results establish new regret guarantees for OGD for a structured class of \textit{online nonconvex} optimization problems, extending classical results from online \textit{convex} optimization. 

\subsection{Related Work}

\noindent\textbf{Online nonconvex optimization.}
Online convex optimization is a standard framework for adversarial sequential decision-making; see, e.g., \citet{hazan16oco,shalev-shwartz12oco-ftml,orabona19book-online-learning}. 
A separate line of work studies online learning with nonconvex losses \citep{krichene-et-al15hedge-continuum,agarwal-gonen-hazan19,suggala-netrapalli20,heliou-et-al20}. 
Sublinear global regret in such settings typically requires access to strong oracles, such as sampling oracles that allow continuous-domain exponential weights \citep{maillard-munos10,krichene-et-al15hedge-continuum}, or offline optimization oracles for the nonconvex losses \citep{agarwal-gonen-hazan19,suggala-netrapalli20}. 
Our work takes a different route: rather than assuming oracle access for general nonconvex losses, we exploit hidden convexity of the loss sequence and analyze first-order algorithms through the algorithmic equivalence between OGD in the nonconvex parameterization and OMD in the convex parameterization \citep{amid2020reparameterizing,ghai-lu-hazan22,li-et-al22implicit-bias}. A few works in the literature also consider structured nonconvex losses with special structure such as the composition of a non-increasing function with a linear function \citep{zhang-et-al15}, or weaker notions of convexity such as strictly locally quasi-convexity \citep{hazan-et-al15beyond-cvx} and weak pseudo-convexity \citep{gao-li-zhang18}.\\

\noindent\textbf{Hidden-convex optimization.}
Hidden convexity has recently been studied in several offline deterministic and stochastic optimization settings \citep{chen-et-al25,fatkhullin-he-hu25stoch-hidden-cv,fatkhullin-et-al25constrained-hidden-cv,levin-kileel-boumal25}. 
These works consider nonconvex problems that admit a convex reformulation under a nonlinear transformation, often under assumptions weaker than those used in the online algorithmic-equivalence framework. 
Their focus, however, is offline optimization: the objective is fixed, or arises as an expectation over a fixed data-generating distribution. 
By contrast, we study an adversarial online setting in which the loss functions may change arbitrarily over time, and the performance criterion is regret rather than convergence to a global optimizer. 
This distinction requires a different analysis, since the learner must compete with the best fixed decision in hindsight while only observing feedback sequentially.\\

\noindent\textbf{Online learning under bandit feedback.}
The literature on online bandit learning is extensive; we refer to \citet{lattimore-szepesvari20,lattimore24banditCO} for modern treatments. 
For bandit online convex optimization, \citet{flaxman-kalai-mcmahan05} introduced a one-point gradient estimator based on spherical smoothing, showing that gradient-based online methods can be implemented when only function values are observed. 
The bandit setting has also been studied for general Lipschitz nonconvex losses and for special nonconvex classes \citep{zhang-et-al15,gao-li-zhang18}. 
Our bandit results are different in focus: we consider hidden-convex losses and use smoothing together with the OGD--OMD equivalence to transfer regret guarantees from the convex parameterization to the original nonconvex one. 
Thus, while our algorithmic template is inspired by bandit OCO, the main challenge is controlling the interaction between bandit gradient estimation, nonlinear reparameterization, and the approximation error in algorithmic equivalence.

\section{Online Learning on Hidden-Convex Losses} 
\label{sec:OHCO}

In this section, we introduce the online \textit{hidden}-convex optimization (OHCO) problem which extends the
celebrated online convex optimization (OCO) setting \citep{shalev-shwartz12oco-ftml,hazan-et-al15beyond-cvx,orabona19book-online-learning}.\\ 

\noindent\textbf{Hidden-convex losses.} Let $\X, \Y \subset \mathbb{R}^d$ be convex compact sets. Consider a sequence of loss functions $(\ell_t)$ where for any $t \geq 1,$ $\ell_t: \X \to \mathbb{R}$ 
has the following hidden-convexity structure: 
\begin{equation}
    \label{eq:hidden-convex}
\ell_t(x) = h_t(q(x))\,, \quad \forall x \in \X\,, 
\end{equation}
where $q: \X \to \Y$ is a smooth bijective reparameterization function and $h_t: \Y \to \mathbb{R}$ is a convex function. 
In this work, we will focus on a subclass of hidden-convex functions satisfying some geometric and smoothness assumptions made in \citet{ghai-lu-hazan22}. 
We defer the statement of these assumptions on~$q$ and~$(\ell_t)$ to Sections~\ref{sec:full-info}-\ref{sec:bandit-feedback}. 
While $\ell_t$ is nonconvex in general, it is convex under the  reparameterization~$q$ and we say then that the function $\ell_t$ is \textit{hidden-convex}.  
When the reparameterization $q$ is simply the identity, the function $\ell_t$ is a (truly) convex function.\\ 

\noindent\textbf{Online regret minimization.} We are interested in the online optimization setting where a learner seeks to minimize an unknown sequence of hidden-convex functions $(\ell_t)$ by choosing a sequence
$(x_t)$ where each $x_t$ is based only on the information observed before round $t$. 
At each round $t$, a learner chooses $x_t\in \X$ using an online algorithm $\A$, then an adversary reveals a loss function $\ell_t$ which is hidden-convex (see \eqref{eq:hidden-convex}) 
and the player suffers the loss $\ell_t(x_t)$. 
The goal of the learner is to achieve a good performance with respect to the best possible fixed decision in hindsight,  minimizing the (external) regret of the algorithm $\A$ defined for any time horizon $T \geq 1$ as follows:
$$
R_T :=\sum_{t=1}^T \ell_t(x_t)-\min_{x\in \X} \sum_{t=1}^T \ell_t(x)\,.
$$
Since $q:\X\to\Y$ is bijective, the regret can equivalently be written in hidden coordinates as 
$R_T = \sum_{t=1}^T h_t(z_t) - \min_{z\in\Y}\sum_{t=1}^T h_t(z)$ where $z_t = q(x_t)\,.$ 
Unlike in the classical OCO setting, the sequence $(\ell_t)$ is not supposed to be convex. 
We recover the standard OCO setting by choosing $q$ to be the identity function in \eqref{eq:hidden-convex}.\\ 

\noindent\textbf{Information settings.} 
The reparameterization function $q$ is unknown to the learner who cannot run an online algorithm on the sequence of losses~$(h_t)$ and recover a decision in the original space~$\X$ by inverting~$q$. Preconditioning by the Jacobian of $q$ (which is unknown) is not possible either. 
We consider two different settings depending on the feedback available to the learner: 
\begin{enumerate}[label=(\roman*),leftmargin=*]
\item \textit{Exact Gradient Feedback:} In this setting, we suppose that the loss function $\ell_t$ is differentiable and the learner has access to the gradient $\nabla \ell_t(x_t)$ of the loss function $\ell_t$ at each time step of the online interaction. 
\item \textit{Bandit Feedback:} In this setting, the learner has only access to a single function value call to the loss function $\ell_t$ at each round.  
\end{enumerate}

\noindent\textbf{Online Gradient Descent.} In this work, we will focus on the most famous and classical algorithm to address this problem: Online Gradient Descent (OGD) \citep{zinkevich03}. 
In the exact gradient feedback setting, the learner performs the following update for all time steps $t$:
\begin{equation}
x_{t+1} = \Pi_{\X}(x_t - \eta \nabla \ell_t(x_t))\,,
\tag{OGD}
\label{eq:ogd-main}
\end{equation}
where $\eta$ is a positive step size and $\Pi_{\X}$ is the Euclidean projection over the convex compact set~$\X$.  
In the bandit setting, the learner will use the one-point estimate of the gradient resulting in the bandit gradient descent algorithm 
proposed by \citet{flaxman-kalai-mcmahan05}. We will discuss this case in more details in Section~\ref{sec:bandit-feedback}. 

We conclude this section by introducing some general notation that will be used throughout the paper.\\ 

\noindent\textbf{Additional notation.} 
For a nonzero integer $d$, we denote by $[d] :=\{1, \dots, d\}\,.$
For any function $f: \reals^d \rightarrow \reals^d$, we denote by $J_f(x) \in \reals^{d \times d}$ the Jacobian of~$f$ at $x$. 
Given a strictly convex and differentiable function $R:\reals^d \rightarrow \reals$, the Bregman divergence induced by $R$ is defined as
$D_R(x\| y) \eqdef R(x) - R(y) - \nabla R(y)^{\top} (x-y)~.$  
For a closed convex set $\K$ and a strictly convex regularizer $R$, we use $\Pi^R_{\K}(x) \eqdef \argmin_{y \in \K} D_R(y\| x)$ to denote the Bregman projection, 
and we use the shorthand notation $\Pi_{\K} \eqdef \Pi^{\|\cdot \|^2}_{\K}$ for the Euclidean projection.  
Given a positive-definite matrix $M \in \reals^{d \times d}$, we define the norm $\|x\|_M \eqdef \sqrt{x^{\top}Mx}$ and we denote by $M^{-\top}$ the inverse transpose matrix.   
We use the notation $B_p \eqdef \{x \in \reals^d: \|x\|_p \le 1\}$ for an $\ell_p$ ball and 
$B^{+}_p$ to denote the intersection of the $\ell_p$ ball and the positive orthant.

\section{Regret Analysis of OGD for OHCO under Exact Gradient Feedback} 
\label{sec:full-info}

In this section, we establish a $\mathcal{O}(\sqrt{T})$ regret bound for OGD in the OHCO setting described in Section~\ref{sec:OHCO} 
under exact gradient feedback, i.e. when the learner has access to gradients of the loss functions $\ell_t$. 
We start by introducing our main assumptions in Section~\ref{subsec:full-info-as} before stating our first main result in Section~\ref{subsec:full-info-reg-bound} and providing a proof sketch in Section~\ref{subsec:full-info-proof-sketch}.  

\subsection{Assumptions}
\label{subsec:full-info-as}

We consider the same set of assumptions on the reparameterization and the loss functions as in \citet{ghai-lu-hazan22}.  

\begin{assumption}[Hessian-compatible reparameterization]
\label{as:reparam}
The map $q:\X\to\Y$ is a smooth bijection onto the convex set $\Y$ and there exists a twice continuously differentiable, strictly convex regularizer 
$R:\Y\to\mathbb R$ such that, for all $x\in\X$, with $z=q(x)$, 
$\big[\nabla^2 R(z)\big]^{-1} = J_q(x)J_q(x)^\top\,.$
\end{assumption}

Assumption~\ref{as:reparam} is a crucial assumption which restricts the class of parameterization functions $q$. 
Examples of such parameterizations include the quadratic reparameterization with the entropy regularizer,  
the exponential reparameterization with the log-barrier regularizer and the power reparameterization with 
tempered Bregman divergences \citep{amid-et-al19robust-bitempered}. 
We refer the reader to Section~3.1 in~\citet{ghai-lu-hazan22} for a more detailed discussion on these examples. 
We will elaborate more on Assumption~\ref{as:reparam} and its usefulness in Section~\ref{subsec:full-info-proof-sketch} when discussing the proof of our main results in the exact gradient feedback setting, and in Section~\ref{sec:discussion-hessian-comp-as} where we will discuss both verifiable necessary and sufficient conditions under which it is satisfied, and its necessity. 

\begin{assumption}[Regularizer and reparameterization smoothness] 
\label{as:reg-R-reparam-q-smoothness}
There exists \(G>1\) such that:
\begin{enumerate}[label=(\roman*),leftmargin=*]
    \item The map \(q\) is \(G\)-Lipschitz on \(\X\), and the 1st and 2nd derivatives of \(q^{-1}\) are bounded by \(G\). 
    \item The regularizer $R$ is $1$-strongly convex and smooth with 1st and 3rd derivatives bounded~by~$G$.
    \item For every \(z\in\Y\), the map \(w\mapsto D_R(z\|w)\) is \(G\)-Lipschitz. 
\end{enumerate}
\end{assumption}

Our last assumption asks that the loss functions $\ell_t$ have uniformly bounded gradients and that the diameter of $\X$ is bounded. 
These assumptions are also standard in OCO. 

\begin{assumption}[Boundedness of gradients and diameter]
\label{as:boundedness-grad-bregman}
There exists $G_F > 0$ s.t. $\|\nabla \ell_t(x)\|_2 \le G_F$ for all $x\in \X\,.$
Moreover, there exists~$D_1 > 0$ such that $\sup_{z,z' \in \Y} D_R(z\| z') \le D_1$\,.
\end{assumption}

\subsection{Regret Bound}
\label{subsec:full-info-reg-bound}

We are now ready to state the main result of this section.

\begin{tcolorbox}[colframe=white!, top=2pt,left=2pt,right=2pt,bottom=2pt] 
\begin{theorem}[OGD regret under full-information feedback]
\label{thm:main}
Let Assumptions~\ref{as:reparam}-\ref{as:boundedness-grad-bregman} hold and let $T \geq 1$. Setting $\eta = \sqrt{\frac{D_1}{7 G^6 G_F^3 T}}$, 
the regret of \eqref{eq:ogd-main} run on the sequence of hidden-convex losses $(\ell_t)$ defined in~\eqref{eq:hidden-convex} using stepsize $\eta$ is bounded as follows: 
$R_T \leq \sqrt{7 D_1 G^6 G_F^3 T}$.
\end{theorem}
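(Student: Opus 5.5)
We work in hidden coordinates $z_t = q(x_t)$ and regard the OGD iterates as an approximate online mirror descent with regularizer $R$ on the convex losses $h_t$. We then run the standard OMD regret analysis and carry an additional error term, which must be of order $\eta^2$ per round rather than the order $\eta^{3/2}$ (or worse) that yields $T^{2/3}$.

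\textbf{Step 1 (one-step equivalence).} By the chain rule, $\nabla \ell_t(x_t) = J_q(x_t)^\top \nabla h_t(z_t)$. Taylor-expand $q$ around $x_t$ (ignoring projection for the moment). This gives
\[
z_{t+1} = z_t - \eta J_q(x_t)J_q(x_t)^\top \nabla h_t(z_t) + e_t = z_t - \eta [\nabla^2 R(z_t)]^{-1}\nabla h_t(z_t) + e_t,
\]
using Assumption~\ref{as:reparam}, with $\|e_t\|\le C G^{a}\eta^2 G_F^2$ by the second-derivative bounds in Assumption~\ref{as:reg-R-reparam-q-smoothness}. In dual coordinates, expand $\nabla R$ at $z_t$. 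This gives $\nabla R(z_{t+1}) = \nabla R(z_t) - \eta\nabla h_t(z_t) + \epsilon_t$, where $\|\epsilon_t\| = \mathcal O(\eta^2)$; the third-derivative bound on $R$ controls the quadratic remainder. Thus OGD is OMD with an $\mathcal O(\eta^2)$ dual perturbation. The boundary requires separate treatment: I would argue that the Euclidean projection in $\X$ corresponds, up to $\mathcal O(\eta^2)$, to a Bregman projection step in $\Y$. Alternatively, I would handle it through the first-order optimality condition of the projection, which maps via $J_q$ to a normal-cone term that has the correct sign in the three-point inequality.

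\textbf{Step 2 (perturbed OMD regret).} For any comparator $z^\star$, convexity gives $h_t(z_t)-h_t(z^\star)\le \nabla h_t(z_t)^\top(z_t-z^\star)$. Substituting $\eta\nabla h_t(z_t) = \nabla R(z_t)-\nabla R(z_{t+1})+\epsilon_t$ and applying the three-point identity yields
\[
\eta\nabla h_t(z_t)^\top(z_t - z^\star) = D_R(z^\star\|z_t) - D_R(z^\star\|z_{t+1}) + D_R(z_t\|z_{t+1}) + \epsilon_t^\top(z_t-z^\star).
\]
The key sharpening is in the term $\epsilon_t^\top(z_t-z^\star)$. Bounding it crudely by $\mathcal O(\eta^2)\cdot\mathrm{diam}$ gives $\mathcal O(\eta^2)$ per round. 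After dividing by $\eta$, the error per round is $\mathcal O(\eta)$. The term $D_R(z_t\|z_{t+1})\le \tfrac{G}{2}\|z_{t+1}-z_t\|^2 = \mathcal O(\eta^2)$ also contributes $\mathcal O(\eta)$ per round. Telescoping then gives
\[
R_T \le \frac{D_1}{\eta} + c\,G^6G_F^3\,\eta T,
\]
where the diameter is absorbed by $D_1$ and strong convexity, and the Lipschitz property of $D_R$ (Assumption~\ref{as:reg-R-reparam-q-smoothness}(iii)) is used where the projection error moves the iterate. With $c=7$ and $\eta=\sqrt{D_1/(7G^6G_F^3T)}$, this balances to $2\sqrt{7D_1G^6G_F^3T}$, and collecting constants more carefully recovers the stated form. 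The constant bookkeeping (exponents $G^6$, $G_F^3$) consists of tracking the Lipschitz constants of $q$, $q^{-1}$ and $\nabla R$ through each expansion.

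\textbf{Main obstacle.} I expect the projection step to be the hard part. Away from the boundary the error is cleanly $\mathcal O(\eta^2)$. At the boundary, the Euclidean projection in $x$ does not map exactly to a Bregman projection in $z$. The prior $T^{2/3}$ analysis presumably lost a factor there, by comparing the trajectory to an exact OMD trajectory and accumulating drift. My plan avoids comparing trajectories altogether. The perturbation is charged locally at each step to the mirror-descent potential $D_R(z^\star\|\cdot)$, and Assumption~\ref{as:reg-R-reparam-q-smoothness}(iii) converts displacement errors of size $\mathcal O(\eta^2)$ into potential errors of the same size. For the projection, I would try to show that $\nabla R(z_{t+1}) - \nabla R(z_t) + \eta\nabla h_t(z_t) - \epsilon_t$ lies in $-N_{\Y}(z_{t+1})$ up to $\mathcal O(\eta^2)$. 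This would follow from $J_q(x_{t+1})^{-\top}$ mapping the normal cone of $\X$ at $x_{t+1}$ to that of $\Y$ at $z_{t+1}$, combined with Assumption~\ref{as:reparam} evaluated at $x_{t+1}$ versus $x_t$.
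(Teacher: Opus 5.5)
Your route is genuinely different from the paper's, and for the $\mathcal{O}(\sqrt{T})$ rate it works.

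The paper never linearizes the mirror map. It localizes both $y_{t+1}$ and $q(x_{t+1})$ in an $\mathcal{O}(\eta)$-ball and writes each as the minimizer of the same $\frac{1}{\eta}$-strongly convex quadratic model plus a different Taylor remainder. Comparing first-order conditions then gives $\|q(x_{t+1})-y_{t+1}\|=\mathcal{O}(\eta^2)$ (Lemma~\ref{prop:dist-minimizers-eta2-main}). This primal displacement is fed into a perturbed-OMD bound (Lemma~\ref{lem2}), which pays for it through the Lipschitz property of $D_R(z^\star\|\cdot)$.

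You instead derive the dual identity $\nabla R(z_{t+1})=\nabla R(z_t)-\eta\nabla h_t(z_t)-n_z+\epsilon_t$ directly. You then run the three-point argument on the actual iterates, with no ghost sequence and no localization.

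The projection step you flag as the main obstacle does go through, but you should record the three facts that make it work:
\begin{itemize}
\item Since $\mathcal{X},\mathcal{Y}$ are convex and tangent cones transform by $J_q$, one has $N_{\mathcal{Y}}(q(x))=J_q(x)^{-\top}N_{\mathcal{X}}(x)$.
\item The OGD normal vector $n_x:=x_t-\eta g_t-x_{t+1}\in N_{\mathcal{X}}(x_{t+1})$ satisfies $\|n_x\|\le\eta G_F$. So replacing $\nabla^2R(z_t)J_q(x_t)n_x=J_q(x_t)^{-\top}n_x$ by $n_z:=J_q(x_{t+1})^{-\top}n_x$ costs at most $G^2G_F^2\eta^2$.
\item $n_z$ has the favorable sign against $z_{t+1}-z^\star$, not against $z_t-z^\star$. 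The switch costs $\|n_z\|\,\|z_{t+1}-z_t\|=\mathcal{O}(\eta^2)$.
\end{itemize}
The second derivative of $q$ that you use is not assumed, but it is bounded by $G^4$ via the inverse-function formula.

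What your argument does not deliver is the stated constant. Your claim that careful bookkeeping recovers it is wrong on two counts.

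First, bounding $\epsilon_t^\top(z_t-z^\star)$ by $\|\epsilon_t\|\sqrt{2D_1}$ puts $\sqrt{D_1}$ in front of the $\eta T$ term. You get $D_1/\eta+c(G,G_F)(1+\sqrt{D_1})\,\eta T$, not $D_1/\eta+7G^6G_F^3\eta T$. To avoid the diameter, compare with the exact OMD point as the paper does; your identity makes this immediate. Subtract the OMD optimality condition $\nabla R(y_{t+1})=\nabla R(z_t)-\eta\nabla h_t(z_t)-n_y$, with $n_y\in N_{\mathcal{Y}}(y_{t+1})$. Using $1$-strong monotonicity of $\nabla R$ and monotonicity of the normal cone gives $\|z_{t+1}-y_{t+1}\|\le\|\epsilon_t\|$. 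That is a two-line proof of the paper's coupling lemma, after which Assumption~\ref{as:reg-R-reparam-q-smoothness}(iii) replaces the diameter by $G$.

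Second, the factor $2$ you found is genuine. At the stated $\eta$, $D_1/\eta+7G^6G_F^3\eta T$ equals $2\sqrt{7D_1G^6G_F^3T}$, which is also its minimum over $\eta$. The paper's last line drops this factor too. What its argument actually proves is $R_T\le 2\sqrt{7D_1G^6G_F^3T}$, and its constant $7$ tacitly needs, e.g., $G_F\ge 1$ and $\eta\le 1$.

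Finally, the earlier $T^{2/3}$ loss was not caused by the boundary. It came from comparing minimizers through objective values: an $\mathcal{O}(\eta^2)$ sup-gap against $\frac{1}{\eta}$-strong convexity only yields $\mathcal{O}(\eta^{3/2})$. Both your dual linearization and the paper's gradient comparison avoid this.
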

\end{tcolorbox}

This regret bound improves over the $\mathcal{O}(T^{2/3})$ regret bound established in \citet{ghai-lu-hazan22}. 
Under Assumptions~\ref{as:reparam} to~\ref{as:boundedness-grad-bregman}, this result for \textit{hidden-convex} losses matches the standard $\mathcal{O}(\sqrt{T})$ regret bound of OGD in online \textit{convex} optimization. 

\subsection{Proof Sketch of Theorem~\ref{thm:main}}
\label{subsec:full-info-proof-sketch}

In this section, we provide a proof sketch of Theorem~\ref{thm:main}. We provide a complete proof in Section~\ref{sec:analysis}. 
The main strategy of the proof is inspired by \citet{ghai-lu-hazan22} which shows that OGD applied to non-convex functions is an approximation of online mirror descent
applied to convex functions under reparameterization. We briefly recall the intuition behind 
this idea for the convenience of the reader to motivate Assumption~\ref{as:reparam} before discussing our new key technical steps 
driving our tighter analysis.\\ 

\noindent\textbf{Implicit OMD via OGD reparameterization.} In continuous time, \citet{amid2020reparameterizing} showed that 
the mirror-flow trajectory in the hidden space and the gradient-flow trajectory in the original space coincide exactly under the reparameterization. 
More precisely, if $z(t)=q(x(t))$, they showed that both trajectories of the following ordinary differential equations: 
\begin{equation}
\label{eq:contin-flows}
\frac{d}{dt}\nabla R(z(t)) = -\nabla h(z(t))\,, 
\quad \frac{dx(t)}{dt} = -\nabla (h\circ q)(x(t))
\end{equation}
coincide if the mirror descent regularizer $R$ 
and the reparameterization function $q$ satisfy $[\nabla^2R(q(x))]^{-1} = J_q(x)J_q(x)^\top$. 
In discrete time, OGD iterates on the losses $(\ell_t)$ are given by~\eqref{eq:ogd-main} and Online Mirror Descent (OMD) iterates on the convex losses $(h_t)$ are defined as follows: 
\begin{equation}
y_{t+1} =
\arg\min_{y\in\Y}
    \left\{
        \langle \nabla h_t(y_t),y-y_t\rangle
        +
        \frac1\eta D_R(y\|y_t)
    \right\}\,,
    \tag{OMD}
    \label{eq:omd-main}
\end{equation}
with $y_0 = q(x_0)\,.$ 
In discrete time, the exact equivalence in the continuous dynamics in \eqref{eq:contin-flows} no longer holds because of the discretization error. 
Starting from a coupled state $y_t=q(x_t)$, one step of OGD on $(\ell_t)$ in the original space produces \(x_{t+1}\) and hence $z_{t+1}:=q(x_{t+1})$, while one step of OMD on $(h_t)$ in the hidden space produces~$y_{t+1}$. 
\citet{ghai-lu-hazan22} showed that these two hidden-space iterates are \(O(\eta^{3/2})\)-close.\\ 

\noindent\textbf{Our key improvement.} To obtain Theorem~\ref{thm:main}, our key contribution is to show a sharper $O(\eta^{2})$ estimate of the distance between two hidden-space iterates, which is small enough to preserve the classical $\sqrt T$ regret rate after applying a perturbed-OMD regret bound.

    \begin{lemma}[One-step OGD-OMD coupling]
    \label{prop:dist-minimizers-eta2-main}
    Suppose Assumptions~\ref{as:reparam}-\ref{as:boundedness-grad-bregman} hold. If the ghost OMD iterate $y_t$ is initialized at the image of the OGD iterate $z_t$, i.e., $y_t=z_t=q(x_t)$, $x_{t+1}$ is computed via one-step  \eqref{eq:ogd-main} and $y_{t+1}$ via one-step \eqref{eq:omd-main}, then we have  $\|z_{t+1} - y_{t+1}\|_2 \leq 6G^5G_F^3\eta^2\,.$
    \end{lemma}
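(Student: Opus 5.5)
The plan is to compare the two hidden-space iterates through the mirror map $\nabla R$, which is $1$-strongly monotone, and to treat the projections through first-order optimality conditions. Write $g:=\nabla h_t(z_t)$ and $J:=J_q(x_t)$. The chain rule gives $\nabla\ell_t(x_t)=J^\top g$, and Assumption~\ref{as:boundedness-grad-bregman} gives $\|J^\top g\|\le G_F$.

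In the unconstrained (interior) case the computation is clean.
\begin{itemize}
\item OMD step: $\nabla R(y_{t+1})=\nabla R(z_t)-\eta g$.
\item OGD step: $x_{t+1}=x_t-\eta J^\top g$.
\item Second-order Taylor expansion of $q$ (equivalently, a Taylor expansion of $q^{-1}$ at $z_t$, using Assumption~\ref{as:reg-R-reparam-q-smoothness}(i)):
\[
z_{t+1}=z_t-\eta JJ^\top g+r_1,\qquad \|r_1\|=O(G\eta^2G_F^2).
\]
\item Second-order expansion of $\nabla R$ around $z_t$, with the third derivative bounded by $G$ by Assumption~\ref{as:reg-R-reparam-q-smoothness}(ii):
\[
\nabla R(z_{t+1})=\nabla R(z_t)+\nabla^2R(z_t)(z_{t+1}-z_t)+r_2,\qquad \|r_2\|\le G\|z_{t+1}-z_t\|^2 .
\]
\end{itemize}
Assumption~\ref{as:reparam} says $\nabla^2R(z_t)JJ^\top=I$, so the first-order terms cancel exactly:
\[
\nabla R(z_{t+1})-\nabla R(y_{t+1})=\nabla^2R(z_t)r_1+r_2 .
\]
Here $\|\nabla^2R\|\le G$, which follows from smoothness. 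We also have $\|z_{t+1}-z_t\|\le G\eta G_F$, because $q$ is $G$-Lipschitz. Strong convexity then gives $\|z_{t+1}-y_{t+1}\|\le\|\nabla R(z_{t+1})-\nabla R(y_{t+1})\|=O(G^{c}G_F^2\eta^2)$. The claimed $G^5G_F^3$ is a looser absorption of these constants, for instance using $G_F\ge$ const or bounding $\|g\|$ via $\|J^{-\top}\|\le G$.

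\textbf{Main obstacle: the projections.} The OGD projection happens in $\X$ and the OMD projection in $\Y$, and these are not related by $q$ in an obvious way. This is exactly where a cruder argument loses and yields only $\eta^{3/2}$. The approach is to write both updates as variational inequalities.
\begin{itemize}
\item OGD: $\langle x_{t+1}-x_t+\eta J^\top g,\;x-x_{t+1}\rangle\ge0$ for all $x\in\X$.
\item OMD: $\langle \nabla R(y_{t+1})-\nabla R(z_t)+\eta g,\;y-y_{t+1}\rangle\ge0$ for all $y\in\Y$.
\end{itemize}
Next, transport the OGD inequality to $\Y$. Use $x-x_{t+1}=J_{q^{-1}}(z_{t+1})(y-z_{t+1})+O(\|y-z_{t+1}\|^2)$, together with $J_{q^{-1}}(z_{t+1})=J^{-1}+O(\eta)$. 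Combined with the interior expansion above, this gives an approximate VI for $z_{t+1}$:
\[
\langle \nabla R(z_{t+1})-\nabla R(z_t)+\eta g,\;y-z_{t+1}\rangle\ge -O(\eta^2)\|y-z_{t+1}\|-O(\eta)\|y-z_{t+1}\|^2 .
\]
To get this, multiply the OGD residual by $J^{-\top}$ and use $J^{-\top}J^{-1}=\nabla^2R(z_t)$.

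Test this with $y=y_{t+1}$, test the OMD VI with $y=z_{t+1}$, and add the two. This gives
\[
\|z_{t+1}-y_{t+1}\|^2\le O(\eta^2)\|z_{t+1}-y_{t+1}\|+O(\eta)\|z_{t+1}-y_{t+1}\|^2 .
\]
The quadratic error term is only of size $O(\eta)\cdot\|\Delta\|^2$ rather than $O(\eta)\|\Delta\|$. This holds because the term is second order in the test displacement, and both points are $O(\eta)$-close to $z_t$. That feature is what upgrades the bound from $\eta^{3/2}$ to $\eta^2$. For $\eta$ small the quadratic term can be absorbed, and dividing through gives $\|z_{t+1}-y_{t+1}\|=O(\eta^2)$. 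For $\eta$ large the claimed bound exceeds the diameter scale and holds trivially.

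I expect the transport of the VI, and in particular controlling the $O(\|y-z_{t+1}\|^2)$ curvature term from $q^{-1}$ with constants, to be the delicate step. Tracking the powers of $G$ should then give $6G^5G_F^3$.
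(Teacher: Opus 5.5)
Your route is correct and uses the same key mechanism as the paper. Instead of comparing objective values (which is what costs $\eta^{1/2}$), you cross-test the two first-order optimality conditions against a strongly monotone operator, so the discrepancy is controlled by gradient-level residuals of size $O(\eta^2)$. The bookkeeping differs:
\begin{itemize}
\item The paper localizes both iterates in $\mathcal{Y}_t$, freezes the metric into the $\frac1\eta$-strongly convex quadratic $\Phi_t$, and writes both updates as minimizers of $\Phi_t$ plus a perturbation. Both sides then carry an error: $\varepsilon_t^{\mathrm{OMD}}$ from freezing $\nabla^2R$, and $\varepsilon_t^{\mathrm{OGD}}$ from the change of variables. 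Normal-cone monotonicity yields $\|\Delta\|\le\eta\|\nabla\varepsilon_t^{\mathrm{OGD}}(z_{t+1})-\nabla\varepsilon_t^{\mathrm{OMD}}(y_{t+1})\|$ with $\Delta:=z_{t+1}-y_{t+1}$.
\item You use the mirror map $\nabla R$ itself as the reference operator. Your OMD inequality is exact, all error sits on the OGD side, and the only localization you need is $\|z_{t+1}-z_t\|\le GG_F\eta$.
\end{itemize}
In exchange, the paper gets the OGD condition exactly, as the first-order necessary condition of the composed objective $y\mapsto F(q^{-1}(y))$ on a convex set. Your transported inequality instead carries the curvature term $O(\eta)\|\Delta\|^2$, which forces a case split in $\eta$.

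You can remove that term within your own framework. Apply the transported inequality to $z_{t+1}+s(y-z_{t+1})\in\mathcal{Y}$, divide by $s$, and let $s\downarrow0$. This gives $\langle J_{q^{-1}}(z_{t+1})^\top w,\,y-z_{t+1}\rangle\ge0$ exactly, where $w:=x_{t+1}-x_t+\eta\nabla\ell_t(x_t)$ satisfies $\|w\|\le\eta G_F$ because $x_t\in\mathcal{X}$. Cross-testing then gives $\|\Delta\|\le\|e\|$, where $e$ collects three terms:
\begin{itemize}
\item $(J_{q^{-1}}(z_{t+1})-J^{-1})^\top w$, bounded by $G^2G_F^2\eta^2$;
\item $J^{-\top}\varepsilon_t^q(z_{t+1})$, bounded by $\frac12G^4G_F^2\eta^2$;
\item the Taylor remainder of $\nabla R$, bounded by $\frac12G^3G_F^2\eta^2$.
\end{itemize}
Hence $\|\Delta\|\le 2G^4G_F^2\eta^2$ for every $\eta>0$. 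This is at most $6G^5G_F^3\eta^2$ once $G_F\ge 1/(3G)$. The lower bound on $G_F$ you flagged is genuinely needed; the paper's step from $G^5(5G_F^2+G_F^3\eta)$ to $6G^5G_F^3$ tacitly uses one as well.

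Two smaller fixes:
\begin{itemize}
\item If you keep the case split, the large-$\eta$ fallback should be $\|\Delta\|\le\|z_{t+1}-z_t\|+\|y_{t+1}-z_t\|\le 2GG_F\eta$. Comparing the claimed bound with the diameter of $\mathcal{Y}$ can leave a range of $\eta$ uncovered.
\item The bound $\|\nabla^2R\|\le G$ is not among the assumptions. Use $\nabla^2R(z_t)=J^{-\top}J^{-1}$ instead, whose norm is at most $G^2$.
\end{itemize}
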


The proof of Lemma~\ref{prop:dist-minimizers-eta2-main} is the main new step. Starting from the coupled state \(y_t=z_t=q(x_t)\), we first localize both
\(y_{t+1}\) and \(z_{t+1}\) in an \(O(\eta)\)-neighborhood \(\Y_t\) of \(y_t\).
On this local set, both updates can be expressed as minimizers of the same
strongly convex quadratic model, up to different higher-order perturbations:
\begin{align*}
y_{t+1}
&=
\arg\min_{y\in \Y_t}
    \left\{
        \Phi_t(y)+\varepsilon_t^{\rm OMD}(y)
    \right\}, \qquad
z_{t+1}
=
\arg\min_{y\in \Y_t}
    \left\{
        \Phi_t(y)+\varepsilon_t^{\rm OGD}(y)
    \right\},
\end{align*}
\begin{equation*}
\Phi_t(y) := \langle \nabla h_t(y_t),y-y_t\rangle
+ \frac{1}{2\eta}\|y-y_t\|_{\nabla^2R(y_t)}^2.
\end{equation*}
Here \(\varepsilon_t^{\rm OMD}\) is the Taylor remainder obtained by replacing
\(D_R(y\|y_t)\) with its second-order expansion at \(y_t\), while
\(\varepsilon_t^{\rm OGD}\) is the Taylor remainder obtained after changing
variables \(y=q(x)\) in the OGD update and expanding \(q^{-1}\) at \(y_t\).
The leading quadratic terms agree exactly because Assumption~\ref{as:reparam}
identifies the OGD Euclidean metric in \(x\)-space with the mirror metric
\(\nabla^2R(y_t)\) in \(y\)-space.

The crucial departure from \citet{ghai-lu-hazan22} is how we compare these two
perturbed minimization problems. Rather than controlling the sup-norm difference
between the perturbed objectives and then translating this into a distance between
minimizers, we compare the first-order optimality conditions directly. Since
\(\Phi_t\) is \(1/\eta\)-strongly convex and the normal cone of \(\Y_t\) is
monotone, Lemma~\ref{lem:dist-minimizers-eta-times-norm} in App.~\ref{appx:proof-main-lemma} gives
\[
    \|y_{t+1}-z_{t+1}\|
    \le
    \eta
    \left\|
        \nabla \varepsilon_t^{\rm OGD}(z_{t+1})
        -
        \nabla \varepsilon_t^{\rm OMD}(y_{t+1})
    \right\|.
\]
Since both iterates lie in an $\mathcal{O}(\eta)$-neighborhood of $y_t$, we prove thats $\|\nabla \varepsilon_t^{\rm OGD}(z_{t+1})\| = \mathcal{O}(\eta)$ and $\|\nabla \varepsilon_t^{\rm OMD}(y_{t+1})\|= \mathcal{O}(\eta)$ (in Lemma~\ref{lem:norm-grad-eps-OMD}-\ref{lem:norm-grad-eps-OGD} in App.~\ref{appx:proof-main-lemma}). Hence $\|y_{t+1}-z_{t+1}\| = \mathcal{O}(\eta^2)\,.$ 
This sharper one-step coupling improves the $\mathcal{O}(\eta^{3/2})$ discrepancy
used in \citet{ghai-lu-hazan22} to $\mathcal{O}(\eta^2)$.
Finally, the mapped OGD iterates $z_t=q(x_t)$ can be viewed as OMD iterates
perturbed by errors \(r_{t+1}=z_{t+1}-y_{t+1}\) s.t.
\(\|r_{t+1}\|\le C_{\eta} := 6G^5G_F^3\eta^2\). Applying the perturbed-OMD regret bound
(Lemma~\ref{lem2}) yields: 
\begin{equation*}
R_T \le \frac{C_{\eta} T G}{\eta} + \frac{D_1}{\eta} +\frac{\eta G^2 T}{2}\,,
\end{equation*}
which concludes the proof by optimizing the step size $\eta$. See App.~\ref{sec:analysis} for the full proof.  

\section{Discussion of the Hessian-Compatibility Hidden Geometry Assumption~\ref{as:reparam}}
\label{sec:discussion-hessian-comp-as}

Our analysis relies on Assumption~\ref{as:reparam} which allows 
to obtain an implicit OMD reparameterization and exploit the connection with OMD analysis over convex functions 
to obtain our tight regret analysis.  
In this section, we discuss this assumption in more depth. First, we answer an open question 
in \citet{ghai-lu-hazan22} regarding whether we can relax their sufficient diagonal Jacobian condition (Assumption 4 therein) 
to verify Assumption~\ref{as:reparam}. In section~\ref{subsec:relaxing-diag-jacobian}, we provide necessary and sufficient conditions under which Assumption~\ref{as:reparam} is satisfied. 
Second, we address the question of the necessity of this Hessian-compatibility hidden-geometry assumption, which is also mentioned as an open question in 
\citet{ghai-lu-hazan22}. In Section~\ref{subsec:lin-OGD-regret-without-Hessian-comp}, we show that Assumption~\ref{as:reparam} is necessary to obtain 
sublinear regret for OGD. 

\subsection{Relaxation of the diagonal Jacobian assumption in \citet{ghai-lu-hazan22}}
\label{subsec:relaxing-diag-jacobian}

Define the inverse metric field $M:\Y\to\mathbb R^{d\times d}$ by
\begin{equation}
\label{eq:matrix-field-M}
    M(z) := \left[ J_q(q^{-1}(z))J_q(q^{-1}(z))^\top \right]^{-1}.
\end{equation}
Recall that Assumption~\ref{as:reparam} states that there exists a scalar function $R:\Y\to\mathbb R$ s.t. $M(z)=\nabla^2 R(z),$ for all $z\in\Y.$ 
Theorem 7 in \citet{ghai-lu-hazan22} showed that if $J_q$ is diagonal then there exists such $R$. They used this diagonal separable structure  
to construct a regularizer $R$ explicitly as a solution to second-order ODEs with coefficients depending on the parameterization function $q$ and its derivatives. 
In the following result, we relax this condition and provide more general conditions under which the matrix field $M$ is a Hessian field. 

\begin{tcolorbox}[colframe=white!, top=2pt,left=2pt,right=2pt,bottom=2pt] 
\begin{proposition}
\label{prop:matrix-field-eq-Hessian}
If $q$ is a $C^3$ diffeomorphism and the matrix field $M$ defined in \eqref{eq:matrix-field-M} satisfies:  
\begin{equation*}
\partial_{z_k} M_{ij}(z) = \partial_{z_j} M_{ik}(z)\,,\quad \forall i, j, k \in [d],\quad  \forall z \in \Y\,,
\end{equation*}
where $\Y$ is a convex domain, then there exists a scalar function $R$ s.t. $M = \nabla^2 R$, which is unique up to an affine function. Moreover, the function $R$ is also strongly convex on $\Y$. 
\end{proposition}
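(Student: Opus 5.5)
The plan is to integrate the matrix field twice, using the Poincaré lemma on the convex (hence star-shaped and simply connected) domain $\Y$.

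\textbf{Regularity and symmetry.} Since $q$ is a $C^3$ diffeomorphism, $J_q$ is $C^2$ and $q^{-1}$ is $C^3$. Hence $z\mapsto J_q(q^{-1}(z))J_q(q^{-1}(z))^\top$ is $C^2$ and pointwise symmetric positive definite, because $J_q$ is invertible. Matrix inversion is smooth on invertible matrices, so $M$ is $C^2$ and symmetric positive definite.

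\textbf{Step 1: first integration, row by row.} Fix a row index $i$ and consider the vector field $m^{(i)}(z):=(M_{i1}(z),\dots,M_{id}(z))$. The hypothesis $\partial_{z_k}M_{ij}=\partial_{z_j}M_{ik}$ says exactly that the Jacobian of $m^{(i)}$ is symmetric, i.e.\ the $1$-form $\sum_j M_{ij}\,dz_j$ is closed. Since $\Y$ is convex, I would fix a base point $z_0\in\Y$ and define the explicit potential
\[
g_i(z):=\int_0^1 \langle m^{(i)}(z_0+s(z-z_0)),\, z-z_0\rangle\, ds .
\]
Differentiating under the integral and using the symmetry condition, a standard computation gives $\partial_{z_j}g_i=M_{ij}$. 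This yields a $C^3$ map $g=(g_1,\dots,g_d):\Y\to\reals^d$ with $J_g=M$.

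\textbf{Step 2: second integration.} Because $M$ is symmetric, $J_g(z)=M(z)$ is a symmetric matrix, so $\partial_{z_j}g_i=M_{ij}=M_{ji}=\partial_{z_i}g_j$. Thus $g$ is itself a closed (curl-free) field on the convex domain $\Y$. The same radial-integral construction produces
\[
R(z):=\int_0^1\langle g(z_0+s(z-z_0)),z-z_0\rangle\,ds ,
\]
which satisfies $\nabla R=g$, and hence $\nabla^2R=J_g=M$. One should also check that differentiating under the integral is legitimate; this holds since the integrands are $C^1$ on a compact parameter range. If $\Y$ is not open, I would work on its interior and extend by continuity.

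\textbf{Step 3: uniqueness.} If $\nabla^2R_1=\nabla^2R_2=M$, then $\phi=R_1-R_2$ has $\nabla^2\phi\equiv0$ on the connected set $\Y$. Then $\nabla\phi$ is constant, and $\phi$ is affine by integrating along segments.

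\textbf{Step 4: strong convexity.} $M(z)$ is positive definite at every $z$, so $R$ is strictly convex. For a uniform modulus I would use compactness of $\Y$ (as assumed throughout the paper) together with continuity of $M$: the minimal eigenvalue $\lambda_{\min}(M(z))$ is continuous and positive, so it attains a minimum $\mu>0$, giving $\nabla^2R\succeq\mu I$. Equivalently, $\lambda_{\min}(M(z))=1/\|J_q(q^{-1}(z))\|_2^2\ge 1/G^2$ if $q$ is $G$-Lipschitz. I expect this last step to be the main subtlety. The statement only says "convex domain", so if $\Y$ were unbounded one would need a global bound on $\|J_q\|$ like the Lipschitz condition of Assumption~\ref{as:reg-R-reparam-q-smoothness}. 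I would state the proof under compactness or Lipschitzness and note that this is where it enters. The integration steps themselves are routine Poincaré-lemma arguments.
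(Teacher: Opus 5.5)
Your proposal is correct and follows the same route as the paper: integrate each row of $M$ via the Poincar\'e lemma on the convex set to obtain $g$ with $J_g=M$, then use the symmetry of $M$ to integrate $g$ once more to $R$ with $\nabla^2R=M$. Your uniqueness and strong-convexity steps go beyond the paper's proof, which does not address them. Your caveat is also warranted: strong convexity needs a uniform bound such as $\|J_q\|\le G$ (or compactness with $q$ being $C^1$ up to the boundary). For example, $q(x)=\sqrt{x}$ on $(0,1)$ gives $M(z)=4z^2$ and $R(z)=z^4/3$, which is strictly but not strongly convex.
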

\end{tcolorbox}

This result follows as a consequence of the fundamental result from differential analysis stating that a vector field with 
matching cross-partial coordinate function derivatives is a conservative field, i.e., a gradient field deriving from a scalar potential.  
See Proposition~\ref{prop:existence-grad-potential} for a precise statement of that result and a complete proof of Proposition~\ref{prop:matrix-field-eq-Hessian}. 

We provide two examples where the Jacobian of the parameterization $q$ is not diagonal in general, yet the matrix field $M$ defined in \eqref{eq:matrix-field-M} satisfies the cross-derivatives conditions $\partial_{z_k}M_{ij}=\partial_{z_j}M_{ik}$ of Proposition~\ref{prop:matrix-field-eq-Hessian}.  
Moreover, the compatibility Assumption~\ref{as:reparam} is satisfied with explicit regularizers $R$. 
Detailed derivations for both examples can be found in App.~\ref{appx:detailed-examples}. 

\begin{example}[Affine mixing of a separable parameterization]
Let $q(x)=As(x)+b$ and $s(x)=\bigl(s_1(x_1),\dots,s_d(x_d)\bigr)$ where \(A\in\mathbb R^{d\times d}\) is invertible, \(b\in\mathbb R^d\), and each
\(s_i:\mathbb R\to\mathbb R\) is a \(C^2\) diffeomorphism. Then $J_q(x)=A\,D(x)$ with $D(x):=\operatorname{diag}\bigl(s_1'(x_1),\dots,s_d'(x_d)\bigr)$. 
Hence $M(z) = A^{-T}D(x)^{-2}A^{-1}\,,z=q(x).$ Moreover, $M$ is explicitly the Hessian of the following regularizer: $R(z):=\sum_{i=1}^d r_i\bigl((A^{-1}(z-b))_i\bigr)$ and $r_i''(t) = \bigl(s_i'(s_i^{-1}(t))\bigr)^{-2}\,.$
For instance, if $s_i(u_i) = \exp(u_i)$ for all $i \in [d],$ then $r_i(t) = - \log t$ satisfies the conditions. 
\end{example}

\begin{example}[Rank-one nonlinear mixing]
Let \(a\in\mathbb R^d\) and let $q(x)=x+h(a^\top x)\,a$ 
where \(h:\mathbb R\to\mathbb R\) is \(C^2\). We have $J_q(x)=I+h'(a^\top x)\,aa^\top.$
This Jacobian is typically non-diagonal whenever at least two components of $a$ are nonzero.
\end{example}

\subsection{OGD can incur linear regret without Hessian-compatible hidden geometry}
\label{subsec:lin-OGD-regret-without-Hessian-comp}

 In this section, we discuss how crucial is Assumption~\ref{as:reparam} to obtain sublinear regret for OGD 
in our nonconvex online learning setting. Perhaps surprisingly, we show that OGD can incur linear regret 
if the Hessian compatibility assumption does not hold; i.e., if the metric induced by the symmetric positive definite matrix $M$ 
in \eqref{eq:matrix-field-M} does not derive from a Hessian metric such that $M(z) = \nabla^2 R(z)$ for some regularization function $R$. 

\begin{tcolorbox}[colframe=white!, top=2pt,left=2pt,right=2pt,bottom=2pt] 
\begin{theorem}[Linear regret without Hessian-compatible geometry]
\label{thm:linear-reg-lower-bound}
There exist a smooth reparameterization $q$ with uniformly positive definite induced metric $J_q(x)J_q(x)^\top \succeq \mu I$ whose 
inverse metric field is not the Hessian of any regularizer, and an adaptive sequence of time-varying convex hidden losses $h_t$ 
such that OGD run on the original losses $\ell_t = h_t \circ q$ suffers linear regret: $R_T = \Omega(T)$. 
\end{theorem}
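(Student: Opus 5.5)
The plan is to exploit the fact that, without Hessian compatibility, OGD on $\ell_t=h_t\circ q$ acts in hidden coordinates as a \emph{preconditioned} gradient method
\[
z_{t+1}\approx z_t-\eta P(z_t)\nabla h_t(z_t),\qquad P(z):=J_q(q^{-1}(z))J_q(q^{-1}(z))^\top=M(z)^{-1}.
\]
The preconditioner $P$ is not the inverse Hessian of any potential. By Proposition~\ref{prop:matrix-field-eq-Hessian} this means the cross-derivative condition on $M$ fails somewhere. Geometrically, the vector fields $z\mapsto P(z)g$ for different constant directions $g$ then do not commute, so they have a nonzero Lie bracket.

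I will work in $d=2$ with the explicit non-diagonal linear-in-$z$ map
\[
q(x)=\bigl(x_1,\;x_2+\tfrac{c}{2}x_1^2\bigr)
\]
on a box $\X$, with the constant $c$ possibly replaced by a small $\epsilon$. For this map:
\begin{itemize}
\item $J_q$ is lower triangular with unit diagonal, so $J_qJ_q^\top\succeq\mu I$ holds uniformly on the compact domain.
\item A direct computation of $\partial_{z_k}M_{ij}-\partial_{z_j}M_{ik}$ shows that $M$ is not a Hessian field.
\end{itemize}
The losses will be linear, $h_t(z)=\langle g_t,z\rangle$ with $g_t\in\{\pm e_1,\pm e_2\}$, so each $h_t$ is trivially convex.

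The core of the construction is a ``ratchet'' loop. The adversary plays $+e_1,+e_2,-e_1,-e_2$, each for $N\approx \tau/\eta$ consecutive rounds. In flow time this traces the four sides of a loop of macroscopic duration $\tau$.

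Two facts drive the argument. First, if $P$ were a Hessian-inverse metric, the loop would return (up to $O(\eta)$) to its start. Here it instead produces a net displacement of order $\tau^2[Pe_1,Pe_2](z)\ne 0$, by the standard commutator expansion of flows, with the discretization error controlled by the smoothness of $q$. Second, over the loop the gradients sum to zero, so any fixed comparator $z$ incurs exactly zero loss on that loop. The learner, meanwhile, incurs a fixed-sign amount.

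I then make the adversary adaptive. It orients the loop (clockwise versus counterclockwise) based on the current $z_t$, so that the drift always pushes $z_t$ toward a fixed face of $\Y$ where the projection pins it. Once pinned, the adversary switches to a fixed linear loss $h(z)=\langle g^\star,z\rangle$ whose minimizer in $\Y$ is the \emph{opposite} face. This is possible because the projected OGD step in $x$-space, composed with the curl of $P$, keeps the iterate in a spurious stationary configuration.

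My first guess for that configuration is the following. Along the pinned face, the Euclidean projection in $x$-space cancels $J_q^\top g^\star$, because the tangent component of $J_q^\top g^\star$ vanishes there. At the same time, $P g^\star$ is not normal to $\partial\Y$ in $z$-space. This creates a boundary point that is KKT for OGD in $x$ (using $\X$ with $q(\X)=\Y$ chosen so that the faces do not correspond) but is not optimal for $h$.

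The step I expect to be the main obstacle is making the regret genuinely $\Omega(T)$ \emph{uniformly in the step size}. The concern is that the loop drift per unit flow time is $O(1)$, while each round of the loop phases costs only $O(\eta)$ regret. So the linear regret must come from the second phase: the iterate is stuck at a non-optimal point, paying a constant gap $h(z_{\rm stuck})-\min_\Y h=\Omega(1)$ every round for $\Omega(T)$ rounds.

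The concrete task is therefore to choose $\X$, $q$ and $g^\star$ so that the pinned point is a projected-OGD fixed point in $x$-space for every $\eta>0$. Because $q$ is a bijection this cannot be a true local minimum of $\ell$, so the pinning has to come from the constraint geometry; I would verify it via the normal-cone condition $-\nabla\ell(x^\star)\in N_\X(x^\star)$ together with convexity of $\X$. If such a fixed point cannot be arranged, I would fall back on cycling the ratchet loop for all $T$ rounds. In that case I would aim to show that the learner's loss on each loop exceeds the comparator's zero by a constant, by scaling $\tau$ so that the bracket-induced displacement, paid against the last phase's gradient, contributes $\Theta(1)$ per $\Theta(1/\eta)$ rounds. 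I expect this fallback to be delicate, since it would give regret $\Theta(\eta T)$, which is linear only for constant $\eta$.
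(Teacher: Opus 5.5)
\textbf{There is a genuine gap: neither of your two mechanisms yields $\Omega(T)$ regret.}

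\emph{The pinning phase cannot work in the hidden-convex setting.} A fixed point of projected OGD, $x^\star=\Pi_\X(x^\star-\eta J_q(x^\star)^\top g^\star)$, is exactly the KKT condition $\langle g^\star,J_q(x^\star)(x-x^\star)\rangle\ge 0$ for all $x\in\X$. This condition does not depend on $\eta$ or on the metric. Now take any $z\in\Y$ and set $x(s)=q^{-1}(z^\star+s(z-z^\star))$, which lies in $\X$ because $\Y$ is convex. Letting $s\downarrow 0$ gives $\langle g^\star,z-z^\star\rangle\ge 0$, so $z^\star$ already minimizes $h$ over $\Y$. Your own heuristic says the same thing. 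A vanishing tangential component of $J_q^\top g^\star$ along $\partial\X$ means $g^\star$ is normal to $J_q\,T\partial\X=T\partial\Y$, which is precisely the optimality condition for $h$; whether $Pg^\star$ is normal is irrelevant. So a spurious pinned point can only come from nonconvexity of $\Y$ (your box with the parabolic shear does make $\Y$ nonconvex). That is outside the setting and has nothing to do with Hessian compatibility.

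\emph{The fallback fails for a subtler reason.} With constant directions $g_t\in\{\pm e_1,\pm e_2\}$, the comparator pays exactly zero per loop, but so does the learner to leading order. Freezing $P$ at the loop's start gives a closed parallelogram on which $\sum_t\langle g_t,z_t\rangle\approx-\frac1\eta\oint\langle M(z_0)z,dz\rangle=0$. The $O(\tau^2)$ deviation of the true path changes this by only $O(\tau^3/\eta)$. Equivalently, the order-$\tau^2$ circulation of $M(z)z$ is cancelled by the Lie-bracket non-closure term $\tau^2\langle M(z_0)z_0,[Pe_1,Pe_2](z_0)\rangle$. What remains is a higher-order effect of undetermined sign, on a trajectory that drifts by $\Theta(\tau^2)$ per loop. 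You have no argument controlling either the sign or the drift, and your ``$O(\eta)$ per round'' accounting misidentifies the issue.

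\textbf{The missing idea is to make the losses state-dependent}, so the adversary prescribes the hidden displacement rather than the gradient. This is what the paper does.
\begin{itemize}
\item Take $h_t(z)=\langle s_t,z\rangle$ with $s_t=-M(z_t)v_t$, where $M(z_t)=[J_q(x_t)J_q(x_t)^\top]^{-1}$ and $v_t\in\{\pm e_1,\pm e_2\}$.
\item Then $z_{t+1}=z_t+\eta v_t+O(\eta^2)$, so the iterates traverse a \emph{closed} axis-parallel rectangle in the interior, and the projection is inactive.
\item The instantaneous regret against a fixed $u$ becomes $-\frac1\eta\langle F_u(z_t),z_{t+1}-z_t\rangle+O(\eta)$, with $F_u(z)=M(z)(z-z_u)$.
\item If $M=\nabla^2R$, this field is conservative: $F_u=\nabla\bigl(\langle\nabla R(z),z-z_u\rangle-R(z)\bigr)$. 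Otherwise its curl is nonzero for a suitable $u$.
\item By Green's theorem, each cycle of $\Theta(1/\eta)$ rounds then gives $\Theta(\mathrm{area}/\eta)$ regret. That is a constant per round, uniformly in small $\eta$, and the cycle can be repeated indefinitely.
\end{itemize}
Your map works in this construction. For $q(x)=(x_1,x_2+\frac{c}{2}x_1^2)$ one gets $M(z)=\begin{pmatrix}1+c^2z_1^2&-cz_1\\-cz_1&1\end{pmatrix}$ and $\partial_{z_1}(F_u)_2-\partial_{z_2}(F_u)_1=-c(z_1-z_{u,1})$. So for $c>0$, any rectangle with $z_1>z_{u,1}$ has negative counterclockwise circulation. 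The paper uses $q(x)=e^{x_1}(\cos x_2,\sin x_2)$ instead.
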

\end{tcolorbox}

The theorem shows that smoothness and uniform positive definiteness of the induced hidden geometry are not sufficient for sublinear regret of OGD. This means that Assumption~\ref{as:reparam} is not merely a technical condition for our analysis of OGD in Theorem~\ref{thm:main}. 
The obstruction is geometric: when the metric induced by the reparameterization is not Hessian-compatible, OGD may fail to achieve sublinear regret even when the reparameterization is smooth and its induced metric is uniformly positive definite.
The hidden-space dynamics of OGD need not arise from an integrable potential. 
An adversary can then exploit this fact by forcing the iterates to cycle around a loop with nonzero circulation and force linear regret. We elaborate more on this in the next proof sketch.\\ 

\noindent\textbf{Proof sketch of Theorem~\ref{thm:linear-reg-lower-bound}.} We provide a brief proof sketch of the result to highlight the intuitions. An extended proof sketch with all the steps can be found in App.~\ref{app:detailed-proof-sketch-thm-linear-reg-lower-bound} and the full proof is provided in App.~\ref{appx:complete-proof-thm:linear-reg-lower-bound}. 
The construction uses the two-dimensional reparameterization
$q(x_1,x_2)=e^{x_1}(\cos x_2,\sin x_2)$ on a compact rectangular domain. 
It can be easily verified that its induced metric is uniformly positive definite, but the corresponding inverse metric in hidden coordinates is $M(z)= [J_q(x) J_q(x)^{\top}]^{-1} = \frac{1}{\|z\|^2}I_2$ where $z = q(x)$, which is not Hessian-compatible in the sense needed for the OGD-OMD equivalence.
We consider linear hidden losses $\ell_t(x)=h_t(q(x))$ with $h_t(z)=\langle s_t,z\rangle$. 
Writing $z_t=q(x_t)$, the OGD update in the original coordinates induces the hidden-space dynamics: 
\begin{equation}
\label{eq:ogd-dyn-hidden-space-main}
    z_{t+1} = z_t-\eta M(z_t)^{-1}s_t + O(\eta^2).
\end{equation}
The adversary chooses $s_t=-M(z_t)v_t$, where $v_t$ is a unit coordinate direction. 
Thus the OGD iterates in the hidden space satisfy:
\begin{equation*}
    z_{t+1}=z_t+\eta v_t+O(\eta^2),
\end{equation*}
so the adversary can make the trajectory approximately follow any axis-aligned polygonal path.
Fixing a comparator $u \in \X$ and writing $z_u=q(u) \in \Y$, the instantaneous regret can be expressed~as: 
\begin{equation*}
    \ell_t(x_t)-\ell_t(u)
    =
    -\langle F_u(z_t),v_t\rangle,
    \qquad
    F_u(z):=M(z)(z-z_u).
\end{equation*}
Using the hidden-space dynamics~\eqref{eq:ogd-dyn-hidden-space-main}, it follows that: 
\begin{equation*}
    \ell_t(x_t)-\ell_t(u)
    =
    -\frac{1}{\eta}
    \langle F_u(z_t),z_{t+1}-z_t\rangle
    +O(\eta).
\end{equation*}
Hence cumulative regret along a trajectory is, up to lower-order terms, a rescaled discrete line integral of the vector field $F_u$. For an appropriate comparator $u$, the vector field $F_u$ is not conservative. Equivalently, its curl is negative on a small rectangle $R$ in hidden space, so the counter-clockwise circulation satisfies $\oint_{\partial R} F_u(z)\cdot dz < 0.$ 
The adversary chooses the directions $v_t\in\{\pm e_1,\pm e_2\}$ (where $e_1 = (1,0)$ and $e_2 = (0,1)$) so that the hidden iterates move once around this rectangle. 
The Riemann-sum approximation of the line integral then gives regret of order $1/\eta$ over one cycle, while the cycle itself lasts $\Theta(1/\eta)$ rounds. 
Thus each cycle contributes a positive constant amount of regret per round. 
Repeating the same cycle for the horizon $T$ yields $R_T=\Omega(T)$. 
The full construction and error estimates are given in App.~\ref{app:proof-thm-linear-reg-lower-bound}. 

\begin{remark}
The proof of Theorem~\ref{thm:linear-reg-lower-bound} goes through for any reparameterization function $q$ such that $[J_qJ_q^{\top}]^{-1}$ is not the Hessian of a function $R$ and the vector field $F_u = (P_u, Q_u)$ defined in \eqref{eq:def-F_u-main} satisfies $\partial_{z_1} Q_u(z) - \partial_{z_2} P_u(z) < 0$ (hence is not conservative). This set of reparameterization functions is not restricted to the specific function $q$ used in the lower bound construction.   
\end{remark}

\noindent\textbf{Open question.} The lower bound of Theorem~\ref{thm:linear-reg-lower-bound} only holds for OGD. 
Whether Assumption~\ref{as:reparam} is necessary to obtain sublinear regret in general, for \textit{any} online algorithm, remains open.

\section{Regret Analysis of OGD for OHCO under Bandit Feedback} 
\label{sec:bandit-feedback}

\subsection{Expected Regret Bound in the Bandit Setting}

We now consider the one-point bandit feedback, where the learner observes only one function value per round. 
We analyze the classical Bandit OGD (BOGD) algorithm with spherical smoothing (Algorithm~\ref{alg4}, \citet{flaxman-kalai-mcmahan05}) in the hidden-convex setting, against an oblivious adversary. 
Let $U_{sp}$ denote the uniform distribution over the Euclidean unit sphere. 
For an exploration radius $\delta>0$, the algorithm maintains its iterates in the shrunken domain $\X_\delta := (1-\delta)\X$, so that the perturbed query points remain feasible (under the standard normalization $0\in\X$ and $B_2\subseteq\X$ as in \citet{flaxman-kalai-mcmahan05}). 
We denote by $\Y_\delta := q(\X_\delta)$ the corresponding hidden-space domain, which is only used in the analysis to compare the mapped BOGD iterates with mirror descent on the convex losses.\\

\begin{wrapfigure}{r}{0.45\linewidth}
\vspace{-2\baselineskip}
\begin{minipage}{\linewidth}
\begin{algorithm}[H]
\caption{Bandit OGD (BOGD) \citep{flaxman-kalai-mcmahan05}} \label{alg4}
\begin{algorithmic}[1]
\STATE Input: $x_1 \in \X_\delta$\\
\FOR{$t = 1, \ldots, T$}
\STATE Sample a random direction $\zeta_t \sim U_{sp}$
\STATE Choose $\hat{x}_t = {\color{blue} x_t} + \delta \zeta_t$
\STATE Observe the bandit loss $\ell_t(\hat{x}_t)$
\STATE Construct ${\color{MyGreen} g_t} = \frac{d}{\delta} \ell_t(\hat{x}_t) \zeta_t$
\STATE 
Update 
$x_{t+1} = \Pi_{\X_{\delta}} ({\color{blue} x_t}-\eta {\color{MyGreen} g_t})$
\ENDFOR
\end{algorithmic}
\end{algorithm}
\end{minipage}
\vspace{-3\baselineskip} 
\end{wrapfigure}

\noindent Before stating the result, we introduce a few extra standard assumptions on the smoothness and boundedness of the losses $(\ell_t)$, and the diameter of~$\X$. 

\begin{assumption}[Loss boundedness]
There exists $M>0$ s.t. $|\ell_t(x)| \le M$, for all $x \in \X, t \geq 1$. 
\end{assumption}
\begin{assumption}[Smoothness]
\label{as:smoothness-bandit}
    For any $t$, the loss function $\ell_t$ is smooth, i.e., 
    there exists a constant $H>0$, such that for any $x,x' \in \X$ and any $t \geq 1$, 
    $\|\nabla \ell_t(x) - \nabla \ell_t(x')\| \leq H \|x - x'\|.$
\end{assumption}

\begin{assumption}[Boundedness of $\X$]
\label{as:bounded-euclidean-diam}
    There exists a constant $D>0$, such that for  any $x \in \X$, it holds that $\|x \|_{2} \le D$, and for any $x_1,x_2 \in \X$, it holds that $\|x_1-x_2 \|_{2} \le D$.
\end{assumption}

In the following theorem, we present the no-regret guarantees of Bandit OGD.

\begin{tcolorbox}[colframe=white!, top=2pt,left=2pt,right=2pt,bottom=2pt] 
\begin{theorem}[Expected regret under bandit feedback]
\label{thm:bandit}
Let Assumptions~\ref{as:reparam} to~\ref{as:bounded-euclidean-diam} hold. 
Then, the expected regret of BOGD (Algorithm \ref{alg4}) is bounded as follows:
\begin{equation*}
\mathbb{E}[\text{R}_T] \leq \sqrt{dM G^3 (G_FD + G^2 HD + G_F) (3 D_1^{1/2} + D_1^{3/2})}\,\cdot T^{3/4}\,,
\end{equation*}
when using the stepsize $\eta = \left(\frac{4 D_1^3}{K_1^2 K_2 T^3}\right)^{1/4}$, and the exploration radius $\delta = \left(\frac{4 K_2 D_1}{K_1^2 T}\right)^{1/4}$, where $K_1 := G_FD + G^2 HD + G_F$ and $K_2 := 6 d^2 G^6 M^2$. 
\end{theorem}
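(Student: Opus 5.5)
We follow the template of \citet{flaxman-kalai-mcmahan05}, combined with the algorithmic-equivalence machinery behind Theorem~\ref{thm:main}. Introduce the spherically smoothed losses $\hat\ell_t(x) := \mathbb{E}_{v\sim U_{B}}[\ell_t(x+\delta v)]$, with $v$ uniform on the unit ball. The classical identity $\nabla\hat\ell_t(x) = \mathbb{E}_{\zeta}[\frac{d}{\delta}\ell_t(x+\delta\zeta)\zeta]$ shows that $g_t$ is an unbiased estimate of $\nabla\hat\ell_t(x_t)$ given the past. Since $\|g_t\|\le dM/\delta$, BOGD is OGD on $\X_\delta$ driven by a noisy gradient of $\hat\ell_t$.

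We split the expected regret against a comparator $u\in\X$ into three terms, with $u_\delta:=(1-\delta)u\in\X_\delta$:
\begin{itemize}
\item[(a)] the exploration cost $\sum_t|\ell_t(\hat x_t)-\ell_t(x_t)|\le G_F\delta T$;
\item[(b)] the shrinking cost $\sum_t(\ell_t(u_\delta)-\ell_t(u))\le G_F D\delta T$;
\item[(c)] the regret of the iterates $x_t$ on $\ell_t$ against $u_\delta$ over $\X_\delta$.
\end{itemize}
The smoothed losses $\hat\ell_t$ are generally not hidden-convex, because averaging in $x$-space does not commute with $q$. So we keep $\ell_t=h_t\circ q$ as the reference loss and write $g_t=\nabla\ell_t(x_t)+b_t+\xi_t$. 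Here $\xi_t$ is martingale noise and $b_t:=\nabla\hat\ell_t(x_t)-\nabla\ell_t(x_t)$ is a bias with $\|b_t\|\le H\delta$ by Assumption~\ref{as:smoothness-bandit}. This bias, transported to hidden space through $J_q$ and bounded with Assumption~\ref{as:reg-R-reparam-q-smoothness}, should produce the $G^2HD\,\delta T$ contribution that appears in $K_1$. Together with (a) and (b), this accounts for the term $K_1\delta T$.

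For (c), we rerun the one-step coupling of Lemma~\ref{prop:dist-minimizers-eta2-main} with $\nabla\ell_t(x_t)$ replaced by $g_t$ and $\X,\Y$ replaced by $\X_\delta,\Y_\delta$. In hidden coordinates, $g_t=J_q(x_t)^\top\tilde g_t$ with $\tilde g_t:=J_q(x_t)^{-\top}g_t$, so the OGD step is, to leading order, an OMD step on the linear loss $\langle\tilde g_t,\cdot\rangle$. Since $q$ is smooth and $q^{-1}$ has bounded derivatives, $J_q^{-\top}$ is bounded, and $\|\tilde g_t\|\lesssim G\,dM/\delta$. The coupling argument, which goes through the first-order optimality comparison of Lemma~\ref{lem:dist-minimizers-eta-times-norm}, is insensitive to which linear term is used. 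It should therefore give $\|z_{t+1}-y_{t+1}\|\lesssim G^{5}\eta^2\|\tilde g_t\|^{2}$ or a similar polynomial in the gradient norm. This holds provided $\eta\|\tilde g_t\|$ is small, i.e.\ $\eta\lesssim\delta$, which the final parameter choice satisfies. We then apply the perturbed-OMD bound (Lemma~\ref{lem2}) on $\Y_\delta$ with linear losses $\langle\tilde g_t,\cdot\rangle$. Taking expectations, $\mathbb{E}[\tilde g_t\mid\mathcal F_t]$ equals $\nabla h_t(z_t)$ up to the bias already handled, and convexity of $h_t$ turns the linearized regret into a regret on $h_t$. This requires an oblivious adversary, so that $u$ is fixed and the $h_t$ are independent of the randomness. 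The result is a bound of the form
\[
\frac{D_1}{\eta}+\eta\,\frac{K_2}{\delta^2}T+\frac{C_{\eta,\delta}GT}{\eta},
\]
where the middle term is the usual $\eta\sum_t\|\tilde g_t\|^2$ term (hence $K_2=6d^2G^6M^2$), and the coupling error $C_{\eta,\delta}/\eta=O(\eta/\delta^2)$ has the same order and is absorbed into $K_2$.

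Summing the three parts gives $\mathbb{E}[R_T]\lesssim \frac{D_1}{\eta}+\frac{\eta K_2T}{\delta^2}+K_1\delta T$. Balancing the terms yields $\eta\propto T^{-3/4}$ and $\delta\propto T^{-1/4}$, matching the stated choices and the $T^{3/4}$ rate. The \textbf{main obstacle} is the coupling step with large, random gradients of norm $\sim dM/\delta$. The $O(\eta^2)$ analysis of Lemma~\ref{prop:dist-minimizers-eta2-main} uses the deterministic bound $G_F$ to localize both iterates in an $O(\eta)$ neighborhood. In the bandit setting the neighborhood has radius $O(\eta/\delta)$, and the Taylor remainders must be shown to scale as $(\eta/\delta)^2$ without worse powers of $1/\delta$. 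We would first redo Lemmas~\ref{lem:norm-grad-eps-OMD}--\ref{lem:norm-grad-eps-OGD} with the gradient norm kept as an explicit parameter, and check that the third-derivative bounds give at most quadratic dependence on it. A secondary technical point is that $\Y_\delta=q((1-\delta)\X)$ must be convex for the OMD comparison. If it is not, we would instead run the ghost OMD on $\Y$ and control the projection mismatch using the $G$-Lipschitzness of $D_R$ from Assumption~\ref{as:reg-R-reparam-q-smoothness}(iii).
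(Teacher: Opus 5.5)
This is essentially the paper's proof. It uses the same four error terms: exploration/smoothing $\delta G_F T$, shrinkage $\delta G_F D T$, estimator bias $\delta G^2 H D T$, and a perturbed ghost-OMD term for the linear losses $\langle \tilde{g}_t,\cdot\rangle$. For that last term, the paper's coupling proposition is already stated for an arbitrary bounded vector $g_t$ and gives $\|r_{t+1}\|\le G^5(5\hat{G}_F^2+\hat{G}_F^3\eta)\eta^2$ with $\hat{G}_F=dM/\delta$. The only bookkeeping difference is that the paper keeps the cubic part as a separate $K_3\eta^2T/\delta^3=O(T^{1/4})$ term rather than absorbing it into $K_2$ via $\eta\lesssim\delta$.

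Like the paper's, your argument secures the $T^{3/4}$ rate and parameter scalings but not the exact constant: at the stated $(\eta,\delta)$ the three leading terms already sum to $2\sqrt{2}\,(K_1^2K_2D_1)^{1/4}T^{3/4}$, whereas the paper records $(K_1^2K_2D_1)^{1/4}T^{3/4}$. The convexity of $\Y_\delta=q((1-\delta)\X)$ that you flag is likewise assumed silently in the paper.
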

\end{tcolorbox}

Our result implies that, despite hidden convexity, OGD with spherical smoothing achieves the same best known regret rate as in the convex bandit case \citep{flaxman-kalai-mcmahan05}. 

\subsection{Proof Sketch of Theorem~\ref{thm:bandit}}
\label{subsec:bandit-proof-sketch}

We give a proof sketch of Theorem~\ref{thm:bandit}; the complete proof is in
App.~\ref{appx:proofs-bandit}. The proof follows the same high-level strategy as in the exact gradient setting with a few similar steps to the analysis of OBGD in OCO: after mapping the BOGD iterates to the hidden
space, $z_t=q(x_t)$, we compare them to mirror-descent steps on the convex losses
$h_t$. The bandit setting introduces three additional errors: the domain must be
shrunk to allow perturbations, the one-point estimator is biased for the true
gradient, and the algorithm suffers the loss at the perturbed point
$\hat x_t=x_t+\delta\zeta_t$, rather than at $x_t$. To define the ghost OMD iterate in the hidden space (which is only used in the analysis and never computed), we introduce the hidden-space gradient estimator $\tilde g_t := J_q(x_t)^{-\top} g_t$ which is the natural hidden-space analogue of the bandit estimator $g_t$ by the chain rule: $\nabla h_t(z_t)=J_q(x_t)^{-\top}\nabla \ell_t(x_t), z_t=q(x_t).$
Note that it is not computed by the algorithm but introduced only to analyze the mapped BOGD trajectory as a perturbed OMD trajectory in $\Y_\delta=q(\X_\delta)$. Since \(|\ell_t|\le M\) and \(\|\zeta_t\|=1\), we have $\|g_t\| \leq dM/\delta$. 
Let $z^\star\in\arg\min_{z\in\Y}\sum_{t=1}^T h_t(z)$ and $z^\star_\delta\in\arg\min_{z\in\Y_\delta}\sum_{t=1}^T h_t(z)$ be two comparators, and let \(\hat z_t=q(\hat x_t)\). 
The starting point is the following regret decomposition:
\begin{multline*}
\mathbb{E}[R_T]
\le
\underbrace{
\mathbb{E}\left[\sum_{t=1}^T
\langle \tilde g_t,z_t-z^\star_\delta\rangle\right]
}_{\text{\makecell{$R_T^{(1)}:$ perturbed OMD regret\\ for random linear losses}}}
+
\underbrace{
\sum_{t=1}^T h_t(z^\star_\delta)-h_t(z^\star)
}_{\text{\makecell{$R_T^{(2)}:$ domain\\ shrinkage error}}}
+
\underbrace{
\mathbb{E}\left[\sum_{t=1}^T
B_t\right]
}_{\text{\makecell{$R_T^{(3)}$: gradient\\ estimation bias}}}
+
\underbrace{
\mathbb{E}\left[\sum_{t=1}^T h_t(\hat z_t)-h_t(z_t)\right]
}_{R_T^{(4)}:\ \text{smoothing error}},
\end{multline*}
where $B_t := \langle b_t,z^\star_\delta-z_t\rangle$ and $b_t:=\mathbb{E}[\tilde g_t\mid\mathcal F_{t-1}]-\nabla h_t(z_t)\,.$ 
The four terms isolate the distinct sources of error in the bandit setting. The shrinkage term is controlled by comparing optimization over $\X$ and
$\X_\delta=(1-\delta)\X$, giving $R_T^{(2)}\le \delta LDT$. 
The smoothing term is controlled directly by Lipschitzness of $\ell_t$: $|R_T^{(4)}| \leq \delta LT.$ 
The bias term comes from the fact that the one-point spherical estimator is unbiased for the gradient of the ball-smoothed loss, not for $\nabla\ell_t(x_t)$. Using
smoothness of \(\ell_t\) and the chain-rule transformation above yields $\|b_t\|\le \delta GH$ and $\|z_t-z^\star_\delta\|\le GD$, and hence $|R_T^{(3)}|\le \delta G^2HDT\,.$ 
It remains to control \(R_T^{(1)}\), which is the term most closely related to the exact gradient setting proof. For each round, define the ghost OMD iterate, only used in the analysis: 
\begin{equation*}
    y_{t+1}
    =
    \arg\min_{y\in\Y_\delta}
    \left\{
        \langle \tilde g_t,y-z_t\rangle
        +
        \frac1\eta D_R(y\|z_t)
    \right\}.
\end{equation*}
The actual
mapped BOGD iterate satisfies $z_{t+1}=q(x_{t+1})=y_{t+1}+r_{t+1}$. 
As in the exact gradient feedback setting, the OGD--OMD one-step coupling applies pathwise because the proof only requires a bounded vector $g_t$, not an exact gradient.
With $\hat G_F=dM/\delta$, Corollary~\ref{cor:approx-lemma-bandit} gives $\|r_{t+1}\| \leq \hat C_\eta:= G^5\left(5\hat G_F^2+\hat G_F^3\eta\right)\eta^2 .$
Similarly to the exact setting, the perturbed OMD bound for the random linear losses
\(\langle \tilde g_t,\cdot\rangle\) then gives $R_T^{(1)}
    \le
    \frac{D_1}{\eta}
    +
    \frac{\eta T}{2} \hat{G}_F^2
    + \frac{G \hat C_\eta T}{\eta}\,,$ 
    where $\hat{G}_F^2$ and $\hat C_\eta$ replace $G_F$ and $C_\eta$. 
Combining the four bounds yields: 
$\mathbb{E}[R_T]
    \le
    K_1\delta T
    +
    \frac{D_1}{\eta}
    +
    K_2\frac{\eta T}{\delta^2}
    +
    K_3\frac{\eta^2T}{\delta^3}\,.$
Balancing the first three terms gives the values of $\eta, \delta$ reported in Theorem~\ref{thm:bandit} and the desired
$\mathcal{O}(T^{3/4})$ bound follows.

\section{Conclusion and Future Work}

In this paper, we studied online nonconvex learning for a structured class of hidden-convex losses. 
We showed that OGD achieves $\mathcal{O}(\sqrt{T})$ regret in the exact-gradient setting, sharpened the geometric conditions required for the underlying algorithmic equivalence, and proved that OGD can suffer linear regret when these conditions fail. 
We also extended the framework to one-point bandit feedback, establishing sublinear expected regret guarantees against oblivious adversaries.

Several questions remain open. 
First, our lower bound shows that OGD can suffer linear regret without Hessian compatibility, but it does not rule out sublinear regret for other algorithms. 
An important direction is therefore to determine whether online hidden-convex optimization admits sublinear, or even $\mathcal{O}(\sqrt{T})$, regret without this geometric assumption. 
Second, our bandit guarantees leave open the possibility of sharper rates, high-probability bounds, and algorithms matching the best-known guarantees from bandit online convex optimization. 
More broadly, it would be interesting to understand for which classes of hidden-convex functions online algorithms incur a regret that matches that of their convex counterparts.

\section*{Acknowledgements}

Ioannis Panageas is supported by NSF grant CCF- 2454115. This work is supported by the MOE Tier 2 Grant (MOE-T2EP20223-0018),  the CQT++ Core Research Funding Grant (SUTD) (RS-NRCQT-00002), the National Research Foundation Singapore and DSO National Laboratories under the AI Singapore Programme (Award Number: AISG2-RP-2020-016), and partially by Project MIS 5154714 of the National Recovery and Resilience Plan, Greece 2.0, funded by the European Union under the NextGenerationEU Program.

\bibliographystyle{plainnat}
\bibliography{references}

@ARTICLE{ergen-pilanci25,
  author={Ergen, Tolga and Pilanci, Mert},
  journal={IEEE Transactions on Information Theory}, 
  title={The Convex Landscape of Neural Networks: Characterizing Global Optima and Stationary Points via Lasso Models}, 
  year={2025},
  volume={71},
  number={5},
  pages={3854-3870}}

@inproceedings{zhang-et-al20variational,
  title={Variational policy gradient method for reinforcement learning with general utilities},
  author={Zhang, Junyu and Koppel, Alec and Bedi, Amrit Singh and Szepesvari, Csaba and Wang, Mengdi},
  booktitle={Advances in Neural Information Processing Systems},
  year={2020}
}

@inproceedings{hazan-et-al19,
  title={Provably efficient maximum entropy exploration},
  author={Hazan, Elad and Kakade, Sham and Singh, Karan and Van Soest, Abby},
  booktitle={International Conference on Machine Learning},
  year={2019}
}

@inproceedings{zahavy-et-al21,
  title={Reward is enough for convex MDPs},
  author={Zahavy, Tom and O'Donoghue, Brendan and Desjardins, Guillaume and Singh, Satinder},
  booktitle={Advances in Neural Information Processing Systems},
  year={2021}
}

@InProceedings{barakat-et-al23rlgu,
  title = 	 {Reinforcement Learning with General Utilities: Simpler Variance Reduction and Large State-Action Space},
  author =       {Barakat, Anas and Fatkhullin, Ilyas and He, Niao},
  booktitle = 	 {International Conference on Machine Learning},
  year = 	 {2023}
}

@inproceedings{barakat-et-al25rlgu,
title={On the Global Optimality of Policy Gradient Methods in General Utility Reinforcement Learning},
author={Anas Barakat and Souradip Chakraborty and Peihong Yu and Pratap Tokekar and Amrit Singh Bedi},
booktitle={The Thirty-ninth Annual Conference on Neural Information Processing Systems},
year={2025}
}

@inproceedings{mladenovic-et-al22hidden,
title={Generalized Natural Gradient Flows in Hidden Convex-Concave Games and {GAN}s},
author={Andjela Mladenovic and Iosif Sakos and Gauthier Gidel and Georgios Piliouras},
booktitle={International Conference on Learning Representations},
year={2022}
}

@inproceedings{barakat-et-al26gumg,
title={Convex Markov Games and Beyond: New Proof of Existence, Characterization and Learning Algorithms for Nash Equilibria},
author={Anas Barakat and Ioannis Panageas and Antonios Varvitsiotis},
booktitle={International Conference on Artificial Intelligence and Statistics},
year={2026}
}

@InProceedings{kalogiannis-et-al25zs-cmg,
  title = 	 {Solving Zero-Sum Convex {M}arkov Games},
  author =       {Kalogiannis, Fivos and Vlatakis-Gkaragkounis, Emmanouil-Vasileios and Gemp, Ian and Piliouras, Georgios},
  booktitle = 	 {International Conference on Machine Learning},
  year = 	 {2025}
}

@InProceedings{gemp-et-al25cmg,
  title = 	 {Convex {M}arkov Games: A New Frontier for Multi-Agent Reinforcement Learning},
  author =       {Gemp, Ian and Haupt, Andreas Alexander and Marris, Luke and Liu, Siqi and Piliouras, Georgios},
  booktitle = 	 {International Conference on Machine Learning},
  year = 	 {2025}
}

@inproceedings{bhaskara-et-al25,
title={Descent with Misaligned Gradients and Applications to Hidden Convexity},
author={Aditya Bhaskara and Ashok Cutkosky and Ravi Kumar and Manish Purohit},
booktitle={International Conference on Learning Representations},
year={2025}
}

@article{gorissen-et-al26hidden-cvx,
  title={Hidden convexity in a class of optimization problems with bilinear terms},
  author={Gorissen, Bram L and den Hertog, Dick and Reusken, Meike},
  journal={Operations Research},
  volume={74},
  number={2},
  pages={1126--1152},
  year={2026},
  publisher={INFORMS}
}

@article{li-et-al05hidden-cvx-min,
  title={Hidden convex minimization},
  author={Li, Duan and Wu, Zhi-You and Joseph Lee, Heung-Wing and Yang, Xin-Min and Zhang, Lian-Sheng},
  journal={Journal of Global Optimization},
  volume={31},
  number={2},
  pages={211--233},
  year={2005},
  publisher={Springer}
}

@article{bental-teboulle96hidden-cvx,
  title={Hidden convexity in some nonconvex quadratically constrained quadratic programming},
  author={Ben-Tal, Aharon and Teboulle, Marc},
  journal={Mathematical Programming},
  volume={72},
  number={1},
  pages={51--63},
  year={1996},
  publisher={Springer}
}

@article{zeger-pilanci26,
  title={Unveiling Hidden Convexity in Deep Learning: A Sparse Signal Processing Perspective},
  author={Zeger, Emi and Pilanci, Mert},
  journal={arXiv preprint arXiv:2603.23831},
  year={2026}
}

@inproceedings{wang-et-al22iclr,
title={The Hidden Convex Optimization Landscape of Regularized Two-Layer Re{LU} Networks: an Exact Characterization of Optimal Solutions},
author={Yifei Wang and Jonathan Lacotte and Mert Pilanci},
booktitle={International Conference on Learning Representations},
year={2022}
}

@inproceedings{dorazio2025solving,
title={Solving hidden monotone variational inequalities with surrogate losses},
author={Ryan D'Orazio and Danilo Vucetic and Zichu Liu and Junhyung Lyle Kim and Ioannis Mitliagkas and Gauthier Gidel},
booktitle={International Conference on Learning Representations},
year={2025}
}

@article{kalogiannis2024learning,
  title={Learning equilibria in adversarial team markov games: A nonconvex-hidden-concave min-max optimization problem},
  author={Kalogiannis, Fivos and Yan, Jingming and Panageas, Ioannis},
  journal={Advances in Neural Information Processing Systems},
  volume={37},
  pages={92832--92890},
  year={2024}
}

@inproceedings{patelsolving,
  title={Solving Neural Min-Max Games: The Role of Architecture, Initialization \& Dynamics},
  author={Patel, Deep and Vlatakis-Gkaragkounis, Emmanouil-Vasileios},
  booktitle={Annual Conference on Neural Information Processing Systems},
  year={2025}
}

@article{vlatakis2019poincare,
  title={Poincar{\'e} recurrence, cycles and spurious equilibria in gradient-descent-ascent for non-convex non-concave zero-sum games},
  author={Vlatakis-Gkaragkounis, Emmanouil-Vasileios and Flokas, Lampros and Piliouras, Georgios},
  journal={Advances in Neural Information Processing Systems},
  volume={32},
  year={2019}
}

@article{sakos2023exploiting,
  title={Exploiting hidden structures in non-convex games for convergence to Nash equilibrium},
  author={Sakos, Iosif and Vlatakis-Gkaragkounis, Emmanouil-Vasileios and Mertikopoulos, Panayotis and Piliouras, Georgios},
  journal={Advances in Neural Information Processing Systems},
  year={2023}
}

@article{hazan-et-al15beyond-cvx,
  title={Beyond convexity: Stochastic quasi-convex optimization},
  author={Hazan, Elad and Levy, Kfir and Shalev-Shwartz, Shai},
  journal={Advances in Neural Information Processing Systems},
  year={2015}
}

@inproceedings{gao-li-zhang18,
  title={Online learning with non-convex losses and non-stationary regret},
  author={Gao, Xiand and Li, Xiaobo and Zhang, Shuzhong},
  booktitle={International Conference on Artificial Intelligence and Statistics},
  year={2018}
  }

@inproceedings{zinkevich03,
  title={Online convex programming and generalized infinitesimal gradient ascent},
  author={Zinkevich, Martin},
  booktitle={International Conference on Machine Learning},
  pages={928--936},
  year={2003}
}

@article{li-et-al22implicit-bias,
  title={Implicit bias of gradient descent on reparametrized models: On equivalence to mirror descent},
  author={Li, Zhiyuan and Wang, Tianhao and Lee, Jason D and Arora, Sanjeev},
  journal={Advances in Neural Information Processing Systems},
  volume={35},
  pages={34626--34640},
  year={2022}
}

@article{levin-kileel-boumal25,
  title={The effect of smooth parametrizations on nonconvex optimization landscapes},
  author={Levin, Eitan and Kileel, Joe and Boumal, Nicolas},
  journal={Mathematical Programming},
  volume={209},
  number={1},
  pages={63--111},
  year={2025},
  publisher={Springer}
}

@article{lattimore24banditCO,
  title={Bandit convex optimisation},
  author={Lattimore, Tor},
  journal={arXiv preprint arXiv:2402.06535},
  year={2024}
}

@book{lattimore-szepesvari20,
  title={Bandit algorithms},
  author={Lattimore, Tor and Szepesv{\'a}ri, Csaba},
  year={2020},
  publisher={Cambridge University Press}
}

@inproceedings{maillard-munos10,
  title={Online learning in adversarial lipschitz environments},
  author={Maillard, Odalric-Ambrym and Munos, R{\'e}mi},
  booktitle={Joint european conference on machine learning and knowledge discovery in databases},
  pages={305--320},
  year={2010},
  organization={Springer}
}

@article{heliou-et-al20,
  title={Online non-convex optimization with imperfect feedback},
  author={H{\'e}liou, Am{\'e}lie and Martin, Matthieu and Mertikopoulos, Panayotis and Rahier, Thibaud},
  journal={Advances in Neural Information Processing Systems},
  volume={33},
  pages={17224--17235},
  year={2020}
}

@inproceedings{agarwal-gonen-hazan19,
  title={Learning in non-convex games with an optimization oracle},
  author={Agarwal, Naman and Gonen, Alon and Hazan, Elad},
  booktitle={Conference on Learning Theory},
  pages={18--29},
  year={2019},
  organization={PMLR}
}

@inproceedings{krichene-et-al15hedge-continuum,
  title={The hedge algorithm on a continuum},
  author={Krichene, Walid and Balandat, Maximilian and Tomlin, Claire and Bayen, Alexandre},
  booktitle={International Conference on Machine Learning},
  year={2015}
}

@inproceedings{suggala-netrapalli20,
  title={Online non-convex learning: Following the perturbed leader is optimal},
  author={Suggala, Arun Sai and Netrapalli, Praneeth},
  booktitle={Algorithmic Learning Theory},
  pages={845--861},
  year={2020},
  organization={PMLR}
}

@article{orabona19book-online-learning,
  title={A modern introduction to online learning},
  author={Orabona, Francesco},
  journal={arXiv preprint arXiv:1912.13213},
  year={2019}
}

@article{shalev-shwartz12oco-ftml,
  title={Online learning and online convex optimization},
  author={Shalev-Shwartz, Shai},
  journal={Foundations and Trends in Machine Learning},
  volume={4},
  number={2},
  pages={107--194},
  year={2012}
}

@article{hazan16oco,
  title={Introduction to online convex optimization},
  author={Hazan, Elad},
  journal={Foundations and Trends in Optimization},
  volume={2},
  number={3-4},
  pages={157--325},
  year={2016},
  publisher={Emerald Publishing Limited}
}

@article{chen-et-al25,
  title={Efficient algorithms for a class of stochastic hidden convex optimization and its applications in network revenue management},
  author={Chen, Xin and He, Niao and Hu, Yifan and Ye, Zikun},
  journal={Operations Research},
  volume={73},
  number={2},
  pages={704--719},
  year={2025},
  publisher={INFORMS}
}

@article{fatkhullin-et-al25constrained-hidden-cv,
  title={Global Solutions to Non-Convex Functional Constrained Problems with Hidden Convexity},
  author={Fatkhullin, Ilyas and He, Niao and Lan, Guanghui and Wolf, Florian},
  journal={arXiv preprint arXiv:2511.10626},
  year={2025}
}

@article{fatkhullin-he-hu25stoch-hidden-cv,
  title={Stochastic optimization under hidden convexity},
  author={Fatkhullin, Ilyas and He, Niao and Hu, Yifan},
  journal={SIAM Journal on Optimization},
  volume={35},
  number={4},
  pages={2544--2571},
  year={2025},
  publisher={SIAM}
}

@inproceedings{flaxman-kalai-mcmahan05,
author = {Flaxman, Abraham D. and Kalai, Adam Tauman and McMahan, H. Brendan},
title = {Online convex optimization in the bandit setting: gradient descent without a gradient},
year = {2005},
publisher = {Society for Industrial and Applied Mathematics},
address = {USA},
booktitle = {Proceedings of the Sixteenth Annual ACM-SIAM Symposium on Discrete Algorithms},
pages = {385–394},
numpages = {10},
location = {Vancouver, British Columbia},
series = {SODA '05}
}

@article{zhang-et-al15, 
title={Online Bandit Learning for a Special Class of Non-Convex Losses}, volume={29},  
number={1}, 
journal={Proceedings of the AAAI Conference on Artificial Intelligence}, author={Zhang, Lijun and Yang, Tianbao and Jin, Rong and Zhou, Zhi-Hua}, year={2015}, 
month={Feb.} 
}

@article{amid-et-al19robust-bitempered,
  title={Robust bi-tempered logistic loss based on bregman divergences},
  author={Amid, Ehsan and Warmuth, Manfred KK and Anil, Rohan and Koren, Tomer},
  journal={Advances in Neural Information Processing Systems},
  volume={32},
  year={2019}
}

@article{ghai-lu-hazan22,
  title={Non-convex online learning via algorithmic equivalence},
  author={Ghai, Udaya and Lu, Zhou and Hazan, Elad},
  journal={Advances in Neural Information Processing Systems},
  volume={35},
  pages={22161--22172},
  year={2022}
}

@article{amid2020reparameterizing,
  title={Reparameterizing mirror descent as gradient descent},
  author={Amid, Ehsan and Warmuth, Manfred KK},
  journal={Advances in Neural Information Processing Systems},
  volume={33},
  pages={8430--8439},
  year={2020}
}

@inproceedings{OGD,
author = {Zinkevich, Martin},
title = {Online Convex Programming and Generalized Infinitesimal Gradient Ascent},
year = {2003},
booktitle = {Proceedings of the Twentieth International Conference on International Conference on Machine Learning},
pages = {928–935},
numpages = {8},
location = {Washington, DC, USA},
series = {ICML'03}
}

\newpage
\appendix

\tableofcontents

\section{Proofs for Section~\ref{sec:full-info}: Proof of Theorem~\ref{thm:main}}
\label{sec:analysis}

In this section, we provide a complete proof of Theorem~\ref{thm:main}. The proof strategy consist in showing that the OGD iterates mapped to $\Y$ via $q$ and the OMD iterates are close to each other in the hidden space~$\Y$ after a single update step starting from the same initial point. Then we can view the OGD update as a perturbed version of OMD, and combine it with the fact that the OMD algorithm can tolerate bounded noise per trial. 

\subsection{Proof of Lemma~\ref{prop:dist-minimizers-eta2-main}}
\label{appx:proof-main-lemma}

We begin with the following key lemma showing that the updates $y_{t+1}$ and $z_{t+1} = q(x_{t+1})$ created by OMD and OGD respectively, are close to each other when starting from the same initial point $y_t= z_t = q(x_t)$. We show that the two iterates are minimizers of approximately the same strongly-convex objective, hence the minimizers must be close. 
In preparation of the bandit setting, we prove a general result that holds pathwise when we use any bounded vector instead of the exact gradients of both the original loss function $\ell_t$ and the hidden function $h_t.$ 

\begin{tcolorbox}[colframe=white!, top=2pt,left=2pt,right=2pt,bottom=2pt] 
\begin{proposition}\label{prop:dist-minimizers-eta2}
Suppose Assumptions~\ref{as:reparam}-\ref{as:boundedness-grad-bregman} hold and assume $y_t= q(x_t)$.
Consider the following OMD and OGD-like update rules: 
\begin{align}
y_{t+1}&=\argmin_{y\in \Y} \tilde{g}_t^{\top}(y-y_t) +\frac{1}{\eta} D_R(y\|y_t)\,, \label{eq:omd}\\
x_{t+1}&=\argmin_{x\in \X} g_t^{\top}(x-x_t) +\frac{1}{2\eta} \|x-x_t\|_2^2 \label{eq:ogd}\,, 
\end{align}
where $\tilde{g}_t = J_q(x_t)^{-\top} g_t$ and $g_t \in \mathbb{R}^d$ is any vector\footnote{$\tilde{g}_t$ is used in the analysis but never actually computed, in contrast to $g_t$ which is computed and used in BOGD. Note that we suppose that we do not have access to the parameterization $q$.}. If in addition there exists $\hat{G}_F > 0$ such that $\|g_t\| \leq \hat{G}_F$ for any $t$, then,  
\begin{equation}
\label{eq:approx-ineq}
    \|y_{t+1}-q(x_{t+1})\|_2 \leq G^5 (5\hat{G}_F^2 + \hat{G}_F^3 \eta) \eta^2\,.
\end{equation} 
\end{proposition}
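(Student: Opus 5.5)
The plan follows the strategy outlined in Section~\ref{subsec:full-info-proof-sketch}: first localize both iterates, then write both updates as perturbed minimizers of a common quadratic model, and finally compare their first-order optimality conditions.

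\textbf{Localization.} Write $z_{t+1}:=q(x_{t+1})$. By nonexpansiveness of the Euclidean projection, $\|x_{t+1}-x_t\|\le \eta\hat G_F$. Since $q$ is $G$-Lipschitz, this gives $\|z_{t+1}-y_t\|\le G\hat G_F\eta$. For the OMD iterate, use the $1$-strong convexity of $R$ together with the optimality condition of \eqref{eq:omd}. Testing against $y_t$ gives $\|y_{t+1}-y_t\|\le \eta\|\tilde g_t\|$. We have $\|\tilde g_t\|=\|J_q(x_t)^{-\top}g_t\|\le G\hat G_F$, because $J_q^{-1}=J_{q^{-1}}$ and the first derivative of $q^{-1}$ is bounded by $G$. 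Let $\Y_t:=\Y\cap\{y:\|y-y_t\|\le c\,G\hat G_F\eta\}$ for a suitable constant $c$. This set is convex, and by convexity of the two problems (OMD is convex in $y$; OGD is convex in $x$) each iterate is also the minimizer of its problem restricted to the local set. For OGD the restriction is taken in $x$-space. I plan to pass to the $y$-variable via $x=q^{-1}(y)$ only at the level of optimality conditions, where $\X$ and $\Y$ correspond under the diffeomorphism.

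\textbf{Common quadratic model and remainders.} Set $\Phi_t(y)=\tilde g_t^\top(y-y_t)+\frac{1}{2\eta}\|y-y_t\|^2_{\nabla^2R(y_t)}$. For OMD, $\varepsilon^{\rm OMD}_t(y)=\frac1\eta\big(D_R(y\|y_t)-\frac12\|y-y_t\|^2_{\nabla^2R(y_t)}\big)$. Because the third derivatives of $R$ are bounded by $G$, we get $\|\nabla\varepsilon^{\rm OMD}_t(y)\|\le \frac{G}{2\eta}\|y-y_t\|^2=O(G^3\hat G_F^2\eta)$ on $\Y_t$. For OGD, substitute $x=q^{-1}(y)$ and expand $q^{-1}(y)-x_t=J_{q^{-1}}(y_t)(y-y_t)+\rho(y)$ with $\|\rho\|\le \frac G2\|y-y_t\|^2$ and $\|\nabla\rho\|\le G\|y-y_t\|$. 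Then $g_t^\top(q^{-1}(y)-x_t)=\tilde g_t^\top(y-y_t)+g_t^\top\rho(y)$. Assumption~\ref{as:reparam} gives $J_{q^{-1}}^\top J_{q^{-1}}=\nabla^2R(y_t)$, which yields $\frac1{2\eta}\|q^{-1}(y)-x_t\|^2=\frac1{2\eta}\|y-y_t\|^2_{\nabla^2R(y_t)}+\frac1\eta\langle J_{q^{-1}}(y_t)(y-y_t),\rho\rangle+\frac1{2\eta}\|\rho\|^2$. The remainder $\varepsilon^{\rm OGD}_t$ collects the extra terms. Its gradient on $\Y_t$ is bounded by $G\hat G_F\|y-y_t\|+\frac{C G^2}{\eta}\|y-y_t\|^2+\frac{G^2}{\eta}\|y-y_t\|^3$, which is $O(G^4\hat G_F^2\eta)+O(G^5\hat G_F^3\eta^2)$. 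This is where the $\hat G_F^3\eta$ term in \eqref{eq:approx-ineq} originates.

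\textbf{Comparing minimizers.} The first-order conditions read $0\in\nabla\Phi_t(y_{t+1})+\nabla\varepsilon^{\rm OMD}_t(y_{t+1})+N(y_{t+1})$ and $0\in\nabla\Phi_t(z_{t+1})+\nabla\varepsilon^{\rm OGD}_t(z_{t+1})+N(z_{t+1})$, with $N$ the normal cone of $\Y$. Subtract the two conditions, take the inner product with $y_{t+1}-z_{t+1}$, and use monotonicity of the normal cone together with $\frac1\eta$-strong convexity of $\Phi_t$ (recall $\nabla^2R\succeq I$). This gives $\|y_{t+1}-z_{t+1}\|\le\eta\|\nabla\varepsilon^{\rm OGD}_t(z_{t+1})-\nabla\varepsilon^{\rm OMD}_t(y_{t+1})\|$. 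Inserting the two gradient bounds and tracking constants should give $G^5(5\hat G_F^2+\hat G_F^3\eta)\eta^2$.

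\textbf{Main obstacle.} I expect the hardest step to be justifying the OGD optimality condition in $y$-coordinates. The constraint $x\in\X$ becomes $y\in\Y$, but the transformed objective $y\mapsto g_t^\top(q^{-1}(y)-x_t)+\frac1{2\eta}\|q^{-1}(y)-x_t\|^2$ need not be convex. So I would not argue via minimization. Instead, I would transport the KKT condition: $-(g_t+\frac1\eta(x_{t+1}-x_t))\in N_\X(x_{t+1})$, and multiplying by $J_q(x_{t+1})^{-\top}$ yields the $y$-space stationarity equation. The difficulty is that $N_\X$ does not map to $N_\Y$ under a nonlinear diffeomorphism when the boundary is active. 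I would first handle this by observing that the vector mapped into the normal cone is itself $O(\hat G_F)$. Any mismatch between the transported cone and $N_\Y(z_{t+1})$ would then have to be absorbed as an additional $O(\eta)$ gradient perturbation. Failing that, I would assume the interior case, as is implicit in the paper's sketch.
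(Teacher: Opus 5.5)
Your proposal follows the paper's proof essentially step for step. Both iterates are localized near $y_t$, and both updates are written as minimizers of the common quadratic model $\Phi_t$, whose second-order terms coincide by Assumption~\ref{as:reparam}, plus Taylor remainders of $R$ and $q^{-1}$. The first-order conditions are then subtracted, using $\frac{1}{\eta}$-strong convexity of $\Phi_t$ and monotonicity of the normal cone. Your constants also work: with $C=3/2$, and using your slightly sharper OMD radius $\eta\|\tilde g_t\|$, the bounds add up to at most $G^5(3\hat G_F^2+\hat G_F^3\eta)\eta^2$. The one weak point is the step you call the main obstacle. You leave it unresolved, and your fallback to the interior case would prove a strictly weaker statement. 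The paper makes no interior assumption; it works with normal cones of the convex set $\mathcal{Y}_t$.

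Your diagnosis of that step is mistaken: first-order \emph{necessary} optimality conditions require convexity of the feasible set, not of the objective. Let $\psi_t(x):=g_t^\top(x-x_t)+\frac{1}{2\eta}\|x-x_t\|_2^2$. Since $q$ is a bijection of $\mathcal{X}$ onto $\mathcal{Y}$, the point $z_{t+1}=q(x_{t+1})$ minimizes the differentiable function $\psi_t\circ q^{-1}=\Phi_t+\varepsilon_t^{\mathrm{OGD}}$ over $\mathcal{Y}$. Because $\mathcal{Y}$ is convex, for each $y\in\mathcal{Y}$ the segment from $z_{t+1}$ to $y$ is feasible. The one-sided derivative at $z_{t+1}$ along it is then nonnegative, i.e.\ $\langle\nabla(\Phi_t+\varepsilon_t^{\mathrm{OGD}})(z_{t+1}),y-z_{t+1}\rangle\ge 0$, which is exactly $-\nabla(\Phi_t+\varepsilon_t^{\mathrm{OGD}})(z_{t+1})\in N_{\mathcal{Y}}(z_{t+1})$. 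This is precisely the inclusion the paper uses.

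Your transport route also closes exactly, contrary to your concern. For convex $\mathcal{X},\mathcal{Y}$ and a $C^1$ diffeomorphism, pushing sequences through $q$ and $q^{-1}$ gives $T_{\mathcal{Y}}(q(x))=J_q(x)\,T_{\mathcal{X}}(x)$ for the tangent cones. Taking polars yields $N_{\mathcal{Y}}(q(x))=J_q(x)^{-\top}N_{\mathcal{X}}(x)$. So multiplying the KKT inclusion $-\nabla\psi_t(x_{t+1})\in N_{\mathcal{X}}(x_{t+1})$ by $J_q(x_{t+1})^{-\top}$ lands exactly in $N_{\mathcal{Y}}(z_{t+1})$, with no mismatch to absorb.

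The absorption patch would not have saved the $\eta^2$ rate anyway. A normal-cone discrepancy enters your final inequality multiplied only by $\eta$, so it would need to be $O(\eta)$. Your only estimate on it is $O(\hat G_F)$.
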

\end{tcolorbox}

Before providing the proof, we state an immediate corollary of this result in the exact gradient information setting of Theorem~\ref{thm:main}. 

\begin{tcolorbox}[colframe=white!, top=2pt,left=2pt,right=2pt,bottom=2pt] 
\begin{corollary}[Proposition~\ref{prop:dist-minimizers-eta2} in the exact gradient setting]
\label{cor:approx-lemma-det}
In Proposition~\ref{prop:dist-minimizers-eta2}, set $g_t = \nabla \ell_t(x_t)$ and $\tilde{g}_t = \nabla h_t(z_t)$ in \eqref{eq:omd}-\eqref{eq:ogd}, then the approximation inequality \eqref{eq:approx-ineq} holds with $\hat{G}_F = G_F$ where $G_F$ is the uniform loss gradient ($\nabla \ell_t$) bound in Assumption~\ref{as:boundedness-grad-bregman}, i.e., 
\begin{equation*}
\|y_{t+1}-q(x_{t+1})\|_2 \leq 6 G^5 G_F^3 \eta^2\,.
\end{equation*}
\end{corollary}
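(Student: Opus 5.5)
This is a direct specialization of Proposition~\ref{prop:dist-minimizers-eta2}, so I would not redo any perturbation analysis. I would instantiate the proposition with $g_t=\nabla\ell_t(x_t)$ and then only do two bookkeeping checks.

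\textbf{Step 1: the hidden-space vector is the true hidden gradient.} With $z_t=q(x_t)$, the chain rule applied to $\ell_t=h_t\circ q$ gives $\nabla\ell_t(x_t)=J_q(x_t)^{\top}\nabla h_t(z_t)$. The Jacobian $J_q(x_t)$ is invertible because $J_qJ_q^\top=[\nabla^2R]^{-1}$ is positive definite by Assumption~\ref{as:reparam}. Hence
\[
J_q(x_t)^{-\top}g_t=\nabla h_t(z_t)=\tilde g_t .
\]
So the pair of updates \eqref{eq:omd}--\eqref{eq:ogd} is exactly one OMD step on $h_t$ and one OGD step on $\ell_t$. For the OGD step I use that the proximal form \eqref{eq:ogd} equals $\Pi_{\X}(x_t-\eta g_t)$, which is \eqref{eq:ogd-main}. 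The coupling hypothesis $y_t=q(x_t)$ is the one assumed in the corollary.

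\textbf{Step 2: the gradient bound and the constant.} Assumption~\ref{as:boundedness-grad-bregman} gives $\|g_t\|\le G_F$, so Proposition~\ref{prop:dist-minimizers-eta2} applies with $\hat G_F=G_F$ and yields
\[
\|y_{t+1}-q(x_{t+1})\|_2\le G^5\bigl(5G_F^2+G_F^3\eta\bigr)\eta^2 .
\]
To reach the stated form I need $5G_F^2+G_F^3\eta\le 6G_F^3$, which holds when $G_F\ge1$ and $\eta\le1$. Indeed, in that case $5G_F^2\le5G_F^3$ and $G_F^3\eta\le G_F^3$.

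\textbf{Expected obstacle.} The only delicate point is this constant simplification, because the corollary does not state the conditions $G_F\ge1$ and $\eta\le1$. My first choice is to state them explicitly as the regime in which the corollary is used. The stepsize in Theorem~\ref{thm:main} is small in $T$, so $\eta\le1$ is harmless whenever the bound is nontrivial. If $G_F<1$ I would take a different route: the $\eta^2$ rate holds in any case, and $5G_F^2+G_F^3\eta\le 6\max(G_F,1)^3$ when $\eta\le1$, so I would replace $G_F$ by $\max(G_F,1)$. This is harmless because Assumption~\ref{as:boundedness-grad-bregman} remains valid with the larger constant.
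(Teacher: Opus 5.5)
Your proposal is correct and is the argument the paper leaves implicit when it calls this an immediate corollary of Proposition~\ref{prop:dist-minimizers-eta2}. The paper silently uses $5G_F^2+G_F^3\eta\le 6G_F^3$, so making the normalization $G_F\ge 1$ explicit (as the paper does for $G$) is the right repair. The extra condition $\eta\le 1$ can even be dropped: for $\eta\ge 1/2$ the crude bound $\|y_{t+1}-q(x_{t+1})\|\le\|y_{t+1}-y_t\|+\|q(x_{t+1})-y_t\|\le 3\eta G G_F$ already lies below $6G^5G_F^3\eta^2$ once $G,G_F\ge 1$.
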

\end{tcolorbox}

\begin{proof}[Proof of Proposition~\ref{prop:dist-minimizers-eta2}]

Note first that $\|\tilde{g}_t\| \leq \tilde{G}_F := G \hat{G}_F\,.$ 
Both objectives in \eqref{eq:omd}-\eqref{eq:ogd} can be written as the sum of a linear function and a strongly convex function. 
By 1-strong convexity of $R$ and uniform boundedness of $g_t$ by $\hat{G}_F$, we have for any $y \in \Y$, 
\begin{equation*}
\tilde{g}_t^{\top}(y-y_t) +\frac{1}{\eta} D_R(y\|y_t) \geq - \tilde{G}_F \|y-y_t\| + \frac{1}{2\eta} \|y-y_t\|^2\,.
\end{equation*}
Therefore, if $\|y-y_t\|_2>2 \eta \tilde{G}_F$, then the objective takes a positive value which cannot be minimal since $y=y_t$ gives zero objective value in \eqref{eq:omd}. Define the set:  
\begin{equation*}
\Y_{r,y_t}=\K \cap \{y \in \reals^d: \|y-y_t\|_2 \le r\}\,.
\end{equation*}
Then it follows that the OMD iterate in \eqref{eq:omd} satisfies: 
\begin{equation}
\label{eq:y_t+1inY_t}
y_{t+1} \in \Y_t := \Y_{2\eta \tilde{G}_F,y_t}\,. 
\end{equation}
As we want to localize both iterates \eqref{eq:omd}-\eqref{eq:ogd} in a set of size controlled by $\eta$, we now show that we also have $q(x_{t+1}) \in \Y_t$ when $y_t = q(x_t)$. Observe for this that: 
\begin{equation*}
\|q(x_{t+1}) - y_t\| = \|q(x_{t+1}) - q(x_t)\| 
\leq G \|x_{t+1} - x_t \|
\leq \eta G \|g_t\| 
\leq \eta G \hat{G}_F 
\leq 2 \eta G \hat{G}_F 
= 2 \eta \tilde{G}_F\,. 
\end{equation*}
We now write both update rules under a similar form. 
Define the function $\Phi_t: \Y \to \mathbb{R}$ by: 
\begin{equation}
\label{eq:Phi_t}
\Phi_t(y) := \tilde{g}_t^{\top}(y - y_t) + \frac{1}{2\eta} \|y-y_t\|_{\nabla^2 R(y_t)}^2\,.
\end{equation}
Note that $\Phi_t$ is $\frac{1}{\eta}$-strongly convex since $\nabla^2 \Phi_t(x) = \frac{1}{\eta} \nabla^2 R(x_t) \succeq \frac{1}{\eta} I$ as $R$ is supposed to be $1$-strongly convex.  

\begin{tcolorbox}[colframe=white!, top=2pt,left=2pt,right=2pt,bottom=2pt] 
\begin{lemma}
\label{lem:omd-rewritten}
For any $t \geq 1,$ the iterates of \eqref{eq:omd-main}-\eqref{eq:omd} can be rewritten as follows:
\begin{equation*}
y_{t+1} =\argmin_{y\in \Y_t} \left\{ \Phi_t(y) + \varepsilon_t^{\mathrm{OMD}}(y) \right\}\,, \quad \varepsilon_t^{\mathrm{OMD}}(y) := \frac{1}{\eta} \varepsilon_t(y)\,,
\end{equation*}
where $\varepsilon_t(y):= D_R(y\|y_t) - \frac12 \|y-y_t\|_{\nabla^2 R(y_t)}^2\,.$ Moreover, for $y \in \Y_t$, $|\varepsilon_t(y)| \leq \frac{4}{3} G \tilde{G}_F^3 \eta^3\,.$
\end{lemma}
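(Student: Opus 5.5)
The plan has two parts: first identify the two minimization problems, then bound the Taylor remainder of the Bregman divergence.

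\emph{Rewriting the OMD problem.} By definition of $\varepsilon_t$ we have, for every $y\in\Y$,
\[
\tilde g_t^\top(y-y_t)+\tfrac1\eta D_R(y\|y_t)=\Phi_t(y)+\tfrac1\eta\varepsilon_t(y).
\]
So the objective in \eqref{eq:omd} coincides with $\Phi_t+\varepsilon_t^{\mathrm{OMD}}$ on all of $\Y$. The objective is strictly convex and coercive, since $D_R(\cdot\|y_t)$ is $1$-strongly convex. We showed in \eqref{eq:y_t+1inY_t} that its unique minimizer over $\Y$ lies in $\Y_t\subseteq\Y$. Hence it is also the minimizer of the same function over the smaller set $\Y_t$, which gives the claimed $\argmin$ representation. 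Here we use $\K=\Y$ in the definition of $\Y_{r,y_t}$, and we note that minimizing over a subset that contains the global minimizer returns that minimizer.

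\emph{Bounding the remainder.} Fix $y\in\Y_t$ and set $\phi(s):=R(y_t+s(y-y_t))$ for $s\in[0,1]$. This is well defined because $\Y$ is convex, and $\phi$ is $C^3$ on the segment; it is $C^2$ by Assumption~\ref{as:reparam}, and the third derivative is bounded by Assumption~\ref{as:reg-R-reparam-q-smoothness}(ii), which is implicitly assumed to exist. We have
\[
D_R(y\|y_t)=\phi(1)-\phi(0)-\phi'(0), \qquad \tfrac12\|y-y_t\|^2_{\nabla^2R(y_t)}=\tfrac12\phi''(0),
\]
so $\varepsilon_t(y)$ is exactly the third-order Taylor remainder of $\phi$ at $0$. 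In integral form it equals $\int_0^1\frac{(1-s)^2}{2}\phi'''(s)\,ds$, and
\[
\phi'''(s)=\nabla^3R(y_t+s(y-y_t))[y-y_t,y-y_t,y-y_t].
\]
Using the bound $G$ on the third derivative, we get $|\varepsilon_t(y)|\le \frac{G}{6}\|y-y_t\|^3$. On $\Y_t$ we have $\|y-y_t\|\le 2\eta\tilde G_F$, hence
\[
|\varepsilon_t(y)|\le \tfrac{G}{6}\,8\,\tilde G_F^3\eta^3=\tfrac43 G\tilde G_F^3\eta^3,
\]
which matches the stated constant exactly.

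The only delicate point is the first part: the localization. It must ensure that restricting to $\Y_t$ does not change the minimizer, and this uses uniqueness of the minimizer, which follows from strong convexity. We also need that the segment from $y_t$ to $y$ stays in the domain where the derivative bounds hold, which is guaranteed by convexity of $\Y$. Everything else is a routine Taylor estimate.
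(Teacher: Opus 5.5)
Your proposal is correct and takes the same route as the paper's proof. Both localize the OMD minimizer in $\Y_t$ via \eqref{eq:y_t+1inY_t}, identify $\varepsilon_t$ as the third-order integral Taylor remainder of $R$ along the segment, and bound it by $\frac{G}{6}\|y-y_t\|^3\le \frac43 G\tilde G_F^3\eta^3$; your appeal to uniqueness via strong convexity and your reduction to the scalar function $\phi$ are slightly more explicit versions of those same steps.
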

\end{tcolorbox}

\begin{proof}
We have proved in \eqref{eq:y_t+1inY_t} that $y_{t+1} \in \Y_t$.  Hence, we can rewrite \eqref{eq:omd} by replacing $\Y$ by $\Y_t$: 
\begin{equation}
\label{eq:intermediate-x-rewritten}
y_{t+1} =\argmin_{y\in \Y_t} \tilde{g}_t^{\top}(y-y_t) +\frac{1}{\eta} D_R(y\|y_t). 
\end{equation}

Let $h_t :=y-y_t$. By Taylor's theorem with integral remainder, we have: 
\begin{align}
\label{eq:bregman-approx}
D_R(y\|y_t)
&=
R(y)-R(y_t)-\nabla R(y_t)^\top (y-y_t) \nonumber\\
&=
\frac12 h_t^\top \nabla^2 R(y_t)h_t
+
\varepsilon_t(y) \nonumber\\
&=
\frac12 \|h_t\|_{\nabla^2 R(y_t)}^2
+
\varepsilon_t(y),
\end{align}
where $\varepsilon_t(y)$ is the integral remainder defined as follows: 
\begin{equation*}
\varepsilon_t(y)
:= 
\frac12
\int_0^1
(1-s)^2
\nabla^3 R(y_t+s h_t)[h_t,h_t,h_t]\,ds\,,
\end{equation*}
where $\nabla^3 R$ is the third-order derivative tensor of $R$. 
Since this third-order is supposed to be uniformly bounded by $G$ over $z \in \Y$ by Assumption~\ref{as:reg-R-reparam-q-smoothness}, we have:
\begin{equation*}
|\varepsilon_t(y)|
\le
\frac{G}{6}\|h_t\|_2^3.
\end{equation*}
In particular, for \(y\in \mathcal Y_t\), i.e.
\(\|y-y_t\|_2\le 2 \eta \tilde{G}_F\), we obtain the following estimate: 
\begin{equation*}
|\varepsilon_t(y)| \leq \frac{G}{6} (2 \eta \tilde{G}_F)^3 = \frac{4}{3} G \tilde{G}_F^3 \eta^3\,.
\end{equation*}
Combining the Bregman approximation of \eqref{eq:bregman-approx} with \eqref{eq:intermediate-x-rewritten}, we obtain the desired identity using the definition of the strongly-convex function~$\Phi_t$ introduced in \eqref{eq:Phi_t}. 
\end{proof}

In the following lemma, we rewrite the OGD iterates mapped to $\Y$ via $q$ under a similar form to the update rule of OMD obtained in Lemma~\ref{lem:omd-rewritten}. 

\begin{tcolorbox}[colframe=white!, top=2pt,left=2pt,right=2pt,bottom=2pt] 
\begin{lemma}
\label{lem:ogd-rewritten}
If $(x_t)$ is the sequence of \eqref{eq:ogd-main} iterates and $y_t = q(x_t)$, then for any $t \geq 1,$
\begin{align}
\label{eq:q(u_t+1)}
z_{t+1} = q(x_{t+1}) &=\argmin_{y\in \Y_t} \left\{ \Phi_t(y) + \varepsilon_t^{\mathrm{OGD}}(y) \right\}\,,\\
\varepsilon_t^{\mathrm{OGD}}(y) &:= \tilde\varepsilon_t(y)
+ \frac{1}{\eta} \left(\frac12\|\varepsilon_t^q(y)\|_2^2 + \big\langle J_q(x_t)^{-1}(y-y_t),\,\varepsilon_t^q(y)\big\rangle
\right)\,,
\end{align}
where $\tilde\varepsilon_t(y) := g_t^\top\varepsilon_t^q(y)$ and $\varepsilon_t^q(y) := q^{-1}(y)-q^{-1}(y_t)- J_q^{-1}(x_t)(y-y_t)\,.$ Moreover, we have $\|\varepsilon_t^q(y)\|_2 \leq 2 G \tilde{G}_F^2 \eta^2$ and 
$|\tilde\varepsilon_t(y)| \leq 2 \tilde{G}_F^3 \eta^2$
\end{lemma}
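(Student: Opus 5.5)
The plan is to change variables $y = q(x)$ in the OGD objective \eqref{eq:ogd} and then Taylor-expand $q^{-1}$ around $y_t$.

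\textbf{Localization.} Since $q$ is a bijection $\X\to\Y$, minimizing over $x\in\X$ is the same as minimizing over $y\in\Y$ with $x=q^{-1}(y)$. We already showed that $q(x_{t+1})\in\Y_t$. Restricting the minimization to $\Y_t$ therefore does not change the minimizer, because $q(x_{t+1})$ is feasible there and is the global minimizer.

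\textbf{Rewriting the objective.} Write $q^{-1}(y)-x_t = J_q(x_t)^{-1}(y-y_t)+\varepsilon_t^q(y)$. This uses $q^{-1}(y_t)=x_t$ and $J_{q^{-1}}(y_t)=J_q(x_t)^{-1}$ by the inverse function theorem. Then
\begin{equation*}
g_t^\top(q^{-1}(y)-x_t) = \tilde g_t^\top(y-y_t) + \tilde\varepsilon_t(y),
\end{equation*}
because $g_t^\top J_q(x_t)^{-1} = (J_q(x_t)^{-\top}g_t)^\top = \tilde g_t^\top$. Expanding the square gives
\begin{equation*}
\frac{1}{2\eta}\|q^{-1}(y)-x_t\|^2 = \frac{1}{2\eta}(y-y_t)^\top J_q(x_t)^{-\top}J_q(x_t)^{-1}(y-y_t) + \frac1\eta\Big(\tfrac12\|\varepsilon^q_t\|^2+\langle J_q(x_t)^{-1}(y-y_t),\varepsilon_t^q\rangle\Big).
\end{equation*}
By Assumption~\ref{as:reparam}, $J_q^{-\top}J_q^{-1} = (J_qJ_q^\top)^{-1} = \nabla^2R(y_t)$. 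The quadratic term is therefore $\frac{1}{2\eta}\|y-y_t\|^2_{\nabla^2R(y_t)}$. Collecting terms yields exactly $\Phi_t+\varepsilon_t^{\rm OGD}$. The constant term $g_t^\top(x_t - x_t)=0$ causes no issue.

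\textbf{Bounds.} The second derivative of $q^{-1}$ is bounded by $G$ (Assumption~\ref{as:reg-R-reparam-q-smoothness}(i)). Taylor's theorem with integral remainder along the segment $[y_t,y]\subset\Y$ (which lies in $\Y$ by convexity) gives $\|\varepsilon_t^q(y)\|\le \frac G2\|y-y_t\|^2$. For $y\in\Y_t$ we have $\|y-y_t\|\le 2\eta\tilde G_F$, so $\|\varepsilon^q_t(y)\|\le 2G\tilde G_F^2\eta^2$.

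For $\tilde\varepsilon_t$, use $|g_t^\top\varepsilon^q_t|\le \hat G_F\cdot 2G\tilde G_F^2\eta^2$. Since $G\hat G_F=\tilde G_F$, this equals $2\tilde G_F^3\eta^2$.

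\textbf{Main obstacle.} The only delicate point is the constant bookkeeping, in particular absorbing $G\hat G_F$ into $\tilde G_F$. A secondary point is justifying that the Taylor expansion of $q^{-1}$ is valid, which needs the segment to stay in $\Y$; convexity of $\Y$ gives this.
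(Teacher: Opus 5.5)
Your proposal is correct and follows the paper's proof essentially step for step: you change variables $y=q(x)$, Taylor-expand $q^{-1}$ at $y_t$ with remainder $\varepsilon_t^q$, and rewrite the linear term using $g_t^\top J_q(x_t)^{-1}=\tilde g_t^\top$ and the quadratic term using Assumption~\ref{as:reparam}; the bounds $\|\varepsilon_t^q(y)\|\le \frac{G}{2}\|y-y_t\|^2\le 2G\tilde G_F^2\eta^2$ and $|\tilde\varepsilon_t(y)|\le \hat G_F\cdot 2G\tilde G_F^2\eta^2=2\tilde G_F^3\eta^2$ follow as in the paper. Your explicit justifications are details the paper leaves implicit: restricting the argmin to $\Y_t$ is harmless because $q(x_{t+1})\in\Y_t$ is the unique minimizer over $\Y$, and the Taylor segment stays in $\Y$ by convexity.
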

\end{tcolorbox}

\begin{proof}
Recall that the \eqref{eq:ogd-main} update \eqref{eq:ogd} in the original space~$\X$ can be written as follows: 
\begin{equation}
\label{eq:update-rule-OGD}
x_{t+1}
=
\argmin_{x\in \X}
\left\{
g_t^\top(x-x_t)+\frac{1}{2\eta}\|x-x_t\|_2^2
\right\}.
\end{equation}

Recall from \eqref{eq:Phi_t} that for any $y \in \Y$: 
\begin{equation*}
\Phi_t(y) := \tilde{g}_t^{\top}(y - y_t) + \frac{1}{2\eta} \|y-y_t\|_{\nabla^2 R(y_t)}^2\,.
\end{equation*}
To prove \eqref{eq:q(u_t+1)}, we relate the linear term $g_t^\top (x-x_t)$ to $\tilde{g}_t^\top(y-y_t)$ and the quadratic term $\frac{1}{2\eta}\|x-x_t\|_2^2$ to $\frac{1}{2\eta} \|y-y_t\|_{\nabla^2 R(y_t)}^2$ respectively in \textbf{(i)} and \textbf{(ii)} below. 

\noindent\textbf{(i) Relating $g_t^\top (x-x_t)$ to $ \tilde{g}_t^\top(y-y_t).$}
Recalling that $g_t = J_q(x_t)^{\top} \tilde{g}_t$, we have:  
\begin{align}
\label{eq:inner-prod-tilde-f}
g_t^\top (x-x_t)
&= \tilde{g}_t^\top J_q(x_t)(x-x_t) \nonumber\\
&= \tilde{g}_t^\top J_q(x_t)(q^{-1}(y) - q^{-1}(y_t))\,,
\end{align}
where the last equality follows from using the identities $y = q(x)$ and $y_t = q(x_t)\,.$ 

Under Assumption~\ref{as:reg-R-reparam-q-smoothness}, applying Taylor's theorem to $q^{-1}$ around $y_t=q(x_t)$ gives, for $y=q(x)$,
\begin{equation}
\label{eq:taylor-q-1}
q^{-1}(y)-q^{-1}(y_t)
=
J_{q^{-1}}(y_t)(y-y_t)
+
\varepsilon_t^q(y)\,,
\end{equation}
where the remainder is given by: 
\begin{equation}
\label{eq:remainder-eps-t-q}
\varepsilon_t^q(y) 
= \int_0^1 (1-s)\, (y-y_t)^{\top}
\nabla^2 q^{-1}(y_t+s(y-y_t))
(y-y_t)\,ds.
\end{equation}
Since \(J_{q^{-1}}(y_t)=J_q(x_t)^{-1}\), it follows that: 
\begin{equation}
\label{eq:taylor-q-2}
q^{-1}(y)-q^{-1}(y_t) = J_q(x_t)^{-1}(y-y_t) + \varepsilon_t^q(y).
\end{equation}
Since the second derivative of \(q^{-1}\) is bounded by \(G\), we have:
\begin{equation*}
\|\varepsilon_t^q(y)\|_2
\le
\frac{G}{2}\|y-y_t\|_2^2.
\end{equation*}
In particular, for $y\in \Y_t = \mathcal Y_{2\eta \tilde{G}_F,y_t}$, i.e. $y \in \Y$ s.t.
$\|y-y_t\|_2\le 2\eta \tilde{G}_F$, we obtain: 
\begin{equation}
\label{eq:estimate-vareps-tqy}
\|\varepsilon_t^q(y)\|_2 \leq \frac{G}{2} (2\eta \tilde{G}_F)^2 = 2 G \tilde{G}_F^2 \eta^2\,.
\end{equation}

Using \eqref{eq:taylor-q-2} in \eqref{eq:inner-prod-tilde-f} yields:  
\begin{align}
\label{eq:term-inner-prod-rewriting}
g_t^\top (x-x_t)
&= \tilde{g}_t^\top J_q(x_t)(q^{-1}(y) - q^{-1}(y_t)) \nonumber\\
&= \tilde{g}_t^\top(y-y_t) + \tilde{g}_t^\top J_q(x_t)\,\varepsilon_t^q(y) \nonumber\\
&= \tilde{g}_t^\top(y-y_t) + \tilde\varepsilon_t(y),
\end{align}
where 
$\tilde\varepsilon_t(y) :=
\tilde{g}_t^\top J_q(x_t)\,\varepsilon_t^q(y)
= g_t^{\top} \varepsilon_t^q(y)$. Therefore, using \eqref{eq:estimate-vareps-tqy}, we have for any $y \in \Y_t,$ 
\begin{equation*}
|\tilde\varepsilon_t(y)| \leq \|g_t\| \cdot \|\varepsilon_t^q(y)\| \leq G (2\hat{G}_F \tilde{G}_F^2 \eta^2) = 2 \tilde{G}_F^3 \eta^2\,,
\end{equation*}
where the last equality stems from recalling that $\tilde{G}_F = G \hat{G}_F\,.$ 

\noindent\textbf{(ii) Relating $\frac{1}{2\eta}\|x-x_t\|_2^2$ to $\frac{1}{2\eta} \|y-y_t\|_{\nabla^2 R(y_t)}^2.$} For any $x \in \X$, $y = q(x),$ given that $y_t = q(x_t)$,  
\begin{align}
\label{eq:u-u_t-square}
\frac{1}{2\eta}\|x-x_t\|_2^2
&= \frac{1}{2\eta}\|q^{-1}(y)-q^{-1}(y_t)\|_2^2 \nonumber\\
&= \frac{1}{2\eta}\|J_q(x_t)^{-1}(y-y_t) + \varepsilon_t^q(y)\|_2^2 \nonumber\\
&= \frac{1}{2\eta}\|J_q(x_t)^{-1}(y-y_t)\|_2^2
+ \frac{1}{\eta} \left(\frac12\|\varepsilon_t^q(y)\|_2^2
+\big\langle J_q(x_t)^{-1}(y-y_t),\,\varepsilon_t^q(y)\big\rangle\right).
\end{align}
Using Assumption~\ref{as:reparam} relating $J_q$ and $\nabla^2R$ and the initial coupling $y_t = z_t = q(x_t)$, we have: 
\begin{equation*}
\big[\nabla^2R(y_t)\big]^{-1}=J_q(x_t)J_q(x_t)^\top,
\qquad
\nabla^2R(y_t)=[J_q(x_t)^{\top}]^{-1}J_q(x_t)^{-1}\,.
\end{equation*}
Therefore we can write: 
\begin{align}
\label{eq:Jq-1(x-x_t)}
\|J_q(x_t)^{-1}(y-y_t)\|_2^2
&= (y-y_t)^{\top} [J_q(x_t)^{-1}]^{\top} J_q(x_t)^{-1}(y-y_t) \nonumber\\
&= (y-y_t)^{\top} \nabla^2R(y_t)(y-y_t) \nonumber\\
&= \|y-y_t\|^2_{\nabla^2R(y_t)}.
\end{align}
Combining \eqref{eq:u-u_t-square} and \eqref{eq:Jq-1(x-x_t)} yields: 
\begin{equation*}
\label{eq:term-reg-rewriting}
\frac{1}{2\eta}\|x-x_t\|_2^2 = \frac{1}{2\eta} \|y-y_t\|^2_{\nabla^2R(y_t)} 
+ \frac{1}{\eta} \left(\frac12\|\varepsilon_t^q(y)\|_2^2
+\big\langle J_q(x_t)^{-1}(y-y_t),\,\varepsilon_t^q(y)\big\rangle\right)\,.
\end{equation*}

Given the OGD update rule \eqref{eq:update-rule-OGD}, the change of variable $y = q(x)$ and \eqref{eq:term-inner-prod-rewriting}, \eqref{eq:term-reg-rewriting} proven in \textbf{(i)} and \textbf{(ii)} respectively, we have shown that 
\begin{equation*}
q(x_{t+1})
=
\argmin_{y\in \Y_t}
\left\{
\Phi_t(y)+\varepsilon_t^{\mathrm{OGD}}(y)
\right\},
\end{equation*}
where the error term $\varepsilon_t^{\mathrm{OGD}}(y)$ is defined as follows: 
\begin{equation}
\label{eq:eps-OGD}
\varepsilon_t^{\mathrm{OGD}}(y)
:=
\tilde\varepsilon_t(y)
+
\frac{1}{\eta}
\left(
\frac12\|\varepsilon_t^q(y)\|_2^2
+
\big\langle J_q(x_t)^{-1}(y-y_t),\,\varepsilon_t^q(y)\big\rangle
\right).
\end{equation}
\end{proof}

So far, we have shown in Lemmas~\ref{lem:omd-rewritten} and~\ref{lem:ogd-rewritten} respectively the following similar update rule forms: 
\begin{align}
y_{t+1}
&=
\argmin_{y\in \Y_t}\{\Phi_t(y)+\varepsilon_t^{\mathrm{OMD}}(y)\},
\label{eq:min1}
\\
z_{t+1} = q(x_{t+1})
&=
\argmin_{y\in \Y_t}\{\Phi_t(y)+\varepsilon_t^{\mathrm{OGD}}(y)\}.
\label{eq:min2}
\end{align}
In the next lemma, we use first-order conditions together with strong convexity of the function~$\Phi_t$ to show that the distance between the minimizers can be upperbounded by $\eta$ times the distance between the gradients of the errors in each one of the updates (for OMD and OGD). Note that our proof technique here is different from the strategy adopted in \citet{ghai-lu-hazan22} (Lemma 4) as we do not control the distance between objectives and translate it to the distance between minimizers using strong convexity. 

\begin{tcolorbox}[colframe=white!, top=2pt,left=2pt,right=2pt,bottom=2pt] 
\begin{lemma} 
\label{lem:dist-minimizers-eta-times-norm}
For any $t \geq 1$, if $y_t = q(x_t)$, then 
\begin{equation*}
\|y_{t+1} - q(x_{t+1})\| \le \eta\,
\left\|\nabla\varepsilon_t^{\mathrm{OGD}}(q(x_{t+1})) - \nabla\varepsilon_t^{\mathrm{OMD}}(y_{t+1})\right\|\,.
\end{equation*}
\end{lemma}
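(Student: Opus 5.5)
We will compare the first-order optimality conditions of the two constrained problems \eqref{eq:min1}--\eqref{eq:min2} over the common convex set $\Y_t$, and exploit the $\frac1\eta$-strong convexity of $\Phi_t$. Write $a:=y_{t+1}$ and $b:=q(x_{t+1})$; both lie in $\Y_t$, by \eqref{eq:y_t+1inY_t} and the localization argument preceding \eqref{eq:Phi_t}. $\Y_t$ is convex as the intersection of $\Y$ with a Euclidean ball. The optimality condition for $a$ reads
\[
\langle \nabla\Phi_t(a)+\nabla\varepsilon_t^{\mathrm{OMD}}(a),\, y-a\rangle\ \ge 0\quad \forall y\in\Y_t .
\]
The analogous condition for $b$ holds with $\varepsilon_t^{\mathrm{OGD}}$ in place of $\varepsilon_t^{\mathrm{OMD}}$. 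Equivalently, $-\nabla(\Phi_t+\varepsilon^{\cdot}_t)$ lies in the normal cone $N_{\Y_t}$ at the corresponding point.

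For these conditions to be legitimate, $a$ and $b$ must be minimizers, not merely critical points, and the objectives must be differentiable near $\Y_t$. For OMD, differentiability follows from $R\in C^2$. For OGD, $\varepsilon_t^{\mathrm{OGD}}$ is differentiable because $q^{-1}$ is $C^2$, and $b$ minimizes the pulled-back objective, which is simply the OGD objective after the change of variables $y=q(x)$. A local minimizer of a differentiable function over a convex set satisfies the variational inequality above without any convexity of the perturbation. This is the point to state carefully, since $\Phi_t+\varepsilon_t^{\mathrm{OGD}}$ need not be convex.

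Next, plug $y=b$ into the condition for $a$ and $y=a$ into the condition for $b$, then add the two inequalities. This gives
\[
\langle \nabla\Phi_t(a)-\nabla\Phi_t(b),\, a-b\rangle \ \le\ \langle \nabla\varepsilon_t^{\mathrm{OGD}}(b)-\nabla\varepsilon_t^{\mathrm{OMD}}(a),\, a-b\rangle .
\]
This is exactly the monotonicity of the normal cone. Since $\Phi_t$ is quadratic with Hessian $\frac1\eta\nabla^2R(y_t)\succeq\frac1\eta I$, the left side is at least $\frac1\eta\|a-b\|^2$. Cauchy--Schwarz on the right side and division by $\|a-b\|$ (the case $a=b$ being trivial) yield the claimed bound $\|a-b\|\le\eta\|\nabla\varepsilon_t^{\mathrm{OGD}}(b)-\nabla\varepsilon_t^{\mathrm{OMD}}(a)\|$.

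The main obstacle is the justification of the optimality conditions. The issue is not the algebra but the fact that $\Y_t$ is a restriction of the original domain. One must check that a minimizer over $\Y$ that lies in $\Y_t$ is still a minimizer over $\Y_t$, which is immediate. For OGD, one must also check that $q(x_{t+1})$ minimizes $\Phi_t+\varepsilon_t^{\mathrm{OGD}}$ over $\Y_t$, which is what Lemma~\ref{lem:ogd-rewritten} provides. Given these two facts, the argument is a standard two-line monotonicity computation.
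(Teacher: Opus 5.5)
Your proposal is correct and follows essentially the same route as the paper: first-order optimality conditions for \eqref{eq:min1}--\eqref{eq:min2} over $\Y_t$, the $\frac{1}{\eta}$-strong convexity of $\Phi_t$, and monotonicity of the normal cone, which is exactly your step of cross-substituting and adding the two variational inequalities, followed by Cauchy--Schwarz. You also observe that these first-order conditions are only necessary conditions satisfied by minimizers, so no convexity of $\Phi_t+\varepsilon_t^{\mathrm{OGD}}$ is required; the paper relies on this implicitly without stating it.
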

\end{tcolorbox}

To prove this result, we will need the following technical lemma stating monotonicity of normal cones of nonempty closed convex sets, which is a standard result in convex analysis. 

\begin{lemma}[Monotonicity of the normal cone]
\label{lem:normal-cone-monotonicity}
Let \(K\subset \mathbb R^d\) be a nonempty closed convex set. For any
\(x,y\in K\), let
\[
N_K(x):=\{v\in\mathbb R^d:\langle v,z-x\rangle\le 0
\,, \quad \forall z\in K\}
\]
denote the normal cone of \(K\) at \(x\). Then, for any
\(v_x\in N_K(x)\) and \(v_y\in N_K(y)\),
\[
\langle v_x-v_y,x-y\rangle \ge 0.
\]
\end{lemma}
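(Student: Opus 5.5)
The plan is to apply the definition of the normal cone twice, once at each point, with the other point as the test point, and then add the two inequalities.

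Fix $x,y\in K$, $v_x\in N_K(x)$ and $v_y\in N_K(y)$. Since $y\in K$, the defining inequality of $N_K(x)$ with $z=y$ gives $\langle v_x, y-x\rangle\le 0$. Symmetrically, since $x\in K$, the defining inequality of $N_K(y)$ with $z=x$ gives $\langle v_y, x-y\rangle\le 0$.

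Adding the two inequalities gives
\[
\langle v_x, y-x\rangle+\langle v_y, x-y\rangle = -\langle v_x-v_y, x-y\rangle \le 0,
\]
which is exactly $\langle v_x-v_y,x-y\rangle\ge 0$.

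There is no real obstacle. Convexity and closedness of $K$ are not even used in this direction; they only guarantee that the normal cones are the relevant objects in the first-order optimality conditions used afterwards for Lemma~\ref{lem:dist-minimizers-eta-times-norm}. The only point requiring care is the sign convention, so that the sum is recognized as the negative of the target inner product.
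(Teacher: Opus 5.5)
Your proof is correct and follows essentially the same route as the paper: test the normal-cone inequality at $x$ with $z=y$ and at $y$ with $z=x$, then combine the two inequalities. Your remark that closedness and convexity of $K$ are not used for this implication is also accurate.
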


\begin{proof}
Since \(v_x\in N_K(x)\) and \(y\in K\), the definition of the normal cone gives $\langle v_x,y-x\rangle\le 0,$
or equivalently, $\langle v_x,x-y\rangle\ge 0.$
Similarly, since \(v_y\in N_K(y)\) and \(x\in K\), we have 
$\langle v_y,x-y\rangle\le 0.$
Therefore, it follows from the above inequalities that: 
\begin{equation*}
\langle v_x-v_y,x-y\rangle
=
\langle v_x,x-y\rangle
-
\langle v_y,x-y\rangle
\ge 0\,.
\end{equation*}
\end{proof}

\begin{proof}[Proof of Lemma~\ref{lem:dist-minimizers-eta-times-norm}]
Let $\delta_{t+1}:=y_{t+1}-q(x_{t+1})$ as a shorthand notation.  
From first-order optimality conditions for \eqref{eq:min1} and \eqref{eq:min2},
there exist $v_{t+1}^{\mathrm{OMD}} \in N_{\Y_t}(y_{t+1}), v_{t+1}^{\mathrm{OGD}} \in N_{\Y_t}(q(x_{t+1}))$ s.t. 
\begin{align*}
\nabla \Phi_t(y_{t+1})+\nabla \varepsilon_t^{\mathrm{OMD}}(y_{t+1})+v_{t+1}^{\mathrm{OMD}} &= 0\,,\\
\nabla \Phi_t(q(x_{t+1}))+\nabla \varepsilon_t^{\mathrm{OGD}}(q(x_{t+1}))+v_{t+1}^{\mathrm{OGD}} &= 0\,.
\end{align*}
Subtracting both identities and taking inner product with $\delta_{t+1}$ gives
\begin{multline}
\big\langle \nabla\Phi_t(y_{t+1})-\nabla\Phi_t(q(x_{t+1})),\,\delta_{t+1}\big\rangle
+ \big\langle \nabla\varepsilon_t^{\mathrm{OMD}}(y_{t+1})
- \nabla\varepsilon_t^{\mathrm{OGD}}(q(x_{t+1})),\,\delta_{t+1}\big\rangle \\
+ \big\langle v_{t+1}^{\mathrm{OMD}}-v_{t+1}^{\mathrm{OGD}}, \,\delta_{t+1} \big\rangle
=0.
\label{eq:master}
\end{multline}
By $\frac{1}{\eta}$-strong convexity of $\Phi_t$, we have: 
\begin{equation}
\label{eq:sc-phi_t-ineq}
\big\langle \nabla\Phi_t(y_{t+1})-\nabla\Phi_t(q(x_{t+1})), \,\delta_{t+1}\big\rangle
\ge \frac1\eta \|\delta_{t+1}\|^2.
\end{equation}
By monotonicity of normal cones of convex sets (Lemma~\ref{lem:normal-cone-monotonicity} above),
we can also control the term involving the normal cone directions as follows: 
\begin{equation}
\label{eq:normal-cone-monotonicity-ineq}
\big\langle v_{t+1}^{\mathrm{OMD}}-v_{t+1}^{\mathrm{OGD}}, \,\delta_{t+1} \big\rangle \ge 0.
\end{equation}
Plugging \eqref{eq:sc-phi_t-ineq} and \eqref{eq:normal-cone-monotonicity-ineq} into \eqref{eq:master}, we obtain: 
\begin{equation*}
\frac1\eta\|\delta_{t+1}\|^2
\le 
\left\| \nabla\varepsilon_t^{\mathrm{OGD}}(q(x_{t+1})) - \nabla\varepsilon_t^{\mathrm{OMD}}(y_{t+1}) \right\|
\cdot \|\delta_{t+1}\|.
\end{equation*}
Therefore, it follows that the distance~$\delta_{t+1}$ between the two minimizers is upperbounded as follows: 
\[
\|\delta_{t+1}\|
\le
\eta\,
\left\|
\nabla\varepsilon_t^{\mathrm{OGD}}(q(x_{t+1}))
-
\nabla\varepsilon_t^{\mathrm{OMD}}(y_{t+1})
\right\|.
\]
\end{proof}

It remains to bound the last norm in the right-hand side of Lemma~\ref{lem:dist-minimizers-eta-times-norm} and show it is also of order $\eta$. We show that each one of the norms of the gradients $\|\nabla\varepsilon_t^{\mathrm{OMD}}(y_{t+1})\|$ and $\|\nabla\varepsilon_t^{\mathrm{OGD}}(q(x_{t+1}))\|$ are of order $\eta$ in the two following lemmas. 

\begin{tcolorbox}[colframe=white!, top=2pt,left=2pt,right=2pt,bottom=2pt] 
\begin{lemma}
\label{lem:norm-grad-eps-OMD}
$\|\nabla \varepsilon_t^{\mathrm{OMD}}(y_{t+1})\| \leq 2 G \tilde{G}_F^2 \eta\,.$
\end{lemma}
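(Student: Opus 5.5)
We want to bound $\|\nabla\varepsilon_t^{\mathrm{OMD}}(y_{t+1})\|$, where $\varepsilon_t^{\mathrm{OMD}} = \frac1\eta\varepsilon_t$ and $\varepsilon_t(y) = D_R(y\|y_t) - \frac12\|y-y_t\|^2_{\nabla^2R(y_t)}$. It therefore suffices to show $\|\nabla\varepsilon_t(y)\| \le 2G\tilde G_F^2\eta^2$ for every $y\in\Y_t$, since $y_{t+1}\in\Y_t$ by \eqref{eq:y_t+1inY_t}.

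\textbf{Step 1: compute the gradient.} Differentiating directly in $y$ gives
\[
\nabla\varepsilon_t(y)=\nabla R(y)-\nabla R(y_t)-\nabla^2R(y_t)(y-y_t).
\]
The constant and linear-in-$y$ parts of $D_R(y\|y_t)$ contribute $-\nabla R(y_t)$, and the quadratic model contributes $-\nabla^2R(y_t)(y-y_t)$. So $\nabla\varepsilon_t(y)$ is exactly the first-order Taylor remainder of the map $\nabla R$ around $y_t$.

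\textbf{Step 2: write the remainder in integral form.} Set $h=y-y_t$. Since $\Y$ is convex, the segment $[y_t,y]$ lies in $\Y$, so we may write
\[
\nabla R(y)-\nabla R(y_t)-\nabla^2R(y_t)h=\int_0^1\big(\nabla^2R(y_t+sh)-\nabla^2R(y_t)\big)h\,ds .
\]
Expanding once more,
\[
\nabla^2R(y_t+sh)-\nabla^2R(y_t)=\int_0^s\nabla^3R(y_t+uh)[h,\cdot]\,du .
\]
Equivalently, the remainder is $\int_0^1(1-s)\nabla^3R(y_t+sh)[h,h,\cdot]\,ds$.

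\textbf{Step 3: bound it.} Assumption~\ref{as:reg-R-reparam-q-smoothness}(ii) bounds the third derivative of $R$ by $G$ in operator norm. Hence
\[
\|\nabla\varepsilon_t(y)\|\le G\|h\|^2\int_0^1(1-s)\,ds=\tfrac G2\|h\|^2 .
\]
For $y\in\Y_t$ we have $\|h\|\le 2\eta\tilde G_F$, so $\|\nabla\varepsilon_t(y)\|\le \frac G2\cdot4\eta^2\tilde G_F^2=2G\tilde G_F^2\eta^2$. Dividing by $\eta$ gives $\|\nabla\varepsilon_t^{\mathrm{OMD}}(y_{t+1})\|\le 2G\tilde G_F^2\eta$, which is the claim.

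\textbf{Expected sticking points.} There is no real obstacle here. The only points needing care are:
\begin{itemize}
\item interpreting the third-derivative bound as an operator-norm bound on the trilinear form, consistent with its use in Lemma~\ref{lem:omd-rewritten};
\item noting that the gradient in Lemma~\ref{lem:dist-minimizers-eta-times-norm} is the gradient of the smooth function $\varepsilon_t$ on all of $\Y$, evaluated at the point $y_{t+1}\in\Y_t$ (the localization to $\Y_t$ only enters through the normal cone).
\end{itemize}
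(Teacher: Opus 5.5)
Your proposal is correct and follows the paper's proof: you identify $\nabla\varepsilon_t$ as the first-order Taylor remainder of $\nabla R$ at $y_t$, bound it by $\frac{G}{2}\|y_{t+1}-y_t\|^2$ using the third-derivative bound, and then use the localization $\|y_{t+1}-y_t\|\le 2\eta\tilde{G}_F$ before dividing by $\eta$. The only addition is the explicit integral form of the remainder, which spells out the step the paper states in one line.
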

\end{tcolorbox}

\begin{proof}
Recall that $\varepsilon_t^{\mathrm{OMD}}(y)=\frac1\eta \varepsilon_t(y),$
where: 
\begin{equation*}
\varepsilon_t(y)
=
D_R(y\|y_t)-\frac12(y-y_t)^\top \nabla^2R(y_t)(y-y_t)\,.
\end{equation*}
Differentiating the above error function w.r.t. $y$ yields: 
\begin{equation*}
\nabla \varepsilon_t(y)
=
\nabla R(y)-\nabla R(y_t)-\nabla^2R(y_t)(y-y_t).
\end{equation*}
Using third-order smoothness of $R$ (bounded third derivative) as supposed in Assumption~\ref{as:reg-R-reparam-q-smoothness},
\begin{equation}
\label{eq:bound-grad-eps-t}
\|\nabla \varepsilon_t(y_{t+1})\|
\le
\frac{G}{2}\|y_{t+1}-y_t\|^2.
\end{equation}
Since $y_{t+1}\in \Y_t$, we have $\|y_{t+1}-y_t\| \leq 2\eta \tilde{G}_F$ and it follows from \eqref{eq:bound-grad-eps-t} that: 
\begin{equation*}
\|\nabla \varepsilon_t(y_{t+1})\| \leq \frac{G}{2} (2\eta \tilde{G}_F)^2 = 2 G \tilde{G}_F^2 \eta^2\,.
\end{equation*}
Since $\varepsilon_t^{\mathrm{OMD}}(y)=\frac1\eta \varepsilon_t(y),$ we conclude that: 
\begin{equation*}
\|\nabla \varepsilon_t^{\mathrm{OMD}}(y_{t+1})\|
= \frac 1 \eta \|\nabla \varepsilon_t(y_{t+1})\| 
\leq 2 G \tilde{G}_F^2 \eta\,.
\end{equation*}
\end{proof}

\begin{tcolorbox}[colframe=white!, top=2pt,left=2pt,right=2pt,bottom=2pt] 
\begin{lemma}
\label{lem:norm-grad-eps-OGD}
$\|\nabla\varepsilon_t^{\mathrm{OGD}}(q(x_{t+1}))\| \leq G^5 (3\hat{G}_F^2 + \hat{G}_F^3 \eta) \eta.$
\end{lemma}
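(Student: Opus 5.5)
We differentiate the three pieces of $\varepsilon_t^{\mathrm{OGD}}$ in \eqref{eq:eps-OGD} and evaluate them at $y=q(x_{t+1})$. The only facts about this point that we use are $q(x_{t+1})\in\Y_t$, so $\|y-y_t\|\le 2\eta\tilde G_F$, and in fact the sharper $\|y-y_t\|\le \eta G\hat G_F$ established above. Write $\delta y:=y-y_t$ and $A:=J_q(x_t)^{-1}=J_{q^{-1}}(y_t)$, so that $\|A\|\le G$ by Assumption~\ref{as:reg-R-reparam-q-smoothness}(i).

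\textbf{Step 1: the Jacobian of the Taylor remainder.} From \eqref{eq:taylor-q-2}, $J_{\varepsilon^q_t}(y)=J_{q^{-1}}(y)-J_{q^{-1}}(y_t)$. Since the second derivative of $q^{-1}$ is bounded by $G$, this gives
\begin{equation*}
\|J_{\varepsilon^q_t}(y)\|\le G\|\delta y\|.
\end{equation*}
The value of the remainder itself is already controlled: $\|\varepsilon_t^q(y)\|\le \frac G2\|\delta y\|^2$.

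\textbf{Step 2: gradients of the three terms, one at a time.}
\begin{itemize}
\item For the linear term, $\nabla\tilde\varepsilon_t(y)=J_{\varepsilon^q_t}(y)^\top g_t$, so its norm is at most $G\hat G_F\|\delta y\|$. This is $O(\eta)$ with no $1/\eta$ factor in front, which yields a contribution of order $G^2\hat G_F^2\eta$.
\item For the squared term, $\nabla\frac12\|\varepsilon^q_t\|^2=J_{\varepsilon^q_t}^\top\varepsilon^q_t$, of norm at most $\frac{G^2}{2}\|\delta y\|^3$. Dividing by $\eta$ gives order $G^2\|\delta y\|^3/\eta=O(\eta^2)$; this is where the $\hat G_F^3\eta$ term in the claimed bound comes from.
\item For the cross term, $\nabla\langle A\delta y,\varepsilon^q_t(y)\rangle=A^\top\varepsilon^q_t(y)+J_{\varepsilon^q_t}(y)^\top A\,\delta y$. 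Each piece is $O(G^2\|\delta y\|^2)$, so after dividing by $\eta$ we get $O(G^2\|\delta y\|^2/\eta)=O(\eta)$.
\end{itemize}

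\textbf{Step 3: collect.} Substituting $\|\delta y\|\le \eta G\hat G_F$ (or $2\eta G\hat G_F$ where convenient) and using $G>1$ to absorb lower powers of $G$ into $G^5$, we expect a bound of the form $G^5(c_1\hat G_F^2+c_2\hat G_F^3\eta)\eta$ with $c_1\le 3$ and $c_2\le 1$.

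The main obstacle is the constant bookkeeping in the cross term, which is the dominant $O(\eta)$ contribution. If we used the crude radius $2\eta\tilde G_F$, the two cross pieces would give $\frac{G^3}{2}\cdot 4\eta\hat G_F^2+G^3\cdot 4\eta\hat G_F^2$, a constant of about $6$. To reach the stated $3$, we will use the sharper localization $\|q(x_{t+1})-y_t\|\le \eta G\hat G_F$ proven earlier. With it, the cross term contributes $\frac32 G^4\hat G_F^2\eta$ and the linear term contributes $G^2\hat G_F^2\eta$, and the total fits under $3G^5\hat G_F^2\eta$. The cubic term becomes $\frac{G^2}{2}G^3\hat G_F^3\eta^2\le G^5\hat G_F^3\eta^2$. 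If the constants still do not close, we fall back on a larger constant, which only changes numerical factors downstream.
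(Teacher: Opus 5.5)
Your proof is correct and matches the paper's argument step for step. You use the same three-part differentiation of $\varepsilon_t^{\mathrm{OGD}}$, the same bounds $\|J_{\varepsilon_t^q}\|\le G\|y-y_t\|$ and $\|\varepsilon_t^q\|\le \frac{G}{2}\|y-y_t\|^2$, and the same sharper localization $\|q(x_{t+1})-y_t\|\le \eta G\hat G_F$. These give exactly the paper's contributions $G^2\hat G_F^2\eta+\frac{3}{2}G^4\hat G_F^2\eta+\frac{1}{2}G^5\hat G_F^3\eta^2$, whose total is at most $G^5(3\hat G_F^2+\hat G_F^3\eta)\eta$ using $G>1$, so your fallback hedge about the constants is unnecessary.
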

\end{tcolorbox}

\begin{proof}
Recall from \eqref{eq:eps-OGD} that: 
\[
\varepsilon_t^{\mathrm{OGD}}(y)
=
\tilde\varepsilon_t(y)
+
\frac1\eta
\left(
\frac12\|\varepsilon_t^q(y)\|_2^2
+
\big\langle J_q(x_t)^{-1}(y-y_t),\,\varepsilon_t^q(y)\big\rangle
\right).
\]
Differentiating this error term w.r.t. $y$ gives: 
\begin{equation}
\label{eq:grad-eps-OGD}
\nabla\varepsilon_t^{\mathrm{OGD}}(y)
=
\nabla \tilde\varepsilon_t(y)
+
\frac1\eta J_{\varepsilon_t^q}(y)^\top \varepsilon_t^q(y)
+
\frac1\eta\Big(
[J_q(x_t)^{-1}]^\top \varepsilon_t^q(y)
+
J_{\varepsilon_t^q}(y)^\top J_q(x_t)^{-1}(y-y_t)
\Big).
\end{equation}
Since $\tilde\varepsilon_t(y) := g_t^\top \varepsilon_t^q(y)$, we have: 
\begin{equation*}
\nabla \tilde\varepsilon_t(y)
=
g_t^\top J_{\varepsilon_t^q}(y)\,, \quad \|\nabla \tilde\varepsilon_t(y)\| \leq \hat{G}_F \|J_{\varepsilon_t^q}(y)\|\,.
\end{equation*}

We now prove the following estimates: 
\begin{enumerate}[label=(\roman*)]
\item \label{eq:i} $\|q(x_{t+1})-y_t\| \leq  G \hat{G}_F \eta\,,$
\item \label{eq:ii} $\|J_{\varepsilon_t^q}(q(x_{t+1}))\| \leq  G^2 \hat{G}_F \eta,$
\item \label{eq:iii} $\|\varepsilon_t^q(q(x_{t+1}))\| \leq \frac{G^3 \hat{G}_F^2}{2} \eta^2\,.$ 
\end{enumerate}

\noindent\textbf{Proof of \ref{eq:i}.}  We control this first error term as follows: 
\begin{align}
\|q(x_{t+1})-y_t\|
&= \|q(x_{t+1})-q(x_t)\|
&&\hfill \text{(since $y_t = q(x_t)$)}  \nonumber\\
&\leq G \|x_{t+1} - x_t\|
&&\hfill \text{($q$ is $G$-Lipschitz by Assumption~\ref{as:reg-R-reparam-q-smoothness})} \nonumber\\
&= G\|\Pi_{\X_{\delta}}(x_t-\eta g_t) - x_t\|
&&\hfill \text{(by definition of $x_{t+1}$ in \eqref{eq:ogd-main} or BOGD)} \nonumber\\
&= G \|\Pi_{\X_{\delta}}(x_t-\eta g_t) - \Pi_{\X_{\delta}}(x_t)\|
&&\hfill \text{(since $x_t\in\X_{\delta}$, $\Pi_{\X_{\delta}}(x_t)=x_t$)} \nonumber\\
&\leq \eta G \|g_t\| 
&&\hfill \text{(by nonexpansiveness of the projection)} \nonumber\\
&\leq \eta G \hat{G}_F = \eta \tilde{G}_F \,.
&&\hfill \text{(using Assumption~\ref{as:boundedness-grad-bregman})}
\end{align}

\noindent\textbf{Proof of \ref{eq:ii}.} Recall from \eqref{eq:taylor-q-1} that for any $y=q(x), x \in \X\,,$ 
\begin{equation}
\label{eq:eps-t-q-taylor}
\varepsilon_t^q(y) = q^{-1}(y)-q^{-1}(y_t) - J_{q^{-1}}(y_t)(y-y_t)\,.
\end{equation}
Differentiating w.r.t. $y$ yields: 
\begin{equation*}
J_{\varepsilon_t^q}(q(x_{t+1})) = J_{q^{-1}}(q(x_{t+1})) - J_{q^{-1}}(y_t)\,.
\end{equation*}
Using Assumption~\ref{as:reg-R-reparam-q-smoothness}, we obtain: 
\begin{equation*}
\|J_{\varepsilon_t^q}(q(x_{t+1}))\| = \|J_{q^{-1}}(q(x_{t+1})) - J_{q^{-1}}(y_t)\| \leq G \|q(x_{t+1}) - y_t\| \leq \eta G^2 \hat{G}_F\,,
\end{equation*}
where the last step uses the first estimate \ref{eq:i} proved above.

\noindent\textbf{Proof of \ref{eq:iii}.} Recalling \eqref{eq:eps-t-q-taylor} and using boundedness of the second-order derivative of $q^{-1}$ (Assumption~\ref{as:reg-R-reparam-q-smoothness}), we immediately obtain: 
\begin{equation*}
\|\varepsilon_t^q(q(x_{t+1}))\| \leq \frac{G}{2} \cdot \|q(x_{t+1}) - y_t\|^2 \leq \frac{G^3 \hat{G}_F^2}{2} \eta^2\,,
\end{equation*}
where the last estimate follows from using the first proven estimate \ref{eq:i}: $\|q(x_{t+1})-y_t\|\leq  G \hat{G}_F \eta\,.$

We conclude the proof of Lemma~\ref{lem:norm-grad-eps-OGD} by using all the estimates \ref{eq:i} to \ref{eq:iii} to bound the norm of the gradient~$\nabla\varepsilon_t^{\mathrm{OGD}}(x)$ in \eqref{eq:grad-eps-OGD} as follows:
\begin{align}
\label{eq:grad-eps-ogd}
\|\nabla\varepsilon_t^{\mathrm{OGD}}(q(x_{t+1}))\| 
&\leq \hat{G}_F \|J_{\varepsilon_t^q}(q(x_{t+1}))\| 
+ \frac{1}{\eta} \|J_{\varepsilon_t^q}(q(x_{t+1}))\| \cdot \|\varepsilon_t^q(q(x_{t+1}))\| \nonumber\\
&+ \frac{1}{\eta} G \|\varepsilon_t^q(q(x_{t+1}))\| + \frac{1}{\eta} G \|J_{\varepsilon_t^q}(q(x_{t+1}))\| \cdot \|q(x_{t+1})-y_t\| \nonumber\\
&\leq G^2  \hat{G}_F^2 \eta + \frac 12 G^5 \hat{G}_F^3 \eta^2 + \frac 12 G^4 \hat{G}_F^2 \eta + G^4 \hat{G}_F^2 \eta\,\nonumber\\
&\leq G^5 (3\hat{G}_F^2 + \hat{G}_F^3 \eta) \eta\,,
\end{align}
where the last inequality follows from using the assumption $G \geq 1$ which is without loss of generality (otherwise replace the constant by 1). 
\end{proof}

\noindent\textbf{End of Proof of Proposition~\ref{prop:dist-minimizers-eta2}.} 
Using Lemma~\ref{lem:dist-minimizers-eta-times-norm}, we have: 
\begin{equation*}
\|y_{t+1} - q(x_{t+1})\| \le \eta\,
\left\|\nabla\varepsilon_t^{\mathrm{OGD}}(q(x_{t+1})) - \nabla\varepsilon_t^{\mathrm{OMD}}(y_{t+1})\right\|\,.
\end{equation*}
Then it follows from using Lemma~\ref{lem:norm-grad-eps-OMD} and Lemma~\ref{lem:norm-grad-eps-OGD} that: 
\begin{align}
\|y_{t+1}-q(x_{t+1})\| = 
\|\delta_{t+1}\|
&\le
\eta\cdot
\left(
\|\nabla\varepsilon_t^{\mathrm{OGD}}(q(x_{t+1}))\|
+
\|\nabla\varepsilon_t^{\mathrm{OMD}}(y_{t+1})\|
\right)\nonumber\\
&\leq (G^5 (3\hat{G}_F^2 + \hat{G}_F^3 \eta) + 2 G \tilde{G}_F^2) \eta^2 \nonumber\\
&= (G^5 (3\hat{G}_F^2 + \hat{G}_F^3 \eta) + 2 G^3 \hat{G}_F^2)\eta^2\nonumber\\
&\leq G^5 (5\hat{G}_F^2 + \hat{G}_F^3 \eta) \eta^2\,,
\end{align}
which concludes the proof. 
\end{proof}

\subsection{Proof of Theorem~\ref{thm:main} using Lemma~\ref{prop:dist-minimizers-eta2-main}}

The following lemma controls the regret of an approximate OMD algorithm. 
\begin{lemma}[Lemma 6 in \citet{ghai-lu-hazan22}]
\label{lem2} 
Suppose Assumptions~\ref{as:reg-R-reparam-q-smoothness} and ~\ref{as:boundedness-grad-bregman} hold and let Algorithm $\A$ update the sequence $(z_t)$ as follows:
$$
    z_{t+1}=r_{t+1} +\argmin_{y\in \Y} \left\{ \nabla h_t(y_t)^{\top}(y-y_t) +\frac{1}{\eta} D_R(y\|y_t) \right\}\,, 
$$
where $\|r_{t+1}\|_2 \le C_{\eta}$. Then the regret of Algorithm $\A$ is upper bounded as follows: 
$$
R_T(\A) \le \frac{C_{\eta} T G}{\eta} + \frac{D_1}{\eta} +\frac{\eta G^2 T}{2}\,.
$$
\end{lemma}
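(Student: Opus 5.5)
We prove Lemma~\ref{lem2} by running the standard one-step OMD analysis on the unperturbed ghost step and charging the perturbation $r_{t+1}$ to the Bregman potential through its Lipschitz continuity. Fix a comparator $u\in\Y$. Note that in the lemma the ghost point $y_t$ is the algorithm's own iterate $z_t$, consistent with the coupling used in Proposition~\ref{prop:dist-minimizers-eta2}. Write $w_{t+1}$ for the exact OMD step from $z_t$, so that $z_{t+1}=w_{t+1}+r_{t+1}$. By convexity of $h_t$, it suffices to bound $\sum_t \langle \nabla h_t(z_t), z_t-u\rangle$.

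\textbf{One-step inequality for the ghost step.} The first-order optimality condition of the mirror step, $\langle \eta\nabla h_t(z_t)+\nabla R(w_{t+1})-\nabla R(z_t),\,u-w_{t+1}\rangle\ge 0$, combined with the three-point identity for Bregman divergences, gives
\[
\eta\langle \nabla h_t(z_t), z_t-u\rangle \le D_R(u\|z_t)-D_R(u\|w_{t+1}) + \eta\langle \nabla h_t(z_t), z_t-w_{t+1}\rangle - D_R(w_{t+1}\|z_t).
\]
By $1$-strong convexity of $R$ and Young's inequality, the last two terms together are at most $\tfrac{\eta^2}{2}\|\nabla h_t(z_t)\|^2$. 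Bounding $\|\nabla h_t(z_t)\|$ by $G$ yields the $\tfrac{\eta G^2}{2}$ term per round. In the lemma's constants this gradient bound is taken to be $G$; if it is not directly assumed, it follows from the chain rule and the Jacobian bounds, with a constant absorbed into the notation.

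\textbf{Absorbing the perturbation.} Here we use Assumption~\ref{as:reg-R-reparam-q-smoothness}(iii): $w\mapsto D_R(u\|w)$ is $G$-Lipschitz. Hence
\[
-D_R(u\|w_{t+1}) \le -D_R(u\|z_{t+1}) + G\|r_{t+1}\| \le -D_R(u\|z_{t+1}) + G C_\eta.
\]
The per-round inequality now telescopes in the actual iterates:
\[
\eta\langle \nabla h_t(z_t),z_t-u\rangle \le D_R(u\|z_t)-D_R(u\|z_{t+1}) + GC_\eta + \tfrac{\eta^2G^2}{2}.
\]

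\textbf{Summing.} Summing over $t=1,\dots,T$, dropping the final nonpositive term $-D_R(u\|z_{T+1})$, bounding $D_R(u\|z_1)\le D_1$ by Assumption~\ref{as:boundedness-grad-bregman}, and dividing by $\eta$ gives
\[
R_T \le \frac{D_1}{\eta} + \frac{C_\eta T G}{\eta} + \frac{\eta G^2 T}{2}.
\]

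The main subtlety is the perturbation step. The perturbed point $z_{t+1}=w_{t+1}+r_{t+1}$ must lie in $\Y$ for $D_R(u\|z_{t+1})$ to be defined; this holds in our application because $z_{t+1}=q(x_{t+1})\in\Y$. The Lipschitz property (iii) is exactly what converts an $O(C_\eta)$ displacement into an $O(GC_\eta)$ potential error, with no $1/\eta$ loss before the final division.
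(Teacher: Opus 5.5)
Your proof is correct and follows essentially the same argument as the paper. The paper cites this lemma from \citet{ghai-lu-hazan22} rather than proving it, but its proof of Lemma~\ref{lem:perturbed-omd-reg-random-lin-loss} uses the same steps: first-order optimality with the three-point identity, strong convexity plus Young's inequality, and the $G$-Lipschitzness of $w\mapsto D_R(u\|w)$ to move from the ghost point to the actual iterate $z_{t+1}$ before telescoping. Your caveat about the gradient constant is well taken: the stated assumptions only give $\|\nabla h_t\|\le GG_F$, which is why the paper's bandit analogue has $\tilde G_F^2$ rather than $G^2$ in the middle term.
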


\noindent\textbf{End of Proof of Theorem~\ref{thm:main}.} 
The rest of the proof follows the same lines as the proof of \citet{ghai-lu-hazan22} using our tighter estimate obtained in Corollary~\ref{cor:approx-lemma-det}. We provide a full proof for completeness. 
\begin{proof}
The \eqref{eq:ogd-main} iterates mapped to the hidden space $\Y$ can be seen as a perturbed version of OMD since: 
\begin{equation*}
z_{t+1} = q(x_{t+1}) = q(x_{t+1}) - y_{t+1} + y_{t+1} = r_{t+1} + \argmin_{y\in \Y} \nabla h_t(y_t)^{\top}(y-y_t) +\frac{1}{\eta} D_R(y\|y_t)\,, 
\end{equation*}
where $r_{t+1} := q(x_{t+1}) - y_{t+1}\,.$ It follows from Corollary~\ref{cor:approx-lemma-det} that $\|r_{t+1}\| \leq C_{\eta} := 6 G^5 G_F^3 \eta^2\,.$ Using now Lemma~\ref{lem2}, we obtain the following regret bound for the \eqref{eq:ogd-main} iterates $(z_t)$ in the hidden space $\Y$:
\begin{equation*}
R_T = \sum_{t=1}^T h_t(z_t) - \min_{z\in\Y}\sum_{t=1}^T h_t(z) \le \frac{C_{\eta} T G}{\eta} + \frac{D_1}{\eta} +\frac{\eta G^2 T}{2} 
\leq \frac{D_1}{\eta} + 7 G^6 G_F^3  \eta T\,.
\end{equation*}
Optimizing the stepsize $\eta$ to minimize the upperbound yields: 
\begin{equation*}
R_T \leq \sqrt{7 D_1 G^6 G_F^3 T}\,, 
\end{equation*}
with stepsize $\eta = \sqrt{\frac{D_1}{7 G^6 G_F^3 T}}\,.$ 
\end{proof}

\section{Proofs for Section~\ref{sec:discussion-hessian-comp-as}}

\subsection{Proof of Proposition~\ref{prop:matrix-field-eq-Hessian}}

The proof relies on the following classical result from vector calculus. 
\begin{tcolorbox}[colframe=white!, top=2pt,left=2pt,right=2pt,bottom=2pt] 
\begin{proposition}
\label{prop:existence-grad-potential}
Let $\mathcal{U} \subseteq \mathbb{R}^d$ be convex\footnote{A simple connected domain suffices here. Convexity is enough for our purpose.} and let $F: \mathcal{U} \to \mathbb{R}^d $ be a $C^1$ vector field. Denoting $F = (F_1, \cdots, F_d)$ using its coordinate functions, if $\partial_{x_j} F_k(x) = \partial_{x_k} F_j(x)$ for all $j,k \in [d]$ and all $x \in \mathcal{U}$, then there exists a scalar function $\phi: \mathbb{R}^d \to \mathbb{R}$ such that $F = \nabla \phi\,.$ 
\end{proposition}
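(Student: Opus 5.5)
The plan is the classical Poincaré-lemma construction via line integrals from a base point, exploiting convexity of $\mathcal{U}$ so that straight segments stay inside the domain. Fix a base point $x_0 \in \mathcal{U}$; after translating coordinates we may take $x_0$ as the origin of the segments. For $x \in \mathcal{U}$, the segment $\{x_0 + s(x-x_0): s\in[0,1]\}$ lies in $\mathcal{U}$ by convexity, so we define
\begin{equation*}
\phi(x) := \int_0^1 \big\langle F(x_0 + s(x-x_0)),\, x - x_0\big\rangle\, ds\,.
\end{equation*}
This is well defined because $F$ is continuous on the compact segment. Outside $\mathcal{U}$, $\phi$ can be extended arbitrarily, since the claim $F=\nabla\phi$ only concerns points of $\mathcal{U}$. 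If $\mathcal{U}$ is not open, derivatives at boundary points are understood one-sidedly or via a $C^1$ extension, consistent with how $F$ is assumed $C^1$.

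The key step is to differentiate under the integral sign, which is justified since $F$ is $C^1$ and the integrand together with its partial derivative in $x_k$ is continuous on $[0,1]\times$(a neighborhood). Writing $u = x_0 + s(x-x_0)$, the partial derivative gives
\begin{equation*}
\partial_{x_k}\phi(x) = \int_0^1 \Big( F_k(u) + s\sum_{j} \partial_{x_k}F_j(u)\,(x_j - x_{0,j}) \Big) ds\,.
\end{equation*}
Next, use the symmetry hypothesis $\partial_{x_k}F_j = \partial_{x_j}F_k$ to rewrite the sum as $s\sum_j \partial_{x_j}F_k(u)(x_j-x_{0,j})$. This expression is exactly $s\,\frac{d}{ds}\big[F_k(x_0+s(x-x_0))\big]$. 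The integrand therefore becomes $\frac{d}{ds}\big[s\,F_k(x_0+s(x-x_0))\big]$, and the fundamental theorem of calculus yields $\partial_{x_k}\phi(x) = F_k(x)$. Hence $\nabla\phi = F$ on $\mathcal{U}$, and $\phi$ is $C^2$ because its gradient is $C^1$.

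The only delicate point is the justification of differentiation under the integral near the boundary of $\mathcal{U}$ when $\mathcal{U}$ is not open. I would handle it either by assuming $\mathcal{U}$ open, as is implicit for a "domain," or by noting that $F$ being $C^1$ on $\mathcal{U}$ means it extends to a $C^1$ field on an open neighborhood in which the computation is carried out locally. The symmetry substitution is the main conceptual step. Once it is made, everything reduces to the one-variable identity $\frac{d}{ds}[sF_k] = F_k + s\frac{d}{ds}F_k$.
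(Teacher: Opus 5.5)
Your proof is correct. The paper never proves this proposition: it quotes it as a classical vector-calculus fact and only applies it, twice, inside the proof of Proposition~\ref{prop:matrix-field-eq-Hessian}, so there is no competing argument to compare with. What you give is the standard Poincar\'e-lemma construction for star-shaped domains. The core computation is carried out correctly: you differentiate under the integral, use $\partial_{x_k}F_j=\partial_{x_j}F_k$ to recognize $s\,\frac{d}{ds}F_k(x_0+s(x-x_0))$, and integrate $\frac{d}{ds}\bigl[sF_k(x_0+s(x-x_0))\bigr]$ over $[0,1]$.

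Two minor remarks.

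First, your radial formula uses only that $\mathcal U$ is star-shaped about $x_0$. It therefore does not reach the simply connected setting mentioned in the paper's footnote, which would need a path-independence argument via homotopy or Stokes. Convexity of $\Y$ is all the paper actually uses, so this is not a gap.

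Second, in the non-open case your two treatments conflict, and the $C^1$-extension route is the one to keep. Define $\phi$ near $\mathcal U$ by the same formula, using the extended field on an open neighborhood $V$, rather than extending $\phi$ arbitrarily. Since $[x_0,x]\subset V$ is compact, the segments $[x_0,x']$ stay in $V$ for $x'$ near $x$, so this is well defined. Moreover, $\partial_{x_k}\phi(x)$ involves $F$ and $DF$ only along $[x_0,x]\subset\mathcal U$, where the symmetry hypothesis is available. One-sided partials alone would not suffice: at a corner point of a convex set, neither $x+te_k$ nor $x-te_k$ need lie in $\mathcal U$.
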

\end{tcolorbox}

\begin{proof}[Proof of Prop.~\ref{prop:matrix-field-eq-Hessian}]
For each $i \in [d]$, define the vector field $M^{(i)}$ of the $i$th row of the matrix field $M$: 
\begin{equation*}
M^{(i)}(x) := (M_{i,1}(x), \cdots, M_{i,d}(x))\,.
\end{equation*}
The cross-derivatives conditions $\partial_{x_k} M_{ij}(x) = \partial_{x_j} M_{ik}(x)\,,\forall i, j, k \in [d], \forall x \in K\,,$ guarantee that $M^{(i)}$ has also matching cross-partial derivatives in the sense of Proposition~\ref{prop:existence-grad-potential}. Therefore, it follows from Proposition~\ref{prop:existence-grad-potential} that there exists a scalar function $g_i: \mathbb{R}^d \to \mathbb{R}$ such that $\nabla g_i = F^{(i)}$ for all $i \in [d]\,.$ 

We obtain a vector field $g = (g_1, \cdots, g_d)$ whose Jacobian $J_g$ coincides with $M$, i.e. $J_g = M$ since $M^{(i)}$ are the rows of the matrix field $M$. Then, observe that $M$ is symmetric because $J_q J_q^{\top}$ is symmetric and its inverse is also symmetric. Hence, the Jacobian $J_g = M$ is also symmetric and it follows that $g$ itself has matching cross partial derivatives. Applying Proposition~\ref{prop:existence-grad-potential} again to $g$ yields the existence of a scalar function $R$ such that $g = \nabla R\,.$ We have proved that $M = J_g$ and $g = \nabla R$ which implies that $M = \nabla^2 R$, concluding the proof. 
\end{proof}

\subsection{Detailed derivations for examples}
\label{appx:detailed-examples}

\paragraph{Example 1: Affine mixing of a separable reparameterization.}
Let
\[
    q(x)=A\,s(x)+b,
    \qquad
    s(x)=\bigl(s_1(x_1),\dots,s_d(x_d)\bigr),
\]
where \(A\in\mathbb R^{d\times d}\) is invertible, \(b\in\mathbb R^d\), and each
\(s_i:\mathbb R\to\mathbb R\) is a \(C^2\) diffeomorphism. 
Writing
\[
    D(x):=\operatorname{diag}\bigl(s_1'(x_1),\dots,s_d'(x_d)\bigr),
\]
it follows that the Jacobian of $q$ at $x$ and its outer product can be written as: 
\[
    J_q(x)=A D(x),
    \qquad
    J_q(x)J_q(x)^\top=A D(x)^2 A^\top.
\]
Therefore, for \(z=q(x)\),
\[
    M(z)
    =
    \left[J_q(q^{-1}(z))J_q(q^{-1}(z))^\top\right]^{-1}
    =
    A^{-T}D(x)^{-2}A^{-1},
    \qquad x=q^{-1}(z).
\]

Let $\xi:=A^{-1}(z-b).$ 
Since $z=A s(x)+b$, we have $\xi=s(x)$, hence \(x_i=s_i^{-1}(\xi_i)\). Define
\[
    m_i(\xi_i):=
    \frac{1}{\bigl(s_i'(s_i^{-1}(\xi_i))\bigr)^2}.
\]
Using this notation, we can rewrite $M(z)$ as follows: 
\[
    M(z)
    =
    A^{-T}\operatorname{diag}\bigl(m_1(\xi_1),\dots,m_d(\xi_d)\bigr)A^{-1},
    \qquad
    \xi=A^{-1}(z-b).
\]

This matrix field is explicitly Hessian. Indeed, define \(r_i\) by
\[
    r_i''(t)=m_i(t)
    =
    \frac{1}{\bigl(s_i'(s_i^{-1}(t))\bigr)^2},
\]
and set
\[
    R(z):=\sum_{i=1}^d r_i\bigl((A^{-1}(z-b))_i\bigr).
\]
Since \(A^{-1}(z-b)\) is affine in \(z\),
\[
    \nabla^2 R(z)
    =
    A^{-T}\operatorname{diag}\bigl(r_1''(\xi_1),\dots,r_d''(\xi_d)\bigr)A^{-1}
    =
    M(z).
\]
Thus $M$ satisfies the Hessian-compatibility condition. 
In particular, the compatibility holds although $J_q(x)=AD(x)$ is not diagonal unless $A$ is diagonal.

\paragraph{Example 2: Rank-one nonlinear mixing.}
Fix \(a\in\mathbb R^d\setminus\{0\}\) and let
\begin{equation*}
    q(x)=x+h(a^\top x)a,
\end{equation*}
where \(h:\mathbb R\to\mathbb R\) is \(C^2\). 
Writing \(r:=a^\top x\) and \(s:=\|a\|^2\), we have
\[
    J_q(x)=I+h'(r)aa^\top.
\]
This Jacobian is generally non-diagonal whenever at least two components of \(a\) are nonzero.

The matrix \(J_q(x)\) acts as multiplication by \(1+s h'(r)\) in the direction \(a\), and as the identity on \(a^\perp\). 
Assume \(1+s h'(r)\neq 0\) on the domain, so that \(q\) is locally invertible. 
Then \(J_q(x)J_q(x)^\top\) has eigenvalue \((1+s h'(r))^2\) along \(a\) and eigenvalue \(1\) on \(a^\perp\). 
Consequently, the inverse metric field has the form
\[
    M(z)=I+\beta(a^\top z)\,aa^\top,
    \qquad z=q(x),
\]
for a scalar function \(\beta\). To identify it, note that
\[
    a^\top z
    =
    a^\top q(x)
    =
    r+s h(r).
\]
Let \(g(r):=r+s h(r)\), and assume \(g\) is invertible on the relevant interval. Then
\[
    r=g^{-1}(a^\top z),
\]
and the eigenvalue of \(M(z)\) along \(a\) gives
\[
    1+s\,\beta(a^\top z)
    =
    \bigl(1+s h'(r)\bigr)^{-2}.
\]
Thus we can deduce an expression for $\beta$ as follows: 
\[
    \beta(t)
    =
    \frac{\bigl(1+s h'(g^{-1}(t))\bigr)^{-2}-1}{s}.
\]

Again \(M\) is explicitly Hessian. Let \(\psi:\mathbb R\to\mathbb R\) satisfy $\psi''(t)=\beta(t)$, 
and define: 
\[
    R(z):=\frac12\|z\|^2+\psi(a^\top z).
\]
Then, it follows from the above definition that the Hessian can be written as follows: 
\[
    \nabla^2R(z)
    =
    I+\psi''(a^\top z)aa^\top
    =
    I+\beta(a^\top z)aa^\top
    =
    M(z).
\]
Hence the Hessian-compatibility condition holds, despite the generally non-diagonal Jacobian \(J_q(x)\).

\subsection{Proof of Theorem~\ref{thm:linear-reg-lower-bound}: Linear Regret Lower Bound for OGD}
\label{app:proof-thm-linear-reg-lower-bound}

\subsection{Detailed proof sketch of Theorem~\ref{thm:linear-reg-lower-bound}} 
\label{app:detailed-proof-sketch-thm-linear-reg-lower-bound}

The proof consists of several steps. We give an overview of each one of them below. 
The complete proof with all the details can be found in Section~\ref{app:proof-thm-linear-reg-lower-bound}.\\ 

\noindent\textbf{Step 1. Parameterization $q$ and induced metric}. 
Let $\mathcal{X} := [0,\log 3] \times [\frac \pi 6, \frac \pi 3] \subset \mathbb{R}^2.$ Define the parameterization function $q: \mathcal{X} \to \mathbb{R}^2$ for any $x = (x_1, x_2) \in \mathcal{X}$ as follows: 
$q(x_1,x_2) := e^{x_1} (\cos x_2, \sin x_2)\,.$ 
One can then easily verify that $J_q(x) J_q(x)^{\top} = e^{2x_1} I_2$ where $I_2$ is the $2\times 2$ identity matrix. 
Moreover, since $x_1 \geq 0$, $e^{2x_1} \geq 1$ and hence $J_q(x) J_q(x)^{\top} \succeq I_2$. 
$z := q(x)$, we have $\|z\| = \|q(x)\| = e^{x_1}$ and therefore $G(z) := M(z)^{-1} = \frac{1}{\|z\|^2} I_2\,.$ 

\noindent\textbf{Step 2: OGD dynamics in the hidden space.} To construct our adversarial sequence of losses, we first write the OGD dynamics in the hidden space $\Y := q(\mathcal{X}).$ 
We will consider linear functions $h_t: \Y \to \mathbb{R}$ defined for every $z \in \mathcal{Z}$ 
by $h_t(z) = \langle s_t, z \rangle$ where $(s_t)$ is an adversarial sequence that will be specified later in an adversarial way. 
It follows that for any $x \in \mathcal{X}$, $\ell_t(x) = h_t(q(x)) = \langle s_t, q(x) \rangle\,,$ and $\nabla \ell_t(x) = J_q(x)^{\top} s_t$.
The OGD update rule on the sequence $(f_t)$ can then be written as follows: 
\begin{equation}
\label{eq:ogd-update-x-main}
x_{t+1} = x_t - \eta \nabla \ell_t(x_t) = x_t - \eta J_q(x_t)^{\top} s_t\,.
\end{equation}
Now defining the sequence $(z_t)$ by $z_t := q(x_t)$ for any $t$ in the hidden space $\mathcal{Z}$, we obtain the following using Taylor's theorem with exact remainder: 
\begin{equation}
\label{eq:1-taylor-main}
z_{t+1} = q(x_{t+1}) = q(x_t) + J_q(x_t) (x_{t+1} - x_t) + r_t = z_t - \eta G(z_t)^{-1} s_t + r_t\,,
\end{equation}
where $\|r_t\| = \mathcal{O }( \eta^2)\,.$ 
To simplify the OGD dynamics in the hidden space, we choose $s_t = - G(z_t) v_t$ where $(v_t)$ is an adversarial sequence that will be chosen later. 
Given this choice, we obtain the following OGD dynamics in the hidden space: 
\begin{equation}
\label{eq:ogd-hidden-space-main}
z_{t+1} = z_t + \eta v_t + r_t\,.
\end{equation}

\noindent\textbf{Step 3. Regret as an approximate discrete path sum.} 
For a fixed comparator $u \in \mathcal{X}$, let $z_u := q(u).$ Then we have by setting $s_t = - G(z_t) v_t$: 
\begin{equation*}
\ell_t(x_t) - \ell_t(u) 
= h_t(z_t) - h_t(z_u)
= \langle s_t, z_t - z_u \rangle 
=  - \langle G(z_t) v_t, z_t - z_u \rangle 
= - \langle G(z_t) (z_t - z_u), v_t  \rangle\,,
\end{equation*}
where the last step uses symmetry of $G(z_t)$. 
Define the vector field $F_u: \mathcal{Z} \to \mathbb{R}^2$ for any $z \in \mathcal{Z}$ by:
\begin{equation}
\label{eq:def-F_u-main}
F_u(z) := G(z) (z - z_u)\,.
\end{equation}
Using the above with the hidden-space dynamics \eqref{eq:ogd-hidden-space-main} 
and the estimate $\|r_t\| = \mathcal{O}(\eta^2)$, we obtain:  
\begin{align}
    \ell_t(x_t) - \ell_t(u) = - \frac{1}{\eta} \langle F_u(z_t), z_{t+1} - z_t\rangle +  \mathcal{O}(\eta)\,.  
\end{align}

\noindent\textbf{Step 4: The vector field $F_u$ is not conservative.} 
We show in this step that the vector field $F_u$ defined in \eqref{eq:def-F_u-main} is not a gradient field. 
In particular, we will show the existence of a rectangle $R \subset \mathbb{R}^2$ in the hidden space over which the path integral is negative. 
Set the comparator $u = (0, \frac{\pi}{6}) \in \mathcal{X}$ to show that $F_u$ is not a gradient field. 
Note that it is enough to consider a single comparator $u$ in order to show the linear regret bound. 
Denoting the coordinate functions of $F_u(z)$ by $P_u(z)$ and $Q_u(z)$ respectively, it is easy to verify at $z = (1,1)$ for instance 
that $\partial_{z_1} Q_u(z) - \partial_{z_2} P_u(z) < 0$.
By continuity of $(z_1, z_2) \mapsto \partial_{z_1} Q_u(z) - \partial_{z_2} P_u(z)$, 
there exists a rectangle $R := [a, a+ \alpha] \times [b, b+\beta] \subset q(\mathcal{X})$ for some $\alpha, \beta > 0$ and $a,b \in q(\mathcal{X})$ such that:
there exists a constant $\gamma > 0$ such that for all $z \in R$, $\partial_{z_1} Q_u(z) - \partial_{z_2} P_u(z) \leq - \gamma$\,. 
We denote by $A := (a,b), B:= (a+ \alpha, B), C:= (a+\alpha, b+ \beta)$ and $D:= (a, b + \beta)$ its corners and 
by $I_R$ the continuous oriented integral over the edges of the rectangle $R$ in the counter-clockwise orientation ($A \to B \to C \to D$). 
It follows then that: 
\begin{equation}
\label{eq:integral-I_R-main}
    I_R = \int_a^{a + \alpha} \int_b^{b+\beta} (\partial_{z_1} Q_u - \partial_{z_2} P_u)(x,y) dx dy \leq - \gamma \alpha \beta < 0\,.
\end{equation}

\noindent\textbf{Step 5: Definition of the adversarial sequence $(s_t)$.} 
Recall that we have set $s_t = - G(z_t) v_t$. 
We now choose the sequence $(v_t)$ in an adaptive adversarial way that makes the hidden iterates move counter-clockwise around the rectangle $R$. 
More precisely, we choose $v_t \in \{e_1, e_2, -e_1, -e_2 \}$ where $e_1 = (1,0)^{\top}, e_2 = (0,1)^{\top}$ in the following way, 
defined periodically for any $t$: $v_t = e_1$ along $AB$, $e_2$ along $BC$, $-e_1$ along $CD$ and $-e_2$ along $DA$. 
Let $n_1 := \lfloor \frac{\alpha}{\eta}\rfloor, n_2 :=  \lfloor \frac{\beta}{\eta}\rfloor$. 
Then the full cycle around the rectangle $R$ consists of $N = 2 n_1 + 2 n_2 = \frac{2(\alpha + \beta)}{\eta} + \mathcal{O}(1)$ steps. 
Then $h_t(z) = \langle s_t, z \rangle$ at round $t$ with $s_t = - G(z_t) v_t\,.$ 
This gives $\ell_t(x) =h_t(q(x)) = \langle s_t, q(x)\rangle$ and defines entirely the sequence of adversarial loss functions over which OGD is run. 

\noindent\textbf{Step 6: Riemann sum estimate of the path sum in regret in \eqref{eq:regret-as-path-sum} using the rectangle integral $I_R$.} 
Our goal in this step is to relate the path sum appearing in the regret decomposition in \eqref{eq:regret-as-path-sum} 
and the negative rectangle integral $I_R$ in \eqref{eq:rect-integral-neg-bound}. Specifically we show that: 
\begin{equation}
    \label{eq:path-sum-to-I_R1}
    \sum_{t=1}^N \langle F_u(z_t), z_{t+1} - z_t \rangle = I_R + \mathcal{O}(\eta)\,,
    \end{equation}
    where $N = 2(n_1 + n_2)$ is the number of steps in one cycle of the iterates around the rectangle $R$ 

\noindent\textbf{Step 7: Regret over one cycle of length $N$ around the rectangle $R$.} 
We show that one full cycle results in regret of order $\frac{1}{\eta}$. 
Since one cycle lasts $N = \Theta(\frac{1}{\eta})$ rounds, this gives a positive constant regret per round (on average).
Summing up \eqref{eq:reg-compon-decomp} over one cycle of $N$ steps yields: 
    \begin{equation}
    \label{eq:path-sum-to-I_R2}
    \sum_{t=1}^N \ell_t(x_t) - \ell_t(u) = - \frac{1}{\eta} \sum_{t=1}^N \langle F_u(z_t), z_{t+1} - z_t \rangle + \mathcal{O}(N \eta)
    = - \frac{1}{\eta} I_R + \mathcal{O}(1)\,, 
    \end{equation}
    where the first equality uses the fact that one cycle has $N = \Theta(\frac{1}{\eta})$ steps 
    and the second identity follows from using \eqref{eq:path-sum-to-I_R1}. 
    Since $I_R \leq - \gamma \alpha \beta < 0$, it follows that: 
    \begin{equation}
        \label{eq:1-cycle-lower-bound}
    \sum_{t=1}^N \ell_t(x_t) - \ell_t(u) \geq + \frac{\gamma \alpha \beta} {\eta} + \mathcal{O}(1) \geq \frac{c_0}{\eta}\,, 
    \end{equation}
    for some $c_0 > 0$ and for $\eta \in (0, \eta_0]$ for some $\eta_0 > 0$. 
    
    \noindent\textbf{Step 8: Linear regret lower bound by repeating cycles.} 
    We conclude the proof of Theorem~\ref{thm:linear-reg-lower-bound} by repeating the previous cycle over the time horizon $T$. 
    Let $K := \lfloor \frac{T}{N} \rfloor$ for $T \geq N$  be the number of complete cycles up to time $T$. 
    Since each cycle contributes at least $\frac{c_0}{\eta}$ regret, summing up these contributions over the $K$ cycles concludes the proof.   

\subsection{Proof of Theorem~\ref{thm:linear-reg-lower-bound}}
\label{appx:complete-proof-thm:linear-reg-lower-bound}

\textbf{Step 1: Definition of the parameterization $q$ and induced metric.} We start by defining the parameterization function $q$ to prove the result. 

Let $\mathcal{X} := [0,\log 3] \times [\frac \pi 6, \frac \pi 3] \subset \mathbb{R}^2.$ Define the parameterization function $q: \mathcal{X} \to \mathbb{R}^2$ for any $x = (x_1, x_2) \in \mathcal{X}$ as follows: 
\begin{equation*}
q(x_1,x_2) := e^{x_1} (\cos x_2, \sin x_2)\,.
\end{equation*}
Then it follows that the Jacobian of $q$ at any $x = (x_1, x_2) \in \mathcal{X} $ is given by: 
\begin{equation*}
J_q(x) = 
\begin{pmatrix}
\frac{\partial q_1(x)}{\partial x_1} & \frac{\partial q_1(x)}{\partial x_2} \\
\frac{\partial q_2(x)}{\partial x_1} & \frac{\partial q_2(x)}{\partial x_2}
\end{pmatrix} 
= e^{x_1} 
\begin{pmatrix} 
 \cos x_2 &  -  \sin x_2\\ 
 \sin x_2 &  \cos x_2
\end{pmatrix}
\,.
\end{equation*}
Therefore, we obtain: 
\begin{equation*}
J_q(x) J_q(x)^{\top} = e^{2x_1} I_2\,, 
\end{equation*}
where $I_2$ is the $2\times 2$ identity matrix. Since $x_1 \geq 0$, $e^{2x_1} \geq 1$ and hence $J_q(x) J_q(x)^{\top} \succeq I_2$. 
Moreover, setting $z := q(x)$, we have $\|z\| = \|q(x)\| = e^{x_1}$ and therefore: 
\begin{equation*}
J_q(x) J_q(x)^{\top} = \|z\|^2 I_2\,.
\end{equation*}
We now introduce additional notation which will be useful in the rest of the proof. Define for any $z \in q(\mathcal{X})$,  
\begin{align*}
M(z) &:= J_q(x) J_q(x)^{\top} = \|z\|^2 I_2\,,\\ 
G(z) &:= M(z)^{-1} = \frac{1}{\|z\|^2} I_2\,.
\end{align*}

\noindent\textbf{Step 2: OGD dynamics in the hidden space.} To construct our adversarial sequence of losses and give more insight about the idea of the proof, we first write the OGD dynamics in the hidden space $\Y := q(\mathcal{X}).$ 

We will consider linear functions $h_t: \Y \to \mathbb{R}$ defined for every $z \in \Y$ by $h_t(z) = \langle s_t, z \rangle$ where $(s_t)$ is an adversarial sequence that will be specified later in an adversarial way. It follows that for any $x \in \mathcal{X}$, 
\begin{equation*}
\ell_t(x) = h_t(q(x)) = \langle s_t, q(x) \rangle\,, \quad \nabla \ell_t(x) = J_q(x)^{\top} s_t. 
\end{equation*}

We first record the local hidden-space dynamics in the case where the Euclidean projection is inactive. For (projected) OGD on the sequence of function $(\ell_t)$, the update rule is:
\begin{equation*}
x_{t+1}
= \Pi_{\mathcal X}\bigl(x_t-\eta \nabla \ell_t(x_t) \bigr) = 
\Pi_{\mathcal X}\bigl(x_t-\eta J_q(x_t)^\top s_t\bigr).
\end{equation*}
Thus, on any round for which the pre-projection point $\widetilde x_{t+1}:=x_t-\eta J_q(x_t)^\top s_t$
belongs to $\mathcal X$, the projection is inactive and
\begin{equation}
\label{eq:ogd-update-x}
x_{t+1} = x_t - \eta \nabla \ell_t(x_t) = x_t - \eta J_q(x_t)^{\top} s_t\,.
\end{equation}
We will verify below (in step 5), after constructing the adversarial cycle, that the iterates remain
a positive distance away from the boundary $\partial\mathcal X$. Consequently, for sufficiently small
$\eta$, the pre-projection points remain in $\mathcal X$, so the projection is indeed inactive throughout the constructed trajectory.

Now defining the sequence $(z_t)$ by $z_t := q(x_t)$ for any $t$ in hidden space $\Y$, we obtain the following using Taylor's theorem with exact remainder: 
\begin{equation}
\label{eq:1-taylor}
z_{t+1} = q(x_{t+1}) = q(x_t) + J_q(x_t) (x_{t+1} - x_t) + r_t\,,
\end{equation}
where $r_t$ is a Taylor remainder which can be bounded as follows by global boundedness of the Hessian of $q$ (recall here that the space $\mathcal{X}$ is compact): 
\begin{equation*}
\|r_t\| = \mathcal{O }(\|x_{t+1} -x_t\|^2) = \mathcal{O }( \eta^2 \|\nabla \ell_t(x_t)\|^2) =  \mathcal{O }( \eta^2)\,, 
\end{equation*}
where the last step follows from uniform boundedness of the gradients of $f_t$ on the compact space $\mathcal{X}$ as $q$ is smooth and $(s_t)$ will be chosen as a bounded sequence. 

Plugging the OGD update rule of $(x_t)$ (see \eqref{eq:ogd-update-x}) in \eqref{eq:1-taylor}, we obtain: 
\begin{align*}
z_{t+1} &= z_t - \eta J_q(x_t) \nabla \ell_t(x_t) + r_t \nonumber\\
        &= z_t - \eta J_q(x_t) J_q(x_t)^{\top} s_t + r_t \nonumber\\
        &=  z_t - M(z_t) s_t + r_t\,. 
\end{align*}
To simplify the OGD dynamics in the hidden space, we choose $s_t = - G(z_t) v_t = - M(z_t)^{-1} v_t$ where $(v_t)$ is an adversarial sequence that will be chosen later. Given this choice, we obtain the following OGD dynamics in the hidden space: 
\begin{equation}
\label{eq:ogd-hidden-space}
z_{t+1} = z_t + \eta v_t + r_t\,,
\end{equation}
where $\|r_t\| = \mathcal{O}(\eta^2)$ as previously mentioned.\\

\noindent\textbf{Step 3: Regret as an approximate discrete path sum.} In this step, we write the regret under a specific path sum form for some field $F_u$ depending on a fixed comparator $u \in \mathcal{X}$. The main idea of the proof will then be to show that $F_u$ is not a gradient field and we will choose the sequence $(v_t)$ in an appropriate way to exploit this fact. Non-vanishing regret will be accumulated along the path due some cycling behavior of $F_u$. 

For a fixed comparator $u \in \mathcal{X}$, let $z_u := q(u).$ Then we have
\begin{align}
\ell_t(x_t) - \ell_t(u) &= h_t(q(x_t)) - h_t(q(u)) \nonumber\\ 
&= h_t(z_t) - h_t(z_u) \nonumber\\
&= \langle s_t, z_t - z_u \rangle  \nonumber\\
&=  - \langle G(z_t) v_t, z_t - z_u \rangle &&\hfill \text{($s_t = - G(z_t) v_t$)}   \nonumber\\
&= - \langle G(z_t) z_t - z_u, v_t  \rangle &&\hfill \text{(by symmetry of $G(z_t)$)}. \label{eq:3-fun-comp-gap}
\end{align}
Now define the vector field $F_u: \mathcal{Z} \to \mathbb{R}^2$ for any $z \in \mathcal{Z}$ by:
\begin{equation}
\label{eq:def-F_u}
F_u(z) := G(z) (z - z_u)\,.
\end{equation}
Using \eqref{eq:3-fun-comp-gap} together with the OGD dynamics \eqref{eq:ogd-hidden-space} in the hidden space, we obtain: 
\begin{align}
\ell_t(x_t) - \ell_t(u) &= - \langle F_u(z_t), v_t \rangle \nonumber\\
                  &= - \frac{1}{\eta} \langle F_u(z_t), z_{t+1} - z_t - r_t \rangle\nonumber\\ 
                  &= - \frac{1}{\eta} \langle F_u(z_t), z_{t+1} - z_t\rangle + \frac{1}{\eta}  \langle F_u(z_t), r_t \rangle\,.
\end{align}
Since $F_u$ is bounded and $\|r_t\| = \mathcal{O}(\eta^2)$, we have: 
\begin{equation*}
\left| \frac{1}{\eta}  \langle F_u(z_t), r_t \rangle \right| = \mathcal{O}\left(\frac{1}{\eta} \|r_t\|\right) = \mathcal{O}(\eta)\,.
\end{equation*}
Overall, we have 
\begin{equation}
\label{eq:reg-compon-decomp}
\ell_t(x_t) - \ell_t(u) = - \frac{1}{\eta} \langle F_u(z_t), z_{t+1} - z_t\rangle +  \mathcal{O}(\eta)\,.
\end{equation}
We have written regret as a path sum of the vector field $F_u$ up to an error of order $\mathcal{\eta T}$: 
\begin{align}
\label{eq:regret-as-path-sum}
R_T := \sum_{t=1}^T \ell_t(x_t) - \ell_t(u)  
= - \frac{1}{\eta} \sum_{t=1}^T  \langle F_u(z_t), z_{t+1} - z_t\rangle +  \mathcal{O}(\eta T)\,.
\end{align}

\noindent\textbf{Step 4: The vector field $F_u$ is not conservative.} We show in this step that the vector field $F_u$ defined in \eqref{eq:def-F_u} is not a gradient field. In particular, we will show the existence of a rectangle in the hidden space over which the path integral is negative. 

Set the comparator $u = (0, \frac{\pi}{6}) \in \mathcal{X}$ to show that $F_u$ is not a gradient field. Note that it will be enough for our purpose to consider a single comparator $u$ to show our linear regret bound. 

Then $z_u = q(u) = (\cos \frac{\pi}{6},\sin  \frac{\pi}{6}) = (\frac{\sqrt{3}}{2}, \frac 12)$. Recalling the definition of $F_u$ in \eqref{eq:def-F_u}, we have for any $z = (z_1, z_2) \in Q(\mathcal{X})$, 
\begin{equation*}
F_u(z) = G(z) (z - z_u) = \frac{1}{\|z\|^2} (z-z_u) = \frac{1}{z_1^2 + z_2^2} (z-z_u)\,.
\end{equation*}

Denote the coordinate functions of $F_u(z)$ as $P_u(z)$ and $Q_u(z)$. Then for any $z = (z_1, z_2) \in q(\mathcal{X})$, 
\begin{equation*}
F_u(z) = (P_u(z), Q_u(z)) = \left( \frac{z_1 - z_{u,1}}{z_1^2 + z_2^2}, \frac{z_2 - z_{u,2}}{z_1^2 + z_2^2}  \right)\,.
\end{equation*}
We now compute the difference of partial derivatives to obtain: 
\begin{equation*}
\partial_{z_1} Q_u(z) - \partial_{z_2} P_u(z) = \frac{2(z_1 z_{u,2} - z_2 z_{u_1})}{(z_1^2 + z_2^2)^2}\,.
\end{equation*}
At $z = (1, 1)$, we have: 
\begin{equation*}
\partial_{z_1} Q_u(z) - \partial_{z_2} P_u(z) = \frac 12 \left( \frac 12 - \frac{\sqrt{3}}{2}\right) < 0\,.
\end{equation*}

Since
$(1,1)=q(\log\sqrt 2,\pi/4)\in q(\operatorname{int}\mathcal X)$,
and since
$z\mapsto \partial_{z_1}Q_u(z)-\partial_{z_2}P_u(z)$
is continuous and strictly negative at $z=(1,1)$, there exist
\(\alpha,\beta>0\), \(a,b\in\mathbb R\), and \(\gamma>0\) such that: 
\begin{equation}
\label{eq:rec-R}
(1,1) \in R:=[a,a+\alpha]\times[b,b+\beta] \subset q(\operatorname{int}\mathcal X),
\end{equation}
where $R$ is a rectangle, and for all $z\in R$,
\begin{equation}
\label{eq:cross-partial-diff-neg}
\partial_{z_1}Q_u(z)-\partial_{z_2}P_u(z)\le -\gamma .
\end{equation}
Note here that we are making sure the rectangle $R$ is in $q(\operatorname{int}\mathcal X)$ is order to be able to ignore the projection in OGD. 
Moreover, by shrinking $R$ if necessary, we may assume that there exists
$\rho_z>0$ such that the compact set: 
\begin{equation}
\label{eq:def-Kz}
K_z:=\{z\in\mathbb R^2:\operatorname{dist}(z,R)\le \rho_z\}.
\end{equation}
satisfies $K_z \subset q(\operatorname{int}\mathcal X)$. 
This neighborhood is a buffer around the rectangle which ensures that the actual perturbed discrete trajectory $z_t$ (induced by the adversarial losses we will define) remains inside $q(\operatorname{int}\mathcal X)$, not just the ideal sequence which goes exactly around the rectangle without errors. This will be used  in steps 5 and 6 below.  

Let the corners of the rectangle $R$ be denoted as follows: 
\begin{align*}
D &:= (a, b + \beta)\,, && C:= (a+\alpha, b+ \beta)\,,\\
A &:= (a,b)\,,  && B:= (a+ \alpha, B)\,.
\end{align*}

We consider the counter-clockwise orientation: 
\begin{equation*}
A \to B \to C \to D\,.
\end{equation*}
Given this orientation, we define the following continuous edge integral: 
\begin{equation}
\label{eq:def-integral-I_R}
I_R := \int_A^B P_u(z) dz_1 + \int_B^C Q_u(z) dz_2 + \int_C^D P_u(z) dz_1 + \int_D^A Q_u(z) dz_2\,.
\end{equation}

The above integral can be rewritten as follows using differential calculus: 
\begin{equation}
\label{eq:integral-I_R}
    I_R = \int_a^{a + \alpha} \int_b^{b+\beta} (\partial_{z_1} Q_u - \partial_{z_2} P_u)(x,y) dx dy\,.
\end{equation}
The reader familiar with differential calculus may immediately see that the result holds. We provide an elementary proof for of this identity below for completeness:  
\begin{proof}
(of \eqref{eq:integral-I_R})
By definition of the continuous edge integral and the coordinate functions, we have: 
\begin{align}
I_R &= \int_a^{a + \alpha} P_u(x,b) dx + \int_b^{b + \beta} Q_u(a + \alpha, y) dy - \int_a^{a+ \alpha} P_u(x,b+\beta) dx - \int_b^{b+\beta} Q_u(a,y) dy \nonumber\\
&= \int_a^{a+\alpha} (P_u(x,b) - P_u(x,b+\beta))dx + \int_b^{b+\beta} (Q_u(a+\alpha,y) - Q_u(a,y))dy \nonumber\\
&= - \int_a^{a+\alpha} \int_b^{b+\beta} \partial_{z_2} P_u(x,y) dy dx + \int_b^{b+\beta} \int_a^{a+\alpha} \partial_{z_1}Q_u(x,y) dx dy\,,
\end{align} 
where the third equality follows from the one-dimensional fundamental theorem of calculus. 
\end{proof}

Using \eqref{eq:cross-partial-diff-neg}, we obtain that: 
\begin{equation}
\label{eq:rect-integral-neg-bound}
I_R \leq \int_a^{a + \alpha} \int_b^{b+\beta} (-\gamma) dx dy = - \gamma \alpha \beta < 0\,.
\end{equation}

\noindent\textbf{Step 5: Definition of the adversarial sequence $(s_t)$ via $(v_t)$.} We choose the sequence $(v_t)$ in an adaptive adversarial way that makes the hidden iterates move counter-clockwise around the rectangle $R$ defined in \eqref{eq:rec-R}. 

More precisely, we choose $v_t \in \{e_1, e_2, -e_1, -e_2 \}$ where $e_1 = (1,0)^{\top}, e_2 = (0,1)^{\top}$ in the following way, defined periodically for any $t$:
\begin{itemize}
\item along the edge $AB$: $v_t = e_1$\,,
\item along the edge $BC$: $v_t = e_2$\,, 
\item along the edge $CD$: $v_t = -e_1$\,, 
\item along the edge $DA$: $v_t = -e_2$\,. 
\end{itemize}

Let $n_1 := \lfloor \frac{\alpha}{\eta}\rfloor, n_2 :=  \lfloor \frac{\beta}{\eta}\rfloor$. Then the full cycle around the rectangle $R$ consists of $N = 2 n_1 + 2 n_2 = \frac{2(\alpha + \beta)}{\eta} + \mathcal{O}(1)$ steps. Then $h_t(z) = \langle s_t, z \rangle$ at round $t$ with $s_t = - G(z_t) v_t\,.$ This gives $\ell_t(x) =h_t(q(x)) = \langle s_t, q(x)\rangle$ and defines entirely the sequence of adversarial loss functions over which OGD is run. 

We next verify that, for this adversarial sequence and for sufficiently small $\eta$, the Euclidean projection in projected OGD is never active. This allows us to use the same local hidden-space expansion as in the unconstrained update, i.e., it remains to justify the claim made in Step 2 that the Euclidean projection is inactive along the constructed trajectory. Since
$K_z\subset q(\operatorname{int}\mathcal X)$,
the set $K_x:=q^{-1}(K_z)$ is a compact subset of $\operatorname{int}\mathcal X$, using continuity of $q^{-1}$. Therefore
$\rho_x:=\operatorname{dist}(K_x,\partial\mathcal X)>0.$
Moreover, on $K_z$, the adversarial vectors are uniformly bounded. Indeed,
\(v_t\in\{\pm e_1,\pm e_2\}\), \(s_t=-M(z_t)v_t\), and \(M\) is continuous on the compact set
$K_z$. Hence there exists $C_s<\infty$ such that
$\|s_t\|\le C_s$ whenever $z_t\in K_z$.
Since $J_q$ is continuous on the compact set $K_x$, there exists $C_J<\infty$ such that $\|J_q(x)^\top\|\le C_J$
for all $x\in K_x$. 
Thus, whenever $z_t\in K_z$ and $x_t=q^{-1}(z_t)$,
$
\|\eta J_q(x_t)^\top s_t\|
\le
\eta C_JC_s.
$
Choosing $\eta\le \rho_x/(2C_JC_s)$, we get
$\widetilde x_{t+1}=x_t-\eta J_q(x_t)^\top s_t\in \mathcal X.$
Therefore $\Pi_{\mathcal X}(\widetilde x_{t+1})=\widetilde x_{t+1}.$
So projected OGD coincides with the unconstrained update on every round for which $z_t\in K_z$.\\ 

\noindent\textbf{Step 6: Riemann sum estimate of the path sum in regret in \eqref{eq:regret-as-path-sum} using the rectangle integral $I_R$.} Our goal in this section is to relate the path sum appearing in the regret decomposition in \eqref{eq:regret-as-path-sum} and the negative rectangle integral $I_R$ in \eqref{eq:rect-integral-neg-bound}. For this we will rely on an edge integral approximation lemma. We present the result for the horizontal edge $AB$. Similar results hold for the other edges of the rectangle with minor modifications.  
Recall the OGD dynamics in the hidden space. For any $t \geq 0,$ 
\begin{equation*}
z_{t+1} = z_t + \eta v_t + r_t\,.
\end{equation*}

\begin{tcolorbox}[colframe=white!, top=2pt,left=2pt,right=2pt,bottom=2pt] 
\begin{lemma}[Horizontal edge approximation lemma]
\label{lem:horizontal-edge-approx}
Let $F = (P, Q)$ be $C^1$ on a compact neighborhood of the horizontal segment $E := \{ (x,b) : x \in  [a, a+\alpha] \}.$ Suppose that a discrete sequence $z_0, \dots, z_n$ for $n = \lfloor \frac{\alpha}{\eta} \rfloor$ satisfies for all $0 \leq k \leq n-1$, 
\begin{equation}
\label{eq:update-rule-zk-horizontal}
z_{k+1} = z_k + \eta e_1 + r_k\,,
\end{equation}
where $\|r_k\| = O(\eta^2)$ with $z_0 = (a,b)$. Then we have: 
\begin{equation*}
\sum_{k=0}^{n-1} \langle F(z_k), z_{k+1} - z_k \rangle = \int_a^{a+\alpha} P(x,b) dx + \mathcal{O}(\eta)\,.
\end{equation*}
\end{lemma}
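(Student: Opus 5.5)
We will first control the drift of the discrete path away from the ideal lattice points $p_k := (a + k\eta, b)$, and then compare the discrete sum with a Riemann sum of $P(\cdot,b)$.

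\textbf{Step 1: drift.} Unrolling \eqref{eq:update-rule-zk-horizontal} from $z_0=(a,b)$ gives
\[
z_k = p_k + \sum_{j<k} r_j,
\qquad\text{so}\qquad
\|z_k - p_k\| \le \sum_{j<k}\|r_j\| \le C k\eta^2 \le C\alpha\eta
\]
for $k\le n=\lfloor \alpha/\eta\rfloor$. In particular, for $\eta$ small the whole discrete path stays in the compact neighborhood of $E$ on which $F$ is $C^1$. On that set we then have the uniform bounds $\|F\|\le C_F$ and $\operatorname{Lip}(F)\le L_F$. This is also exactly the role of the buffer $K_z$ in \eqref{eq:def-Kz}.

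\textbf{Step 2: split each summand.} We write
\[
\langle F(z_k), z_{k+1}-z_k\rangle = \eta P(z_k) + \langle F(z_k), r_k\rangle .
\]
The second terms sum to at most $n C_F C\eta^2 = \mathcal{O}(\eta)$. For the first terms, Lipschitzness and Step 1 give
\[
|P(z_k)-P(p_k)| \le L_F\|z_k-p_k\| = \mathcal{O}(\eta),
\]
so $\eta\sum_{k<n}|P(z_k)-P(p_k)| = \mathcal{O}(n\eta\cdot\eta) = \mathcal{O}(\eta)$. This is the step I expect to be the main obstacle. The drift is only $\mathcal{O}(\eta)$ rather than $\mathcal{O}(\eta^2)$, because the $\mathcal{O}(\eta^2)$ errors accumulate over $\Theta(1/\eta)$ steps. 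Multiplying by the prefactor $\eta$ and summing $n$ terms still leaves only $\mathcal{O}(\eta)$, but this bookkeeping has to be carried out uniformly. Note in particular that the endpoint $z_n$ is not exactly $B$.

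\textbf{Step 3: Riemann sum.} $\eta\sum_{k<n}P(p_k)$ is a left Riemann sum of the $C^1$ function $x\mapsto P(x,b)$ on $[a,a+n\eta]$, with error at most $\tfrac12\|\partial_1 P\|_\infty (n\eta)\eta = \mathcal{O}(\eta)$. The leftover interval $[a+n\eta, a+\alpha]$ has length less than $\eta$, so it contributes at most $C_F\eta$. Adding the three $\mathcal{O}(\eta)$ errors gives the claim.

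All constants depend only on $C$, $C_F$, $L_F$ and $\alpha$, which makes the bound uniform in $\eta\le\eta_0$. The same argument, with $e_1$ replaced by $\pm e_1$ or $\pm e_2$ and $P$ by $\pm Q$ as appropriate, handles the other three edges. In the cycle, each edge's starting point is the perturbed endpoint of the previous edge rather than the exact corner. This only adds an $\mathcal{O}(\eta)$ initial offset, which is absorbed in Step 1 in the same way.
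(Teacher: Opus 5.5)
Your proposal is correct and follows essentially the same route as the paper's proof: unroll the recursion to get an $\mathcal{O}(\eta)$ drift from the grid points $(a+k\eta,b)$, split off the term $\sum_k\langle F(z_k),r_k\rangle$, use Lipschitzness of $P$ to replace $P(z_k)$ by its value on the grid, and finish with a left Riemann sum plus the leftover interval of length at most $\eta$. Your two extra remarks are points the paper leaves implicit and do not change the argument: the path must stay in the neighborhood where $F$ is $C^1$, and the later edges start from an $\mathcal{O}(\eta)$-perturbed corner, which the same drift bound absorbs.
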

\end{tcolorbox}

\begin{proof}
First, define the following auxiliary grid on the segment $[a, a+\alpha]$, 
\begin{equation*}
\bar{z}_k := (a + k \eta, b), \quad k= 0, \dots, n\,.
\end{equation*}
Then it follows that: 
\begin{equation*}
z_k = \bar{z}_k + \sum_{j=0}^{n-1} r_j\,.
\end{equation*}
and therefore we have for any $1 \leq k \leq n$, 
\begin{equation}
\label{eq:diff-zk-bar-zk}
\|z_k - \bar{z}_k \| = \left\| \sum_{j=0}^{k-1} r_j \right\| = \mathcal{O}(n \eta^2) = \mathcal{O}(\alpha \eta)\,,  
\end{equation}
where the last estimate follows from recalling that $n = \lfloor \frac{\alpha}{\eta}\rfloor\,.$ 
This shows that $z_k$ stays $\mathcal{O}(\eta)$-close to~$\bar{z}_k\,.$ 

Now we can estimate the path sum in \eqref{eq:regret-as-path-sum} as follows: 
\begin{align}
\label{eq:riemann-estimate-proof}
\sum_{k=0}^{n-1} \langle F(z_k), z_{k+1} - z_k \rangle 
&= \sum_{k=0}^{n-1} \langle F(z_k), \eta e_1 + r_k \rangle \nonumber\\
&= \eta \sum_{k=0}^{n-1} P(z_k) + \sum_{k=0}^{n-1} \langle F(z_k), r_k\rangle \nonumber\\
&= \eta \sum_{k=0}^{n-1} P(\bar{z}_k) + \eta \eta \sum_{k=0}^{n-1} P(z_k) - P(\bar{z}_k) + \sum_{k=0}^{n-1} \langle F(z_k), r_k\rangle\,,
\end{align}
where the first identity follows from using \eqref{eq:update-rule-zk-horizontal}, the second one follows from recalling that $P$ is the first coordinate function of $F$. 

We estimate each one of the three terms above separately. 

\noindent\textit{Last term in \eqref{eq:riemann-estimate-proof}.} As $F$ is bounded by continuity of $F$ on the cimpact neighborhood, we obtain: 
\begin{equation}
\label{eq:term3}
\sum_{k=0}^{n-1} \langle F(z_k), r_k \rangle = \mathcal{O}\left(n \cdot \max_{0 \leq k \leq n-1} \|r_k\|\right) = \mathcal{O}(n \eta^2) = \mathcal{O}(\eta)\,,
\end{equation}
where the last step uses again the fact that $n = \lfloor \frac{\alpha}{\eta}\rfloor.$ 

\noindent\textit{Second term in \eqref{eq:riemann-estimate-proof}.} Using Lipschitzness of $P$ and recalling that $\max_{0 \leq k \leq n-1} \|z_k - \bar{z}_k\| = \mathcal{O}(\eta)$ proved in \eqref{eq:diff-zk-bar-zk}, we have $P(z_k) - P(\bar{z}_k) = \mathcal{O}(\eta)$ and it follows that: 
\begin{equation}
\label{eq:term2}
\eta \sum_{k=0}^{n-1} P(z_k) - P(\bar{z}_k) = \mathcal{O}(n \eta^2) = \mathcal{O}(\eta)\,.
\end{equation}

\noindent\textit{First term in \eqref{eq:riemann-estimate-proof}.} Denoting $\bar{x}_k = a + k \eta$ for $0 \leq k \leq n-1$, we can write: 
\begin{align}
\label{eq:term1}
\eta \sum_{k=0}^{n-1} P(\bar{z}_k) 
&= \sum_{k=0}^{n-1} \int_{\bar{x}_k}^{\bar{x}_{k+1}} P(\bar{z}_k) dx \nonumber\\
&= \sum_{k=0}^{n-1} \int_{\bar{x}_k}^{\bar{x}_{k+1}} P(x,b) dx 
+ \sum_{k=0}^{n-1} \int_{\bar{x}_k}^{\bar{x}_{k+1}} P(\bar{z}_k) -  P(x,b) dx \nonumber\\
&= \int_0^{a+n \eta} P(x,b) dx + R_n\,, 
\end{align}
where the residual $R_n$ is defined as follows: 
\begin{equation*}
R_n := \sum_{k=0}^{n-1} \int_{\bar{x}_k}^{\bar{x}_{k+1}} P(\bar{z}_k) - P(x,b) dx\,.
\end{equation*}

Using again Lipschitzness of $P$, the above residual can be bounded as follows: 
\begin{align}
\label{eq:term1-Rn-bound}
|R_n| &\leq \sum_{k=0}^{n-1} \int_{\bar{x}_k}^{\bar{x}_{k+1}} |P(\bar{z}_k) - P(x,b)| \,dx \nonumber\\ 
&= \mathcal{O}\left( \sum_{k=0}^{n-1} \int_{\bar{x}_k}^{\bar{x}_{k+1}} |\bar{x}_k - x| \,dx \right) \nonumber\\
&=  \mathcal{O}\left( n \cdot  \max_{0 \leq k \leq n-1} |\bar{x}_{k+1} - \bar{x}_k|^2 \right) \nonumber\\
&= \mathcal{O}(n \eta^2) \nonumber\\
&= \mathcal{O}(\eta)\,. 
\end{align}
Now, observe that: 
\begin{equation*}
\int_a^{a+\alpha} P(x,b) dx = \int_a^{a+n \eta} P(x,b) dx + \int_{a+n \eta}^{a+\alpha} P(x,b) dx\,.
\end{equation*}
Hence, since $0 \leq \alpha - n \eta \leq \eta$ by definition of $n = \lfloor \frac{\alpha}{\eta} \rfloor$ and since $P(x,b)$ is bounded over the compact neighborhood of the horizontal segment, we obtain: 
\begin{equation}
\label{eq:segment-discrete-residual}
\int_a^{a+\alpha} P(x,b) dx = \int_a^{a+n \eta} P(x,b) dx + \mathcal{O}(\eta)\,.
\end{equation}
Combining all the above estimates (\eqref{eq:term3}, \eqref{eq:term2}, \eqref{eq:term1}, \eqref{eq:term1-Rn-bound}  and \eqref{eq:segment-discrete-residual}) in the main term \eqref{eq:riemann-estimate-proof}, we obtain the following estimate: 
\begin{equation*}
\sum_{k=0}^{n-1} \langle F(z_k), z_{k+1} - z_k \rangle = \int_a^{a+\alpha} P(x,b) dx + \mathcal{O}(\eta)\,,
\end{equation*}
concluding the proof of Lemma~\ref{lem:horizontal-edge-approx}. 
\end{proof}

The previous estimates (Lemma~\ref{lem:horizontal-edge-approx}) also imply that the actual discrete trajectory $(z_t)$ remains in~$K_z$ as defined in~\eqref{eq:def-Kz}.
Indeed, along each edge the one-step error is $\mathcal{O}(\eta^2)$, and each edge lasts $\mathcal{O}(1/\eta)$ rounds, so the accumulated deviation from the intended rectangular path is
$\mathcal{O}(\eta)$. Choosing $\eta$ sufficiently small so that this deviation is at most
$\rho_z$, the iterates remain in $K_z$ throughout the cycle. Consequently, by the conditional projection-inactivity argument above (see end of step 5), the projection is inactive at every round of the cycle.\\

\noindent\textbf{Step 7: Regret over one cycle of length $N$ around the rectangle $R$.} Applying Lemma~\ref{lem:horizontal-edge-approx} to the edge AB, and similarly to BC, CD, DA (which follow using similar proofs with minor adjustments, e.g. replacing $e_1$ by $-e_1$, $e_2$ and $-e_2$ and the coordinate function $P_u$ by $Q_u$), we sum all the estimates to obtain: 
\begin{equation*}
\sum_{t=1}^N \langle F_u(z_t), z_{t+1} - z_t \rangle = I_R + \mathcal{O}(\eta)\,,
\end{equation*}
where $N = 2(n_1 + n_2)$ is the number of steps in one cycle of the iterates around the rectangle $R = ABCD.$ 

Summing up \eqref{eq:reg-compon-decomp} over one cycle of $N$ steps yields: 
\begin{equation}
\label{eq:path-sum-to-I_R}
\sum_{t=1}^N \ell_t(x_t) - \ell_t(u) = - \frac{1}{\eta} \sum_{t=1}^N \langle F_u(z_t), z_{t+1} - z_t \rangle + \mathcal{O}(N \eta)\,.
\end{equation}

Since one cycle has $N = \Theta(\frac{1}{\eta})$ steps, we have $N \eta = \mathcal{O}(1)$. Combining this with \eqref{eq:path-sum-to-I_R}, we obtain: 
\begin{equation*}
\sum_{t=1}^N \ell_t(x_t) - \ell_t(u) = - \frac{1}{\eta} I_R + \mathcal{O}(1)\,.
\end{equation*}
Recalling from \eqref{eq:rect-integral-neg-bound} that we have proved that $I_R \leq - \gamma \alpha \beta < 0$, it follows that: 
\begin{equation*}
\sum_{t=1}^N \ell_t(x_t) - \ell_t(u) \geq + \frac{\gamma \alpha \beta} {\eta} + \mathcal{O}(1)\,.
\end{equation*}

As $\frac{\gamma \alpha \beta}{\eta} \to +\infty$ when $\eta \to 0$, there exists $\eta_0 > 0$ and $c_0 > 0$ such that for all $\eta \in (0, \eta_0]$,  
\begin{equation}
\label{eq:per-cycle-regret-bound}
\sum_{t=1}^N \ell_t(x_t) - \ell_t(u) \geq \frac{c_0}{\eta}\,. 
\end{equation} 
Therefore, we have shown that one full cycle results in regret of order $\frac{1}{\eta}$. Since one cycle lasts $N = \Theta(\frac{1}{\eta})$ rounds, this gives a positive constant regret per round (on average).\\

\noindent\textbf{Step 8: Linear regret lower bound by repeating cycles.} We conclude the proof of Theorem~\ref{thm:linear-reg-lower-bound} by repeating the previous cycle over the time horizon $T$. 

Let $K := \lfloor \frac{T}{N} \rfloor$ for $T \geq N$  be the number of complete cycles up to time $T$. We know from \eqref{eq:per-cycle-regret-bound} in the previous step that each cycle contributes at least $\frac{c_0}{\eta}$ regret. Thus, we have: 
\begin{equation*}
\sum_{t=1}^T \ell_t(x_t) - \ell_t(u) \geq K \frac{c_0}{\eta} - \mathcal{O}(1)\,.
\end{equation*}
Since $N \leq \frac{c_2}{\eta}$ for some constant $c_2 > 0$, we have: 
\begin{equation*}
K \geq \frac{T}{N} -1 \geq \eta \frac{T}{c_2} - 1\,.
\end{equation*}
Substituting this in the previous lower bound yields: 
\begin{equation*}
    \sum_{t=1}^T \ell_t(x_t) - \ell_t(u) \geq \left( \frac{\eta T}{c_2} - 1 \right) \frac{c_0}{\eta} - \mathcal{O}(1) = \frac{c_0}{c_2} T - \frac{c_0}{\eta} - \mathcal{O}(1) = \frac{c_0}{c_2} T - \mathcal{O}(1)\,, 
\end{equation*}
which concludes the proof of Theorem~\ref{thm:linear-reg-lower-bound}. 

\section{Proof of Theorem~\ref{thm:bandit}: Expected Regret Bound in the Bandit Setting}
\label{appx:proofs-bandit}

Recall first that the regret of bandit OGD (Algorithm~\ref{alg4}) is defined for any time horizon $T \geq 1$ by: 
\begin{equation}
\label{eq:regret-bogd}
R_T := \sum_{t=1}^T \ell_t(\hat{x}_t) - \min_{x \in \X} \sum_{t=1}^T \ell_t(x)\, 
= \sum_{t=1}^T h_t(\hat{z}_t) - \min_{z \in \Y} \sum_{t=1}^T h_t(z)\,,
\end{equation}
where the last equality follows from using the notation $\hat{z}_t := q(\hat{x}_t)$ for any $t \geq 1$ and recalling that $\ell_t = h_t \circ q$ with $q$ being a diffeomorphism. 

To prove Theorem~\ref{thm:bandit}, we first decompose the expected regret into four different error terms.

\begin{tcolorbox}[colframe=white!, top=2pt,left=2pt,right=2pt,bottom=2pt] 
\begin{lemma}[Expected regret bound decomposition]
\label{lem:expected-reg-decomp}
Fix a time horizon $T \geq 1$.  Let $z^{\star} \in \argmin_{z \in \Y} \sum_{t=1}^T h_t(z), z^{\star}_{\delta} \in \argmin_{z \in \Y_{\delta}} \sum_{t=1}^T h_t(z)\,$ and let $\mathcal{F}_{t} := \sigma(\zeta_1, \dots, \zeta_t)$ be the filtration of $\sigma$-algebras induced by the random variables $(\zeta_t)$ used in bandit OGD. Then the expected regret $\mathbb{E}[R_T]$ of bandit OGD (Algorithm~\ref{alg4}) can be upper-bounded as follows: 
\begin{equation*}
\mathbb{E}[R_T] \leq R_T^{(1)} +  R_T^{(2)} + R_T^{(3)} + R_T^{(4)}\,,
\end{equation*}
where each error term in the right-hand side above is defined as follows: 
\begin{align}
R_T^{(1)} &:= \mathbb{E}\left[\sum_{t=1}^T \langle \tilde{g}_t, z_t - z^{\star}_{\delta} \rangle \right]\,,  &&\hfill \text{(perturbed OMD regret for stochastic linearized losses)} \label{eq:RT1}\\
R_T^{(2)} &:= \sum_{t=1}^T h_t(z^{\star}_{\delta}) - h_t(z^{\star})\,,  &&\hfill \text{(domain shrinkage error)}\label{eq:RT2}\\
R_T^{(3)} &:= \mathbb{E}\left[\sum_{t=1}^T \langle b_t, z^{\star}_{\delta} - z_t \rangle \right]\,,  &&\hfill \text{(cumulative gradient estimation bias)}\label{eq:RT3}\\
b_t &:= \mathbb{E}[\tilde{g}_t |\mathcal{F}_{t-1}] - \nabla h_t(z_t)\,,  &&\hfill \text{(gradient estimation bias)}\\
R_T^{(4)} &:= \mathbb{E}\left[ \sum_{t=1}^T h_t(\hat{z}_t) - h_t(z_t)\right]  &&\hfill \text{(smoothing error)} \label{eq:RT4}\,.
\end{align}
\end{lemma}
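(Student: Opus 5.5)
The plan is to reduce the claim to an exact algebraic identity plus convexity, using only the hidden-space rewriting of the regret in \eqref{eq:regret-bogd}. Since $z^\star$ minimizes $\sum_t h_t$ over $\Y$, we have $R_T = \sum_t h_t(\hat z_t) - \sum_t h_t(z^\star)$. We then insert the mapped iterates $z_t=q(x_t)$ and the shrunk comparator $z^\star_\delta$ by adding and subtracting terms, which gives
\[
R_T = \sum_{t=1}^T \bigl(h_t(\hat z_t)-h_t(z_t)\bigr) + \sum_{t=1}^T \bigl(h_t(z_t)-h_t(z^\star_\delta)\bigr) + \sum_{t=1}^T \bigl(h_t(z^\star_\delta)-h_t(z^\star)\bigr).
\]
After taking expectations, the first sum is exactly $R_T^{(4)}$. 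The third sum is deterministic, because the adversary is oblivious and so $z^\star,z^\star_\delta$ do not depend on the $\zeta_t$; it equals $R_T^{(2)}$.

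The middle sum is handled with convexity of $h_t$ on $\Y$. Both $z_t\in\Y_\delta\subseteq\Y$ and $z^\star_\delta\in\Y_\delta$ hold, since $x_t\in\X_\delta$ by the projection in Algorithm~\ref{alg4}. Hence
\[
h_t(z_t)-h_t(z^\star_\delta)\le \langle \nabla h_t(z_t), z_t-z^\star_\delta\rangle .
\]
Next we rewrite $\nabla h_t(z_t) = \mathbb{E}[\tilde g_t\mid\mathcal F_{t-1}] - b_t$, which is just the definition of $b_t$. This splits the linearized term into $\langle \mathbb{E}[\tilde g_t\mid\mathcal F_{t-1}], z_t-z^\star_\delta\rangle$ and $\langle b_t, z^\star_\delta - z_t\rangle$, with the sign flip absorbed into the second term. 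The latter sums to $R_T^{(3)}$ after taking expectations.

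The one step needing care is the measurability and tower-property argument. It turns $\mathbb{E}\langle \mathbb{E}[\tilde g_t\mid\mathcal F_{t-1}], z_t-z^\star_\delta\rangle$ into $\mathbb{E}\langle \tilde g_t, z_t-z^\star_\delta\rangle$, which is the summand of $R_T^{(1)}$. We check this as follows. The iterate $x_t$ is a deterministic function of $\zeta_1,\dots,\zeta_{t-1}$ through the update, so $x_t$ is $\mathcal F_{t-1}$-measurable, and therefore so are $z_t=q(x_t)$ and $J_q(x_t)^{-\top}$. The comparator $z^\star_\delta$ is deterministic, again because the adversary is oblivious. So $z_t-z^\star_\delta$ is $\mathcal F_{t-1}$-measurable and can be pulled inside the conditional expectation. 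Integrability is immediate: $\|g_t\|\le dM/\delta$ and $\|J_q^{-\top}\|\le G$, and all iterates lie in compact sets. The same measurability of $z_t$ justifies treating $b_t$ as an $\mathcal F_{t-1}$-measurable quantity.

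Summing over $t$ and collecting the four pieces yields $\mathbb{E}[R_T]\le R_T^{(1)}+R_T^{(2)}+R_T^{(3)}+R_T^{(4)}$. The only inequality used is the convexity step, so the bound is tight up to linearization. The main obstacle is therefore just the conditioning bookkeeping: the obliviousness of the adversary and the fact that $x_t$ is $\mathcal F_{t-1}$-measurable.
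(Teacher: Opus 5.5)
Your proposal is correct and follows essentially the same route as the paper. Both arguments split off the shrinkage term $R_T^{(2)}$ and the smoothing term $R_T^{(4)}$, then linearize $h_t(z_t)-h_t(z^\star_\delta)$ by convexity and substitute $\nabla h_t(z_t)=\mathbb{E}[\tilde g_t\mid\mathcal F_{t-1}]-b_t$. They conclude via $\mathcal F_{t-1}$-measurability of $z_t-z^\star_\delta$ and the tower property, which you justify more explicitly than the paper through obliviousness of the adversary and integrability.
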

\end{tcolorbox}

\begin{proof}
Using the regret definition~\eqref{eq:regret-bogd}, we have the following decomposition: 
\begin{equation}
\label{eq:regret-decomp1}
\mathbb{E}[R_T] = \mathbb{E}\left[\sum_{t=1}^T h_t(\hat{z}_t) - h_t(z^{\star}_{\delta})\right] 
+ \sum_{t=1}^T h_t(z^{\star}_{\delta}) - h_t(z^{\star})
= \mathbb{E}\left[\sum_{t=1}^T h_t(\hat{z}_t) - h_t(z^{\star}_{\delta})\right] + R_T^{(2)}\,.
\end{equation}
To control regret against the shrunk comparator $z_{\delta}^{\star}$, we start by decomposing the instantaneous regret. Since $z_t$ is $\mathcal{F}_{t-1}$-measurable, we have: 
\begin{equation}
\label{eq:1-reg-decomp}
\mathbb{E}[ h_t(\hat{z}_t) - h_t(z^{\star}_{\delta}) |\mathcal{F}_{t-1}] 
= \mathbb{E}[ h_t(\hat{z}_t) - h_t(z_t) |\mathcal{F}_{t-1}] 
+ h_t(z_t) -  h_t(z_{\delta}^{\star})\,.
\end{equation}
Then by convexity of $h_t$, we have: 
\begin{equation}
\label{eq:2-reg-decomp}
h_t(z_t) -  h_t(z_{\delta}^{\star}) \leq \langle \nabla h_t(z_t), z_t- z_{\delta}^{\star} \rangle\,.
\end{equation}
Recall now that we denote by $b_t$ the bias due to one-point smoothing defined by: 
\begin{equation*}
b_t = \mathbb{E}[\tilde{g}_t|\mathcal{F}_{t-1}] - \nabla h_t(z_t)\,.
\end{equation*}
Note here that $\tilde{g}_t := J_q(x_t)^{-\top} g_t$ is is not computed and just used here in the analysis whereas $g_t$ is the one-point estimator actually computed in BOGD. 
It follows then that: 
\begin{equation}
\label{eq:3-reg-decomp}
\langle \nabla h_t(z_t), z_t - z_{\delta}^{\star} \rangle 
= \langle \mathbb{E}[\tilde{g}_t|\mathcal{F}_{t-1}], z_t - z_{\delta}^{\star}\rangle - \langle b_t, z_t - z_{\delta}^{\star}\rangle\,.
\end{equation}
Combining \eqref{eq:1-reg-decomp}, \eqref{eq:2-reg-decomp} and \eqref{eq:3-reg-decomp}, taking expectations and noting that $z_t - z_{\delta}^{\star}$ is $\mathcal{F}_{t-1}$-measurable, we obtain: 
\begin{equation*}
\mathbb{E}[h_t(z_t) -  h_t(z_{\delta}^{\star})] 
\leq  \mathbb{E}[\langle \tilde{g}_t, z_t - z_{\delta}^{\star}\rangle] - \mathbb{E}[\langle b_t, z_t - z_{\delta}^{\star}\rangle] + \mathbb{E}[h_t(\hat{z}_t) - h_t(z_t)]\,.
\end{equation*}
Summing the above inequality over $t$ yields: 
\begin{align}
\label{eq:4-reg-decomp}
\mathbb{E}\left[ \sum_{t=1}^T h_t(z_t) -  h_t(z_{\delta}^{\star}) \right] 
&\leq  \mathbb{E}\left[\sum_{t=1}^T \langle \tilde{g}_t, z_t - z_{\delta}^{\star}\rangle \right] + \mathbb{E}\left[\sum_{t=1}^T \langle b_t, z^{\star}_{\delta} - z_t \rangle \right] + \mathbb{E}\left[ \sum_{t=1}^T h_t(\hat{z}_t) - h_t(z_t)\right]\nonumber\\ 
&= R_T^{(1)} + R_T^{(3)} + R_T^{(4)}. 
\end{align}
Finally, combining \eqref{eq:4-reg-decomp} with \eqref{eq:regret-decomp1} gives the desired bound: 
\begin{equation*}
\mathbb{E}[R_T] \leq R_T^{(1)} + R_T^{(2)} + R_T^{(3)} + R_T^{(4)}\,.
\end{equation*}
\end{proof}

Given the regret decomposition of Lemma~\ref{lem:expected-reg-decomp}, we control each of the error terms $R_T^{(i)}, i \in \{ 1,2,3,4\}$ defined in \eqref{eq:RT1} to \eqref{eq:RT4} respectively, in a dedicated lemma.   

\begin{tcolorbox}[colframe=white!, top=2pt,left=2pt,right=2pt,bottom=2pt] 
\begin{lemma}[Domain shrinkage error]
\label{lem:domain-shrinkage-error}
Let $T \geq 1$, let $z^{\star} \in \argmin_{z \in \Y} \sum_{t=1}^T h_t(z)$, and $z^{\star}_{\delta} \in \argmin_{z \in \Y_{\delta}} \sum_{t=1}^T h_t(z)\,.$ Then we have: 
\begin{equation*}
R_T^{(2)} = \sum_{t=1}^T h_t(z_{\delta}^{\star}) - h_t(z^{\star}) \leq \delta G_FDT\,,
\end{equation*}
where we recall that $\delta$ is the smoothing parameter in the one-point zeroth-order gradient estimator, $G_F$ is the uniform Lipschitz constant of $\ell_t$ and $D$ is the finite diameter of the compact set $\X$. 
\end{lemma}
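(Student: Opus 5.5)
The plan is the standard Flaxman--Kalai--McMahan shrinkage argument, carried out in the original space and then transported back to the hidden space through the bijection $q$. The key observation is that $z^\star_\delta$ minimizes $\sum_t h_t$ over $\Y_\delta=q(\X_\delta)$. It therefore suffices to exhibit \emph{one} point of $\Y_\delta$ whose cumulative loss is within $\delta G_F D T$ of that of $z^\star$; optimality of $z^\star_\delta$ then gives the bound directly.

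\textbf{Step 1 (comparator in $\X_\delta$).} Let $x^\star:=q^{-1}(z^\star)\in\X$ and set $x^\star_\delta:=(1-\delta)x^\star$. By definition $\X_\delta=(1-\delta)\X$, so $x^\star_\delta\in\X_\delta$ and hence $q(x^\star_\delta)\in\Y_\delta$. Optimality of $z^\star_\delta$ over $\Y_\delta$ then gives
\begin{equation*}
\sum_{t=1}^T h_t(z^\star_\delta)\le \sum_{t=1}^T h_t(q(x^\star_\delta))=\sum_{t=1}^T \ell_t(x^\star_\delta).
\end{equation*}

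\textbf{Step 2 (Lipschitz control in $\X$).} By Assumption~\ref{as:boundedness-grad-bregman}, $\|\nabla\ell_t\|\le G_F$ on the convex set $\X$. The mean value theorem along the segment from $x^\star$ to $x^\star_\delta$, which lies in $\X$, then yields $\ell_t(x^\star_\delta)-\ell_t(x^\star)\le G_F\|x^\star_\delta-x^\star\|=G_F\delta\|x^\star\|$. Assumption~\ref{as:bounded-euclidean-diam} gives $\|x^\star\|\le D$. Since $h_t(z^\star)=\ell_t(x^\star)$, summing over $t$ and combining with Step 1 yields $R_T^{(2)}\le \delta G_F D T$.

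\textbf{Expected obstacle.} Once one works in $x$-space rather than $z$-space, there is essentially no difficulty. The only subtle point is to avoid taking a convex combination such as $(1-\delta)z^\star$ in the hidden space. That point need not belong to $\Y_\delta$, and bounding $h_t$ there would require a Lipschitz constant of $h_t$ involving $J_q^{-\top}$, which would introduce an extra factor of $G$. Shrinking in $\X$ and using only the bijectivity of $q$ avoids both problems. I would also note that the normalization $0\in\X$ is what makes $x\mapsto(1-\delta)x$ map $\X$ into $\X_\delta\subseteq\X$.
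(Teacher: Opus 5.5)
Your proof is correct and follows essentially the same argument as the paper. The paper also moves the comparator problem to $\X_\delta=(1-\delta)\X$ through the bijection $q$, and bounds $|\ell_t((1-\delta)x)-\ell_t(x)|\le \delta G_F D$ using Lipschitzness together with $\|x\|\le D$. The only difference is that the paper states this bound for every $x\in\X$ and then takes the minimum, whereas you plug in $x^\star=q^{-1}(z^\star)$ directly; your remarks that the segment stays in $\X$ because $0\in\X$, and that one should shrink in $\X$ rather than in hidden coordinates, make explicit what the paper leaves implicit.
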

\end{tcolorbox}

\begin{proof}
Since $\Y_{\delta} = q(\X_{\delta})$, $q$ is a diffeomorphism,  $\X_{\delta} = (1-\delta) \X$ and $\ell_t = h_t \circ q$, the comparator term over the shrunk set $\X_{\delta}$ can be rewritten as follows: 
\begin{equation}
\label{eq:min-eq}
\min_{z \in \Y_{\delta}} \sum_{t=1}^T h_t(z) = \min_{x \in \X_{\delta}} \sum_{t=1}^T \ell_t(x) = \min_{x \in \X} \sum_{t=1}^T \ell_t((1-\delta) x)\,.
\end{equation}
Then for any $x \in \X,$ we have: 
\begin{equation*}
\sum_{t=1}^T \ell_t((1-\delta)x)
\leq \sum_{t=1}^T |\ell_t((1-\delta)x) - \ell_t(x)| + \sum_{t=1}^T \ell_t(x) \leq \delta G_F DT + \sum_{t=1}^T h_t(q(x))\,,
\end{equation*}
where the last inequality uses uniform $G_F$-Lipschitzness of the loss functions $\ell_t$ and boundedness of the set $\X$ which has finite diameter $D$. 

Taking the minimum over $x \in \X$, using \eqref{eq:min-eq} and recalling that $z^{\star} \in \argmin_{z \in \Y} \sum_{t=1}^T h_t(z)$ and $z^{\star}_{\delta} \in \argmin_{z \in \Y_{\delta}} \sum_{t=1}^T h_t(z)$ yields the desired inequality: 
\begin{equation*}
R_T^{(2)} = \sum_{t=1}^T h_t(z_{\delta}^{\star}) - h_t(z^{\star}) \leq \delta G_FDT\,.
\end{equation*}
\end{proof}

\begin{tcolorbox}[colframe=white!,
top=2pt,left=2pt,right=2pt,bottom=2pt]
\begin{lemma}[Gradient estimation bias]
\label{lem:grad-estimation-bias}
For any $T \geq 1$, we have: 
\begin{equation*} 
|R_T^{(3)}| = \left| \mathbb{E}\left[ \sum_{t=1}^T \langle b_t, z_{\delta}^{\star} - z_t  \rangle \right]\right|   \leq \delta G^2 H DT\,,
\end{equation*}
where we recall that $H$ is the uniform smoothness constant of the loss $\ell_t.$
\end{lemma}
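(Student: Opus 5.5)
We bound each summand pathwise in conditional expectation and then sum over $t$. By the Cauchy--Schwarz inequality,
\[
|\langle b_t, z^\star_\delta - z_t\rangle| \le \|b_t\|\,\|z^\star_\delta - z_t\|,
\]
so two estimates suffice: $\|z^\star_\delta - z_t\|\le GD$ and $\|b_t\|\le \delta G H$. Combining them gives $|R_T^{(3)}|\le \sum_{t=1}^T \mathbb{E}\big[|\langle b_t, z^\star_\delta - z_t\rangle|\big] \le \delta G^2HDT$.

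The distance bound is immediate. Both $z_t = q(x_t)$ and $z^\star_\delta = q(x^\star_\delta)$ come from points $x_t, x^\star_\delta \in \X_\delta\subseteq\X$ (using $0\in\X$ and convexity, so $(1-\delta)\X\subseteq \X$). Since $q$ is $G$-Lipschitz by Assumption~\ref{as:reg-R-reparam-q-smoothness}, we get $\|z_t - z^\star_\delta\|\le G\|x_t - x^\star_\delta\|\le GD$ by Assumption~\ref{as:bounded-euclidean-diam}.

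For the bias, note that $x_t$ is $\mathcal F_{t-1}$-measurable, so $J_q(x_t)^{-\top}$ can be pulled out of the conditional expectation. By the chain rule, $\nabla h_t(z_t) = J_q(x_t)^{-\top}\nabla\ell_t(x_t)$, hence
\[
b_t = J_q(x_t)^{-\top}\big(\mathbb E[g_t\mid\mathcal F_{t-1}] - \nabla \ell_t(x_t)\big).
\]
The classical spherical-smoothing identity of \citet{flaxman-kalai-mcmahan05} gives $\mathbb E[g_t\mid \mathcal F_{t-1}] = \nabla \hat\ell_t(x_t)$, where $\hat\ell_t(x) := \mathbb E_{u\sim U(B_2)}[\ell_t(x+\delta u)]$ is the ball-smoothed loss. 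This uses that the adversary is oblivious, so $\ell_t$ is fixed given $\mathcal F_{t-1}$, and that $x_t+\delta B_2\subseteq \X$. Differentiating under the expectation, $\nabla\hat\ell_t(x_t) - \nabla\ell_t(x_t) = \mathbb E_u[\nabla\ell_t(x_t+\delta u)-\nabla\ell_t(x_t)]$, and Assumption~\ref{as:smoothness-bandit} gives norm at most $H\delta$. Finally, $J_q(x_t)^{-1} = J_{q^{-1}}(z_t)$, whose norm is bounded by $G$ by Assumption~\ref{as:reg-R-reparam-q-smoothness}(i), so $\|b_t\|\le G H\delta$.

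The main point requiring care is the bias step. One must check that the smoothing identity (Stokes/divergence theorem on the ball) applies, which requires $\ell_t$ to be defined and $C^1$ on $x_t+\delta B_2$. This holds because $\X_\delta + \delta B_2\subseteq \X$ under the normalization $B_2\subseteq\X$: for $x = (1-\delta)x'$ with $x'\in\X$ and $u\in B_2$, we have $x+\delta u = (1-\delta)x' + \delta u\in\X$ by convexity. One must also check that the operator norm of $J_q^{-\top}$ equals that of $J_{q^{-1}}$, which holds since transposition preserves the spectral norm. The remaining bookkeeping — tower property and measurability — is routine.
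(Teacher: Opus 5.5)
Your proof is correct and takes essentially the same approach as the paper. You apply Cauchy--Schwarz pathwise, get $\|z_t - z^\star_\delta\|\le GD$ from the $G$-Lipschitzness of $q$ and the diameter bound, and get $\|b_t\|\le \delta GH$ from the Flaxman--Kalai--McMahan identity $\mathbb{E}[g_t\mid\mathcal F_{t-1}]=\nabla\hat\ell_t(x_t)$, the chain rule, $H$-smoothness and $\|J_q(x_t)^{-\top}\| = \|J_{q^{-1}}(z_t)\|\le G$. Your extra checks that $x_t+\delta B_2\subseteq\X$ and that the inverse-Jacobian bound of Assumption~\ref{as:reg-R-reparam-q-smoothness}(i) is the one in play make explicit details the paper leaves implicit.
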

\end{tcolorbox}

\begin{proof}
First, using the Cauchy-Schwartz inequality, it follows that: 
\begin{equation}
\label{eq:ics-bias}
|R_T^{(3)}| = \left| \mathbb{E}\left[ \sum_{t=1}^T \langle b_t, z_{\delta}^{\star} - z_t  \rangle \right]\right|
\leq \mathbb{E}\left[ \sum_{t=1}^T \|b_t\| \cdot \|z_t - z_{\delta}^{\star}\| \right]\,. 
\end{equation}

We control each one of the terms in the right-hand side separately.

Since $z_t = q(x_t)$ and $z_{\delta}^{\star} = q(x_{\delta}^{\star})$ for some  $x_t \in \X$ and $x_{\delta}^{\star} \in \X_{\delta} \subset \X$, we have: 
\begin{equation}
\label{eq:bias-proof2}
\|z_t - z_{\delta}^{\star}\| = \|q(x_t) - q(x_{\delta}^{\star})\| \leq G \|x_t - x_{\delta}^{\star}\| \leq GD\,,
\end{equation}
where the first inequality follows from using $G$-Lipschitzness of $q$ (Assumption~\ref{as:reg-R-reparam-q-smoothness}) and the second one uses the definition of the diameter of $\X$ supposed to be finite.  

We now control the norm of the bias $\|b_t\|$ for any $t$. Define for any $x \in \X$ the smoothed loss: 
\begin{equation*}
\bar{\ell}_t(x) := \mathbb{E}_{v \sim \mathcal{U}(\mathbb{B})}[\ell_t(x + \delta v)]\,
\end{equation*}
where $v$ is a random variable with uniform distribution $\mathcal{U}(\mathbb{B})$ on the unit ball $\mathbb{B}$. It is known from \citet[Lemma~1]{flaxman-kalai-mcmahan05} that the conditional expectation of the spherical estimator $g_t$ is equal to the gradient of the ball-smoothed loss: 
\begin{equation*}
\mathbb{E}[g_t |\mathcal{F}_{t-1}] = \nabla \bar{\ell}_t(x_t)\,.
\end{equation*}
The (hidden-space ghost) gradient estimator of $\nabla h_t(z_t)$ is 
\begin{equation*}
\tilde{g}_t = J_q(x_t)^{-\top} g_t\,.
\end{equation*}
(Recall that $\nabla \ell_t(x_t) = J_q(x_t)^{\top} \nabla h_t(z_t)$ by the chain rule.)
Since $x_t$ is $\mathcal{F}_{t-1}$-measurable, it follows that: 
\begin{equation}
\label{eq:expectation-tilde-g}
\mathbb{E}[\tilde{g}_t|\mathcal{F}_{t-1}] = J_q(x_t)^{-\top} \mathbb{E}[g_t|\mathcal{F}_{t-1}] = J_q(x_t)^{-\top} \nabla \bar{\ell}_t(x_t)\,.
\end{equation}
Moreover, we have by the chain rule $\nabla \ell_t(x_t) = J_q(x_t)^{\top} \nabla h_t(z_t)$ which implies that $\nabla h_t(z_t)= J_q(x_t)^{-\top} \nabla \ell_t(x_t).$ Therefore, using this identity together with \eqref{eq:expectation-tilde-g}, we can express the bias $b_t$ as follows: 
\begin{align*}
b_t &= \mathbb{E}[\tilde{g}_t|\mathcal{F}_{t-1}] - \nabla h_t(z_t)\nonumber\\  
&= J_q(x_t)^{-\top} (\nabla \bar{\ell}_t(x_t) - \nabla \ell_t(x_t)) \nonumber\\ 
&= J_q(x_t)^{-\top} \mathbb{E}_{v \sim \mathcal{U}(\mathbb{B})}[\nabla \ell_t(x_t + \delta v) - \nabla \ell_t(x_t) ]\,.
\end{align*}
Taking the norm, using boundedness of the Jacobian of $q$ (Assumption~\ref{as:reg-R-reparam-q-smoothness}) and smoothness of the loss function $\ell_t$ (Assumption~\ref{as:smoothness-bandit}), we have: 
\begin{equation}
\label{eq:bias-proof3}
\|b_t\|_2 \leq \|J_q(x_t)^{-\top}\| \mathbb{E}_{v \sim \mathcal{U}(\mathbb{B})}[\|\nabla \ell_t(x_t + \delta v) - \nabla \ell_t(x_t) \|] \leq \delta G H \|v\|_2 = \delta GH\,. 
\end{equation}
Combining \eqref{eq:bias-proof3} and \eqref{eq:bias-proof2} in \eqref{eq:ics-bias}, we obtain the desired inequality: 
\begin{equation*}
|R_T^{(3)}| \leq \delta G^2 H D T\,.
\end{equation*}
\end{proof}

\begin{tcolorbox}[colframe=white!,
top=2pt,left=2pt,right=2pt,bottom=2pt]
\begin{lemma}[Smoothing error]
\label{lem:smoothing-error}
For any $T \geq 1$, we have: 
\begin{equation*}
|R_T^{(4)}| \leq \left| \mathbb{E}\left[ \sum_{t=1}^T h_t(\hat{z}_t) - h_t(z_t) \right] \right| \leq \delta G_F T\,.
\end{equation*}
\end{lemma}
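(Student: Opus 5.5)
The plan is to pass back from hidden coordinates to the original ones and then use Lipschitzness of the losses there. Since $q$ is a bijection and $\ell_t = h_t\circ q$, with $\hat z_t = q(\hat x_t)$ and $z_t = q(x_t)$ we have, pathwise,
\begin{equation*}
h_t(\hat z_t) - h_t(z_t) = \ell_t(\hat x_t) - \ell_t(x_t) = \ell_t(x_t + \delta\zeta_t) - \ell_t(x_t)\,.
\end{equation*}
So no hidden-space geometry is needed, and the bound reduces to a first-order estimate in $\X$.

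First I will check that both points lie in $\X$, so that Assumption~\ref{as:boundedness-grad-bregman} applies along the segment joining them. We have $x_t\in\X_\delta=(1-\delta)\X$. Under the normalization $0\in\X$ and $B_2\subseteq\X$, convexity gives $x_t+\delta\zeta_t = (1-\delta)\frac{x_t}{1-\delta} + \delta\zeta_t \in \X$. Since $\X$ is convex, the whole segment $[x_t,\hat x_t]$ is contained in $\X$.

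Next I will apply the mean value inequality. The gradient bound $\|\nabla\ell_t\|_2\le G_F$ on $\X$ makes $\ell_t$ $G_F$-Lipschitz on $\X$, so
\begin{equation*}
|\ell_t(\hat x_t)-\ell_t(x_t)| \le G_F\,\delta\,\|\zeta_t\|_2 = \delta G_F\,.
\end{equation*}
This holds almost surely for each $t$. Summing over $t$, pulling the absolute value inside the expectation (triangle inequality and Jensen), and using linearity of expectation gives
\begin{equation*}
\Big|\mathbb{E}\Big[\sum_{t=1}^T h_t(\hat z_t)-h_t(z_t)\Big]\Big| \le \sum_{t=1}^T \mathbb{E}\,|\ell_t(\hat x_t)-\ell_t(x_t)| \le \delta G_F T\,.
\end{equation*}

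The only point requiring care is feasibility of the perturbed query point, i.e.\ that the segment stays in $\X$ where the gradient bound holds. This is exactly why Algorithm~\ref{alg4} keeps its iterates in the shrunken domain $\X_\delta$, so I expect the step to be routine. If one prefers not to rely on the normalization, one can instead assume $\ell_t$ is $G_F$-Lipschitz on a $\delta$-neighborhood of $\X$, and the same argument goes through.
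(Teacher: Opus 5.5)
Your proof is correct and follows the paper's argument: you rewrite $h_t(\hat z_t)-h_t(z_t)=\ell_t(\hat x_t)-\ell_t(x_t)$ via $\ell_t=h_t\circ q$, then apply $G_F$-Lipschitzness of $\ell_t$ with $\|\hat x_t-x_t\|=\delta\|\zeta_t\|=\delta$ and sum over $t$. Your extra check that $\hat x_t=(1-\delta)\frac{x_t}{1-\delta}+\delta\zeta_t\in\X$ (using $B_2\subseteq\X$, convexity, and $\delta\in(0,1)$) makes explicit why the segment $[x_t,\hat x_t]$ lies where the gradient bound holds, a detail the paper leaves implicit.
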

\end{tcolorbox}

\begin{proof}
The desired bound follows from the following: 
\begin{align}
|R_T^{(4)}| &= \left| \mathbb{E}\left[ \sum_{t=1}^T h_t(\hat{z}_t) - h_t(z_t) \right] \right|\nonumber\\
&= \left| \mathbb{E}\left[ \sum_{t=1}^T \ell_t(\hat{x}_t) - \ell_t(x_t) \right] \right|&&\hfill \text{($\ell_t = h_t \circ q$, $\hat{z}_t = q(\hat{x}_t), z_t = q(x_t)$)}\nonumber\\
&\leq G_F \mathbb{E}\left[ \sum_{t=1}^T \|\hat{x}_t - x_t\| \right]
&&\hfill \text{(by $G_F$-Lipschitzness of $\ell_t$ (Assumption~\ref{as:boundedness-grad-bregman}))}\nonumber\\
&= G_F \mathbb{E}\left[ \sum_{t=1}^T \delta \|\zeta_t\| \right] &&\hfill \text{($\hat{x}_t = x_t + \delta \zeta_t$ in BOGD (Algorithm~\ref{alg4}))}\nonumber\\
&= \delta G_F T&&\hfill \text{($\|\zeta_t\| = 1$)}\,.
\end{align}
\end{proof}

\begin{tcolorbox}[colframe=white!,
top=2pt,left=2pt,right=2pt,bottom=2pt]
\begin{lemma}[Perturbed ghost OMD regret for random linear losses]
\label{lem:perturbed-omd-reg-random-lin-loss}
Let $z_1 \in \Y_{\delta}.$ Suppose the sequence $(z_t)$ is defined as: 
\begin{equation*}
z_{t+1} = r_{t+1} + y_{t+1}\,, \quad y_{t+1} = \argmin_{y \in \Y_{\delta}} \{ \tilde{g}_t^{\top} (y - z_t) + \frac{1}{\eta} D_R(y||z_t)\}\,.
\end{equation*}
Then for every $z_{\delta}^{\star} \in \Y_{\delta},$
\begin{equation*}
\sum_{t=1}^T \langle \tilde{g}_t, z_t - z_{\delta}^{\star} \rangle 
\leq \frac{D_R(z_{\delta}^{\star}||z_1)}{\eta} + \frac{\eta}{2} \sum_{t=1}^T \|\tilde{g}_t\|_2^2 + \frac{G}{\eta} \sum_{t=1}^T \|r_{t+1}\|_2\,.
\end{equation*}
If in addition $\|\tilde{g}_t\| \leq \tilde{G}_F$ and $\|r_{t+1}\| \leq \hat{C}_{\eta}$, then: 
\begin{equation}
\label{eq:R1T-bandit}
R_T^{(1)} = \mathbb{E}\left[ \sum_{t=1}^T \langle \tilde{g}_t, z_t - z_{\delta}^{\star} \rangle\right] \leq \frac{D_1}{\eta} + \frac{\eta \tilde{G}_F^2T}{2} + \frac{\hat{C}_{\eta} G T}{\eta}\,.
\end{equation}
\end{lemma}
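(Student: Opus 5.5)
The plan is to run the classical OMD analysis for the ghost iterate $y_{t+1}$, which is an exact mirror step from $z_t$, and then pay for the mismatch between the ghost step and the actual point $z_{t+1}$ at the next round. Fix $z^\star_\delta\in\Y_\delta$.

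\textbf{Step 1 (one exact mirror step).} The first-order optimality condition for $y_{t+1}$ over the convex set $\Y_\delta$, combined with the three-point identity for Bregman divergences, gives
\begin{equation*}
\langle \tilde g_t, y_{t+1}-z^\star_\delta\rangle \le \frac{1}{\eta}\Big(D_R(z^\star_\delta\|z_t)-D_R(z^\star_\delta\|y_{t+1})-D_R(y_{t+1}\|z_t)\Big).
\end{equation*}
I then write $\langle \tilde g_t, z_t-z^\star_\delta\rangle=\langle \tilde g_t, z_t-y_{t+1}\rangle+\langle \tilde g_t, y_{t+1}-z^\star_\delta\rangle$. To absorb the first term, I combine Young's inequality with $1$-strong convexity of $R$ (Assumption~\ref{as:reg-R-reparam-q-smoothness}), which gives $D_R(y_{t+1}\|z_t)\ge \frac12\|y_{t+1}-z_t\|^2$. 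Hence
\begin{equation*}
\langle \tilde g_t, z_t-y_{t+1}\rangle-\frac1\eta D_R(y_{t+1}\|z_t)\le \frac{\eta}{2}\|\tilde g_t\|^2 .
\end{equation*}

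\textbf{Step 2 (swapping the ghost for the real iterate).} The bound from Step 1 contains $-D_R(z^\star_\delta\|y_{t+1})$, whereas telescoping requires $-D_R(z^\star_\delta\|z_{t+1})$. By Assumption~\ref{as:reg-R-reparam-q-smoothness}(iii), the map $w\mapsto D_R(z^\star_\delta\|w)$ is $G$-Lipschitz, so
\begin{equation*}
-D_R(z^\star_\delta\|y_{t+1})\le -D_R(z^\star_\delta\|z_{t+1})+G\|r_{t+1}\|.
\end{equation*}
This step needs $z_{t+1}=q(x_{t+1})$ and $y_{t+1}$ to lie in the domain where the assumption holds. Both belong to $\Y_\delta\subseteq\Y$, so the assumption applies.

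\textbf{Step 3 (summing).} Summing over $t=1,\dots,T$ telescopes the Bregman terms. Dropping the final term $-D_R(z^\star_\delta\|z_{T+1})\le 0$ leaves
\begin{equation*}
\frac{D_R(z^\star_\delta\|z_1)}{\eta}+\frac\eta2\sum_t\|\tilde g_t\|^2+\frac G\eta\sum_t\|r_{t+1}\|,
\end{equation*}
which is the first claim. The argument is deterministic, so it holds pathwise for every realization of the random $\tilde g_t$. For \eqref{eq:R1T-bandit}, I bound $D_R\le D_1$ (Assumption~\ref{as:boundedness-grad-bregman}), $\|\tilde g_t\|\le\tilde G_F$ and $\|r_{t+1}\|\le\hat C_\eta$, and then take expectations.

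\textbf{Main obstacle.} The only delicate point is that the comparator $z^\star_\delta$ is random-free but must lie in $\Y_\delta$. The reason is that the three-point/optimality inequality is only valid for points of the constraint set $\Y_\delta$ of the ghost OMD step, and $z^\star_\delta\in\Y_\delta$ by definition, so the condition is met. The rest of the argument is the standard computation.
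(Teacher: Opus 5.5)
Your proposal is correct and follows the paper's proof essentially step for step. The steps match in order: first-order optimality of the ghost step combined with the three-point identity, then Young's inequality with $1$-strong convexity of $R$ to absorb $\langle \tilde g_t, z_t-y_{t+1}\rangle$, then the $G$-Lipschitz swap of $D_R(z^\star_\delta\|y_{t+1})$ for $D_R(z^\star_\delta\|z_{t+1})$, and finally telescoping. You also keep the correct factor $\frac12$ from strong convexity, where the paper's write-up has a typo.

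One implicit point you share with the paper: the optimality inequality over $\Y_\delta=q(\X_\delta)$ presupposes that this set is convex. Neither you nor the paper verifies this, since the stated assumptions only give convexity of $\Y$.
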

\end{tcolorbox}

\begin{proof}
First, we prove the following inequality which follows from using the first-order optimality condition (from the ghost OMD update) together with Bregman three-point identity. For any comparator $z_{\delta}^{\star} \in \Y_{\delta}$, we have: 
\begin{equation}
\label{eq:omd-ineq}
\langle \tilde{g}_t, z_t - z_{\delta}^{\star}\rangle  
\leq \frac{D_R(z_{\delta}^{\star}||z_t) - D_R(z_{\delta}^{\star}||y_{t+1})}{\eta} + \frac{\eta}{2}\|g_t\|_2^2\,.
\end{equation}

We now provide the proof of \eqref{eq:omd-ineq}. By definition of $y_{t+1}$, using the first-order optimality condition gives for every $z_{\delta}^{\star} \in \Y_{\delta}$, 
\begin{equation*}
\left\langle \tilde{g}_t + \frac{1}{\eta} (\nabla R(y_{t+1}) - \nabla R(z_t)), z_{\delta}^{\star} - y_{t+1} \right\rangle \geq 0\,.
\end{equation*}
Rearranging this inequality, we obtain: 
\begin{equation}
\label{eq:proof-omd1}
\langle \tilde{g}_t, y_{t+1} - z_{\delta}^{\star} \rangle 
\leq \frac{1}{\eta} \langle \nabla R(y_{t+1}) - \nabla R(z_t), z_{\delta}^{\star} - y_{t+1} \rangle\,.
\end{equation}
Using the three-point identity, we have: 
\begin{equation}
\label{eq:proof-omd2}
\langle \nabla R(y_{t+1}) - \nabla R(z_t), z_{\delta}^{\star} - y_{t+1} \rangle = D_R(z_{\delta}^{\star}||z_t) - D_R(z_{\delta}^{\star}||y_{t+1}) - D_R(y_{t+1}||z_t)\,.
\end{equation}
Combining \eqref{eq:proof-omd1} and \eqref{eq:proof-omd2} yields: 
\begin{equation*}
\langle \tilde{g}_t, y_{t+1} - z_{\delta}^{\star} \rangle 
\leq \frac{1}{\eta}\left[ D_R(z_{\delta}^{\star}||z_t) - D_R(z_{\delta}^{\star}||y_{t+1}) - D_R(y_{t+1}||z_t) \right]\,.
\end{equation*}
Using the above inequality, we obtain: 
\begin{align*}
\langle \tilde{g}_t, z_t - z_{\delta}^{\star} \rangle 
&= \langle \tilde{g}_t, z_t - y_{t+1} \rangle 
+ \langle \tilde{g}_t, y_{t+1} - z_{\delta}^{\star} \rangle \nonumber\\ 
&\leq \langle \tilde{g}_t, z_t - y_{t+1} \rangle 
+ \frac{1}{\eta}\left[ D_R(z_{\delta}^{\star}||z_t) - D_R(z_{\delta}^{\star}||y_{t+1}) - D_R(y_{t+1}||z_t) \right] \nonumber\\
&= \langle \tilde{g}_t, z_t - y_{t+1} \rangle - \frac{1}{\eta}D_R(y_{t+1}||z_t) 
+ \frac{1}{\eta}\left[ D_R(z_{\delta}^{\star}||z_t) - D_R(z_{\delta}^{\star}||y_{t+1}) \right]\,.
\end{align*}
Using $1$-strong convexity of $R$ and Cauchy-Schwarz inequality yields: 
\begin{equation*}
\langle \tilde{g}_t, z_t - y_{t+1} \rangle - \frac{1}{\eta}D_R(y_{t+1}||z_t)  \leq \|\tilde{g}_t\|_2 \cdot \|z_t - y_{t+1}\|_2 - \frac{1}{\eta} \|z_t - y_{t+1}\|_2^2\,. 
\end{equation*}
Note now that for any $a \geq 0, r \geq 0, ar - \frac{1}{2\eta} r^2 \leq \frac{\eta}{2}a^2\,.$ Hence with $a = \|\tilde{g}_t\|_2$ and $r = \|z_t - y_{t+1}\|$, we obtain: 
\begin{equation*}
\langle \tilde{g}_t, z_t - y_{t+1} \rangle - \frac{1}{\eta}D_R(y_{t+1}||z_t)  \leq \frac{\eta}{2} \|\tilde{g}_t\|_2^2\,.
\end{equation*}
We have proved that: 
\begin{equation*}
\langle \tilde{g}_t, z_t - z_{\delta}^{\star} \rangle 
\leq \frac{D_R(z_{\delta}^{\star}||z_t) - D_R(z_{\delta}^{\star}||y_{t+1})}{\eta} 
= \frac{\eta}{2}\|\tilde{g}_t\|_2^2\,.
\end{equation*}
Using the previous inequality, we have: 
\begin{equation*}
\langle \tilde{g}_t, z_t - z_{\delta}^{\star} \rangle 
\leq \frac{D_R(z_{\delta}^{\star}||z_{t}) - D_R(z_{\delta}^{\star}||z_{t+1})}{\eta} + \frac{D_R(z_{\delta}^{\star}||z_{t+1}) - D_R(z_{\delta}^{\star}||y_{t+1})}{\eta} + \frac{\eta}{2}\|\tilde{g}_2\|_2^2\,.
\end{equation*}
By $G$-Lipschitzness of $z \mapsto D_R(z_{\delta}^{\star}||z)$, we have: 
\begin{equation*}
D_R(z_{\delta}^{\star}||z_{t+1}) - D_R(z_{\delta}^{\star}||y_{t+1}) 
\leq G \|z_{t+1} - y_{t+1}\|\,,
\end{equation*}
which implies that: 
\begin{equation*}
\langle \tilde{g}_t, z_t - z_{\delta}^{\star} \rangle 
\leq \frac{D_R(z_{\delta}^{\star}||z_{t}) - D_R(z_{\delta}^{\star}||z_{t+1})}{\eta} + \frac{G}{\eta} \|z_{t+1} - y_{t+1}\| + \frac{\eta}{2}\|\tilde{g}_2\|_2^2\,.
\end{equation*}
Summing up the above inequality, the first term telescopes and we obtain:  
\begin{equation*}
\sum_{t=1}^T \langle \tilde{g}_t, z_t - z_{\delta}^{\star} \rangle 
\leq \frac{D_R(z_{\delta}^{\star}||z_1)}{\eta} + \frac{\eta}{2} \sum_{t=1}^T\|\tilde{g}_t\|_2^2 + \frac{G}{\eta} \sum_{t=1}^T \|z_{t+1} - y_{t+1}\|_2\,,
\end{equation*}
which concludes the proof upon recalling the notation $r_{t+1} = z_{t+1} - y_{t+1}\,.$
As for the second bound, it follows immediately from using $\|\tilde{g}_t\| \leq \tilde{G}_F$ and $\|r_{t+1}\| \leq C_{\text{app}}$\,.
\end{proof}

In the next lemma, we show how to control the error sequence $(r_t)$ in Lemma~\ref{lem:perturbed-omd-reg-random-lin-loss} by providing a precise constant $\hat{C}_{\eta}$ which quantifies the approximation error in the bandit setting, as a corollary of Proposition~\ref{prop:dist-minimizers-eta2}. 

\begin{tcolorbox}[colframe=white!, top=2pt,left=2pt,right=2pt,bottom=2pt]
\begin{lemma}[Gradient estimators norm bounds]
\label{lem:grad-estimator-bounds}
Suppose that $\ell_t$ is uniformly bounded by $M$, i.e. $|\ell_t(x)| \leq M, \forall x \in \X$, and the inverse Jacobian of $q$ is bounded by $G$ at any point $x \in \X$ (as in Assumption~\ref{as:reg-R-reparam-q-smoothness}). Then for any $t \geq 1$, we have: 
\begin{align}
\|g_t\| &\leq \hat{G}_F := \frac{dM}{\delta}\,, \quad \|\tilde{g}_t\| \leq \tilde{G}_F := \frac{dM G}{\delta}\,.
\end{align}
\end{lemma}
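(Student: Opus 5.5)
The plan is to bound $\|g_t\|$ directly from the definition of the one-point estimator in Algorithm~\ref{alg4}, and then transfer the bound to the hidden-space estimator $\tilde g_t$ through the inverse Jacobian.

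For the first bound, recall that $g_t = \frac{d}{\delta}\,\ell_t(\hat x_t)\,\zeta_t$. The query point $\hat x_t = x_t + \delta\zeta_t$ lies in $\X$ because $x_t\in\X_\delta=(1-\delta)\X$, under the standard normalization $0\in\X$ and $B_2\subseteq\X$ recalled in Section~\ref{sec:bandit-feedback}. Indeed, by convexity,
\[
x_t+\delta\zeta_t = (1-\delta)\frac{x_t}{1-\delta} + \delta\zeta_t
\]
is a convex combination of a point of $\X$ and a point of $B_2\subseteq\X$. Hence the boundedness assumption applies at $\hat x_t$ and gives $|\ell_t(\hat x_t)|\le M$. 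Since $\zeta_t\sim U_{sp}$ has $\|\zeta_t\|_2=1$, we get
\[
\|g_t\| = \frac{d}{\delta}\,|\ell_t(\hat x_t)|\,\|\zeta_t\| \le \frac{dM}{\delta} = \hat G_F .
\]
This bound holds pathwise, which is what Proposition~\ref{prop:dist-minimizers-eta2} requires.

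For the second bound, use $\tilde g_t = J_q(x_t)^{-\top} g_t$ and submultiplicativity of the operator norm:
\[
\|\tilde g_t\| \le \|J_q(x_t)^{-\top}\|_{\mathrm{op}}\,\|g_t\|.
\]
Since $J_q(x_t)^{-1} = J_{q^{-1}}(q(x_t))$, Assumption~\ref{as:reg-R-reparam-q-smoothness}(i) bounds the first derivative of $q^{-1}$ by $G$, so $\|J_q(x_t)^{-1}\|_{\mathrm{op}}\le G$. Transposition preserves the operator norm, so the same bound holds for $J_q(x_t)^{-\top}$. Therefore $\|\tilde g_t\|\le G\hat G_F = dMG/\delta = \tilde G_F$.

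The only step needing care is feasibility of $\hat x_t$, so that the bound $|\ell_t|\le M$ on $\X$ can be invoked. This reduces to the convex-combination argument above using the standard normalization. The rest is immediate.
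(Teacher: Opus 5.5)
Your proposal is correct and matches the paper's proof: you bound $\|g_t\|$ using $\|\zeta_t\|=1$ and $|\ell_t|\le M$, then transfer the bound to $\tilde g_t$ through $\|J_q(x_t)^{-\top}\|\le G$. Your convex-combination check that $\hat x_t\in\X$, which makes $|\ell_t(\hat x_t)|\le M$ applicable, is a valid detail the paper leaves implicit.
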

\end{tcolorbox}

\begin{proof}
Recall from BOGD (Algorithm~\ref{alg4}) that $g_t = \frac{d}{\delta} \ell_t(\hat{x}_t) \zeta_t$. Then, since $\|\zeta_t\| = 1$, we have: 
\begin{equation*}
\|g_t\| = \frac{d}{\delta} |\ell_t(\hat{x}_t)| \leq \frac{dM}{\delta} = \hat{G}_F\,.
\end{equation*}
Then, since $\tilde{g}_t = J_q(x_t)^{-\top} g_t$, we have: 
\begin{equation*}
\|\tilde{g}_t\| \leq G \|g_t\| \leq \frac{dMG}{\delta} = \tilde{G}_F\,.
\end{equation*}
\end{proof}

\begin{tcolorbox}[colframe=white!, top=2pt,left=2pt,right=2pt,bottom=2pt]
\begin{corollary}[Proposition~\ref{prop:dist-minimizers-eta2} in the bandit setting]
\label{cor:approx-lemma-bandit}
In Proposition~\ref{prop:dist-minimizers-eta2}, let $g_t$ be the gradient estimator used in BOGD and recall that $\tilde{g}_t = J_q(x_t)^{-\top} g_t$ in \eqref{eq:omd}-\eqref{eq:ogd}, then the approximation inequality \eqref{eq:approx-ineq} holds with $\hat{G}_F = dM/\delta$ as defined in Lemma~\ref{lem:grad-estimator-bounds}, i.e., for any $t \geq 1$, if $y_t = q(x_t)$,  
\begin{equation*}
\|y_{t+1}-q(x_{t+1})\|_2 \leq \hat{C}_{\eta} :=  G^5 (5\hat{G}_F^2 + \hat{G}_F^3 \eta) \eta^2 \,.
\end{equation*}
\end{corollary}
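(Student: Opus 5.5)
The corollary is a direct instantiation of Proposition~\ref{prop:dist-minimizers-eta2}, which holds pathwise for an arbitrary vector $g_t$ as long as $\|g_t\|\le \hat G_F$. The plan is therefore to check, on every sample path, that the BOGD step and its ghost OMD counterpart fit the template \eqref{eq:omd}--\eqref{eq:ogd}, and then to plug in the norm bound from Lemma~\ref{lem:grad-estimator-bounds}.

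The first step is to rewrite the BOGD update $x_{t+1}=\Pi_{\X_\delta}(x_t-\eta g_t)$ as the proximal form
\[
x_{t+1}=\argmin_{x\in\X_\delta}\Big\{g_t^\top(x-x_t)+\tfrac1{2\eta}\|x-x_t\|_2^2\Big\}.
\]
This is exactly \eqref{eq:ogd}, with $\X$ replaced by the convex compact set $\X_\delta=(1-\delta)\X$. Correspondingly, the ghost OMD step is taken over $\Y_\delta=q(\X_\delta)$ with $\tilde g_t=J_q(x_t)^{-\top}g_t$ and starting point $y_t=q(x_t)$.

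The second step is to confirm that every ingredient of the proof of Proposition~\ref{prop:dist-minimizers-eta2} survives this change of domain.
\begin{itemize}
\item The localization of both iterates in $\Y_t=\Y_\delta\cap\{\|y-y_t\|\le 2\eta\tilde G_F\}$ uses only $1$-strong convexity of $R$, $G$-Lipschitzness of $q$, and nonexpansiveness of the projection onto $\X_\delta$. The proof of estimate (i) in Lemma~\ref{lem:norm-grad-eps-OGD} is already written with $\Pi_{\X_\delta}$.
\item The Taylor-remainder bounds of Lemmas~\ref{lem:omd-rewritten}--\ref{lem:ogd-rewritten} and the gradient bounds of Lemmas~\ref{lem:norm-grad-eps-OMD}--\ref{lem:norm-grad-eps-OGD} depend only on derivative bounds of $R$ and $q^{-1}$ on $\Y\supseteq\Y_\delta$, together with $\|g_t\|\le\hat G_F$.
\item Lemma~\ref{lem:dist-minimizers-eta-times-norm} needs only convexity of the constraint set, through normal-cone monotonicity (Lemma~\ref{lem:normal-cone-monotonicity}).
\end{itemize}

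The one point I would check carefully, and the expected obstacle, is whether the change of variables $y=q(x)$ needs convexity of $\Y_\delta=q(\X_\delta)$. Convexity of $\Y_\delta$ is needed for the OMD problem and for the normal-cone argument. It is not automatic from convexity of $\X_\delta$, because $q$ is nonlinear. I would handle it by taking $\Y_\delta$ (equivalently, the localized set $\Y_t$) to be convex as part of the setup, as the paper implicitly does when it runs ghost OMD on $\Y_\delta$. Failing that, I would note that the argument only uses convexity of the local set $\Y_t$, and assume that. The randomness of $\zeta_t$ causes no difficulty: the inequality is deterministic once $g_t$ is fixed, so it holds almost surely.

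Finally, Lemma~\ref{lem:grad-estimator-bounds} gives $\|g_t\|\le dM/\delta=\hat G_F$ surely, because $|\ell_t(\hat x_t)|\le M$ and $\|\zeta_t\|=1$. Substituting this into \eqref{eq:approx-ineq} yields $\|y_{t+1}-q(x_{t+1})\|\le G^5(5\hat G_F^2+\hat G_F^3\eta)\eta^2=\hat C_\eta$, which is the claimed bound.
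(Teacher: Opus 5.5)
Your proposal is correct and takes the same route as the paper. The corollary follows by invoking Proposition~\ref{prop:dist-minimizers-eta2} pathwise, with $\X_\delta,\Y_\delta$ in place of $\X,\Y$, and substituting the sure bound $\|g_t\|\le dM/\delta$ from Lemma~\ref{lem:grad-estimator-bounds}. Your observation that convexity of $\Y_\delta=q(\X_\delta)$ is not automatic is a fair one: the paper uses it implicitly, both for the ghost OMD step and for the normal-cone argument, without stating it.
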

\end{tcolorbox}

\noindent\textbf{End of Proof of Theorem~\ref{thm:bandit}.} We are ready to conclude the proof of the theorem using the results established so far. First, it follows from Lemma~\ref{lem:expected-reg-decomp} that: 
\begin{equation}
\label{eq:final-reg-decomp}
\mathbb{E}[R_T] \leq R_T^{(1)} + R_T^{(2)} + R_T^{(3)} + R_T^{(4)}\,.
\end{equation}
We have proved a bound for each one of the error terms in the above bound: 
\begin{enumerate}[label=(\roman*)]
    \item By Lemma~\ref{lem:domain-shrinkage-error}, we have $R_T^{(2)} \leq \delta G_FDT\,.$
    \item By Lemma~\ref{lem:grad-estimation-bias}, we get: $|R_T^{(3)}| \leq \delta G^2 H DT\,.$ 
    \item By Lemma~\ref{lem:smoothing-error}, we obtain: $| R_T^{(4)}| \leq \delta G_FT\,.$
    \item Lemma~\ref{lem:perturbed-omd-reg-random-lin-loss}-\eqref{eq:R1T-bandit} together with Corollary~\ref{cor:approx-lemma-bandit} yield: 
    \begin{equation*}
    R_T^{(1)} \leq \frac{D_1}{\eta} + \frac{\eta \tilde{G}_F^2T}{2} + \frac{\hat{C}_{\eta} G T}{\eta} \leq \frac{D_1}{\eta} + (6 \hat{G}_F^2 + \hat{G}_F^3 \eta) G^6 \eta T\,,
    \end{equation*}
    where the last inequality follows from using $\hat{C}_{\eta} =  G^5 (5\hat{G}_F^2 + \hat{G}_F^3 \eta) \eta^2$ (Corollary~\ref{cor:approx-lemma-bandit}) and upper bounding the resulting constant under the assumptions $G > 1$ (which is without loss of generality), similarly to the deterministic setting. Note here that $\hat{G}_F = \frac{dM}{\delta}$ as shown in Lemma~\ref{lem:grad-estimator-bounds}.   
\end{enumerate}
Combining all the above bounds in \eqref{eq:final-reg-decomp}, we obtain: 
\begin{equation}
\label{eq:regret-upper-bound-proof}
\mathbb{E}[R_T] \leq K_1 \delta T  + \frac{D_1}{\eta} + K_2 \frac{\eta T}{\delta^2} + K_3 \frac{\eta^2 T}{\delta^3}\,, 
\end{equation}
where $K_1 := G_FD + G^2 HD + G_F, K_2 := 6 d^2 G^6 M^2, K_3 = d^3 G^6 M^3\,.$

We now derive a choice of $(\eta, \delta)$ which gives an optimal dependence on the time horizon $T$. A lower bound on the upper regret bound (in $\eta, \delta$) is given by dropping the last positive error term $\frac{K_3 \eta^2 T}{\delta^3}$ (which as we will see will contribute to a faster regret rate and will hence not be the leading term). Define then for any $(\eta, \delta)$, 
\begin{equation*}
f(\eta, \delta) := K_1 \delta T  + \frac{D_1}{\eta} + K_2 \frac{\eta T}{\delta^2}\,, 
\end{equation*}
which consists of the three first terms in the upper bound \eqref{eq:regret-upper-bound-proof}. We now optimize this function w.r.t $(\eta, \delta)$. By the first order optimality conditions, the optimal pair $(\eta_{\star}, \delta_{\star})$ minimizing $f(\eta, \delta)$ satisfies: 
\begin{equation*}
\frac{\partial f}{\partial \eta} (\eta_{\star}, \delta_{\star}) = - \frac{D_1}{\eta^2} + \frac{K_2 T}{\delta_{\star}^2} = 0\,,\quad
\frac{\partial f}{\partial \delta} (\eta_{\star}, \delta_{\star}) 
= K_1 T - \frac{2 K_2 \eta_{\star} T}{\delta_{\star}^3}\,.
\end{equation*}
Rewriting these conditions yields: 
\begin{equation*}
\eta_{\star} = \sqrt{\frac{D_1 \delta_{\star}^2}{K_2 T}}\,, \quad \eta_{\star} = \frac{K_1 \delta_{\star}^3}{2 K_2}\,.
\end{equation*}
Solving for $\delta_{\star}$ as a function of $T$ (eliminating $\eta$) and then plugging back $\delta_{\star}$ in the first expression of $\eta_{\star}$ yields: 
\begin{equation*}
\delta_{\star} = \left(\frac{4 K_2 D_1}{K_1^2 T}\right)^{1/4}\,, \quad \eta_{\star} = \left(\frac{4 D_1^3}{K_1^2 K_2 T^3}\right)^{1/4}\,.
\end{equation*}
Now, plugging back these values of $\eta_{\star}$ and $\delta_{\star}$ in \eqref{eq:regret-upper-bound-proof}, we obtain: 
\begin{equation*}
\mathbb{E}[R_T] \leq (K_1^2 K_2D_1)^{1/4} T^{3/4} + \left(\frac{K_1^2 D_1^3 K_3^4}{4 K_2^5}\right) T^{1/4}\,.
\end{equation*}
Recalling that $K_1 = G_FD + G^2 HD + G_F, K_2 = 6 d^2 G^6 M^2, K_3 = d^3 G^6 M^3$, we bound each one of the terms in the bound above to obtain:
\begin{align}
\mathbb{E}[R_T] &\leq 
\sqrt{3 dM G^3 (G_FD + G^2 HD + G_F) D_1^{1/2}} T^{3/4}
+ \sqrt{dM G^3 (G_FD + G^2 HD + G_F) D_1^{3/2}} T^{1/4}\nonumber\\
&\leq \sqrt{dM G^3 (G_FD + G^2 HD + G_F) (3 D_1^{1/2} + D_1^{3/2})} T^{3/4} \,,
\end{align}
which concludes the proof of Theorem~\ref{thm:bandit}.

\end{document}